\documentclass[journal,twoside,web]{ieeecolor}
\usepackage{generic}
\usepackage{cite}
\usepackage{amsmath,amssymb,amsfonts}
\usepackage{algorithmic}
\usepackage{graphicx}
\usepackage{textcomp}

\usepackage{stfloats}

\usepackage{url}
\usepackage{algorithm}

\newtheorem{notation}{Notation}
\newtheorem{remark}{Remark}
\newtheorem{assumption}{Assumption}
\newtheorem{corollary}{Corollary}
\newtheorem{theorem}{Theorem}
\newtheorem{definition}{Definition}
\newtheorem{lemma}{Lemma}
\newtheorem{example}{Example}

\def\BibTeX{{\rm B\kern-.05em{\sc i\kern-.025em b}\kern-.08em
    T\kern-.1667em\lower.7ex\hbox{E}\kern-.125emX}}
\markboth{\journalname, VOL. XX, NO. XX, XXXX 2022}
{Author \MakeLowercase{\textit{et al.}}: Preparation of Papers for IEEE TRANSACTIONS and JOURNALS (February 2017)}

\begin{document}
\title{Efficient-Adam: Communication-Efficient Distributed Adam}
\author{Congliang Chen, Li Shen, Wei Liu, Zhi-Quan Luo, \IEEEmembership{Fellow, IEEE}
\thanks{This work is supported in part by the Major Science and Technology Innovation 2030 “Brain Science and Brain-like Research” key project (No. 2021ZD0201405), the National Natural Science Foundation of China (No. 61731018) and the Guangdong Provincial Key Laboratory of Big Data Computing.}
\thanks{Congliang Chen is with the Chinese University of Hong Kong (Shenzhen), Shenzhen, Guangdong, China (e-mail: congliangchen@link.cuhk.edu.cn). }
\thanks{Li Shen is with JD Explore Academy, Beijing, China 
(e-mail: mathshenli@gmail.com).}
\thanks{Wei Liu is with Tencent, Shenzhen, Guangdong, China (e-mail: wl2223@columbia.edu).}
\thanks{Zhi-Quan Luo is with the Chinese University of  Hong Kong (Shenzhen), Shenzhen, Guangdong, China, and Shenzhen Research Institute of Big Data, Shenzhen, Guangdong, China (e-mail: luozq@cuhk.edu.cn).}}

\maketitle

\begin{abstract}\label{abstract-sec}
Distributed adaptive stochastic gradient methods have been widely used for large-scale nonconvex optimization, such as training deep learning models. However, their communication complexity on finding $\varepsilon$-stationary points has rarely been analyzed in the nonconvex setting. In this work, we present a novel communication-efficient distributed Adam in the parameter-server model for stochastic nonconvex optimization, dubbed {\em Efficient-Adam}. Specifically, we incorporate a two-way quantization scheme into Efficient-Adam to reduce the communication cost between the workers and server. Simultaneously, we adopt a two-way error feedback strategy to reduce the biases caused by the two-way quantization on both the server and workers, respectively. In addition, we establish the iteration complexity for the proposed Efficient-Adam with a class of quantization operators, and further characterize its communication complexity between the server and workers when an $\varepsilon$-stationary point is achieved. 
Finally, we apply Efficient-Adam to solve a toy stochastic convex optimization problem and train deep learning models on real-world vision and language tasks. Extensive experiments together with a theoretical guarantee justify the merits of Efficient Adam.
\end{abstract}

\begin{IEEEkeywords}
Communication Reduction, Distributed Optimization,  Stochastic Adaptive Gradient Descent
\end{IEEEkeywords}

\section{Introduction}\label{sec:introduction}

\IEEEPARstart{L}{et}   $\mathbb{X}$ be a finite-dimensional linear vector space.  We focus on a stochastic optimization problem:
\begin{equation}\label{problem}
\min_{x\in \mathbb{X}}\ F(x) = \mathbb{E}_{\xi\sim\mathbb{P}} [f(x,\xi)], 
\end{equation}
where $F\!:\mathbb{X}\!\to \mathbb{R}$ is a proper, lower semi-continuous smooth function that could be nonconvex, and $\xi$ is a random variable with an unknown distribution $\mathbb{P}$. Stochastic optimizations commonly appear in the fields of statistical machine learning \cite{bishop2006pattern}, deep learning \cite{goodfellow2016deep}, and reinforcement learning  \cite{sutton2018reinforcement}, which include sparse learning  \cite{hastie2015statistical}, representation learning  \cite{bengio2013representation}, classification \cite{deng2009imagenet}, regression, etc. 

Due to the existence of expectation over an unknown distribution, the population risk $ \mathbb{E}_{\xi\sim\mathbb{P}} [f(x,\xi)]$ with a complicated function $F$ is approximated via an empirical risk function $\frac{1}{n}\sum_{i=1}^{n} [f_{i}(x,\xi_{i})]$ via a large number of samples $\{\xi_{i}\}$. For example, we may use the ImangeNet dataset for the image classification task \cite{deng2009imagenet}, the COCO dataset for the object detection task \cite{lin2014microsoft}, and the GLUE dataset for the natural language understanding task \cite{wang2018glue}, respectively. As all of the above datasets are large, we have to optimize problem \eqref{problem} via a large number of samples. Meanwhile, to learn appropriate distributions for different tasks, it is hard to design the exact mathematical formulation of $f$. Thus, an effective way is to set a complicated deep neural network as a surrogate function with a large number of parameters/FLOPs, making problem \eqref{problem} an ultra-large-scale nonconvex stochastic optimization \cite{shen2023efficient}. 

However, it could be impossible to train a deep neural network with a large number of parameters over a large-scale dataset within a single machine. Fortunately, we have some promising approaches to tackle this problem. One of them is to extend the stochastic gradient descent (SGD) method to \change{a} distributed version \cite{li2014efficient}. Then using the distributed SGD method, we train a deep learning model with multiple machines in a distributed mode \cite{goyal2017accurate,you2019large}. However, for the vanilla SGD method, how to tune a suitable learning rate for different tasks remains challenging.  This dilemma is more serious for distributed SGD methods since there are multiple learning rates needed to be tuned for multiple machines. 
Moreover, the communication overhead is another issue in distributed methods. How to reduce the communication cost among multiple machines is also challenging.
Recently, Hou et al.\cite{hou2018analysis} proposed a distributed Adam  \cite{kingma2014adam} with weights and gradients being quantized to reduce the communication cost between the workers and the server. However, their theoretical analysis \change{is} merely restricted to the convex setting, and they didn't provide the bit-communication complexity either, which hampers the potential applications. In addition, both the weights and gradients quantization techniques will introduce additional errors, which may degrade the performance of the vanilla Adam optimizer. 
On the other hand, distributed Adam has already been built in several deep learning platforms, such as PyTorch \cite{paszke2019pytorch}, TensorFlow \cite{abadi2016tensorflow}, and MXNet \cite{chen2015mxnet}, and it has since been broadly used for training deep learning models. However, its communication complexity under the distributed mode has rarely been analyzed in either convex or nonconvex settings.

\begin{figure*}[!htbp]
   \centering
   \includegraphics[height=5.5cm, width=0.8\textwidth]{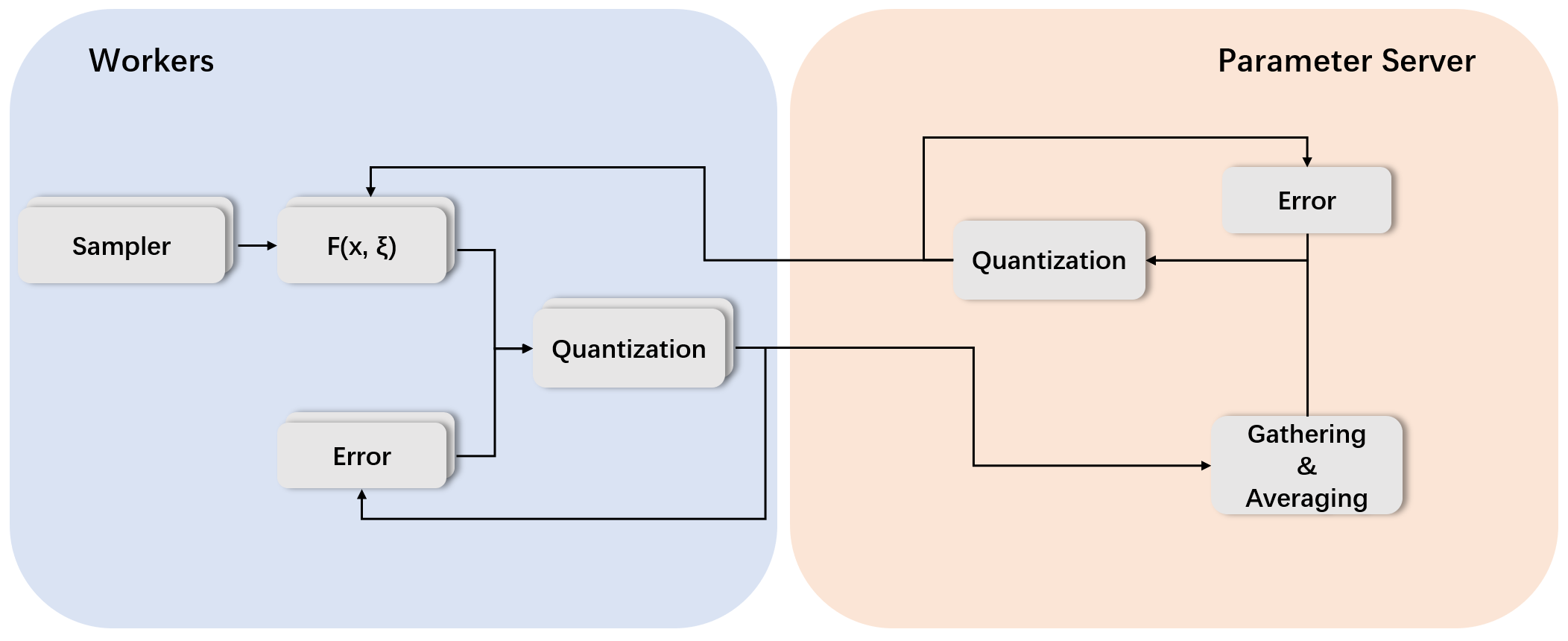}
%   \vspace{-0.3cm}
    \caption{The workflow of Efficient-Adam in the Parameter-Server model. For each worker, parameters are updated via Adam. For the server, information is gathered and averaged to update parameters. There exists a two-way quantization scheme to reduce the communication cost between each worker and the server. Moreover, a two-way error feedback strategy compensates for quantization errors for both the workers and the server.}
    \label{parameter-server}
%    \vspace{-0.5cm}
\end{figure*}

In this work, we propose a communication-efficient distributed adaptive stochastic gradient descent method by incorporating a two-way quantization of updates and a two-way error feedback strategy, dubbed {\rm Efficient-Adam}, to solve the nonconvex stochastic problem \eqref{problem}. As it is illustrated in Figure \ref{parameter-server}, the architecture of Efficient-Adam belongs to a Parameter-Server \cite{smola2010architecture,li2014scaling} distributed system. For each worker, in each iteration, we first sample a stochastic gradient of problem \eqref{problem}. Then, we calculate the updates with the Adam optimizer. Next, we quantize the updates based on a specifically designed quantization mapping and an error term and send the quantized updates to the server. \change{After that, we update the local error terms using the updated quantization mapping and the new error terms.}
%After that, we update local error terms with the updates, quantization mapping, and error terms. 
At last, we receive the averaged updates from the server and update local iterates. On the other hand, in the server, we first gather and average the compressed updates from each worker. Then, we quantize the average updates and broadcast them to each worker. Next, we update iterates and error feedback terms. From the workflow of Efficient-Adam in Figure \ref{parameter-server}, it can be seen that communicated bi-directional information is all quantized in advance. With the proper quantization mapping, communication costs can be reduced largely. In addition, error terms in the workers and server can compensate for errors that are introduced by the two-way quantization steps between all the workers and the server, which \change{helps} accelerate the convergence of Efficient-Adam for problem \eqref{problem}.  

Moreover, we explore the iteration complexity for Efficient-Adam with a class of quantization operators when an $\varepsilon$-stationary point is achieved. For a carefully designed quantization mapping, we further characterize its overall communication complexity in terms of bits between the server and workers. On the other hand, when the quantization mapping is generalized to a compressor as \cite{stich2019error,stich2018sparsified,zheng2019communication}, Efficient-Adam can enjoy the same convergence rate as \change{the} full-precision Adam in Zou et al. \cite{zou2019sufficient} and Chen et al. \cite{chen2022towards}.
Experimentally, we apply Efficient-Adam to train deep learning models on computer vision and natural language processing tasks to demonstrate its efficacy. To the end, we summarize our contribution in four-fold:
%\vspace{-0.1cm}
\begin{itemize}
    \item We propose a communication-efficient distributed Adam to solve stochastic  problem \eqref{problem}. We dub it {\em Efficient-Adam} which utilizes a two-way quantization scheme to reduce the communication cost and a two-way error feedback strategy to compensate for quantization errors. 
    \item We characterize the iteration complexity of Efficient-Adam in the non-convex setting. Under proper assumptions, we further characterize its overall communication complexity in terms of bits between the server and workers via a specifically designed quantization mapping. 
    \item We show the convergence rate of Efficient-Adam can further be improved to the same order of vanilla Adam, i.e., $\mathcal{O}(1/\sqrt{T})$, once we generalize the quantization mapping in Efficient-Adam to a compressor mapping as \cite{stich2019error,stich2018sparsified}. 
    \item We conduct experiments on computer vision and natural language processing tasks to demonstrate the efficacy of the proposed Efficient-Adam, as well as the related two-way quantization and  two-way  error feedback techniques.
\end{itemize}

%\vspace{-0.2cm}
\section{Related Work}\label{related-work-sec}
\begin{table*}[htb!]
    \centering
    \caption{Comparison among different methods. Note that Efficient-Adam can support both the compression and quantization mappings.}
    %Particularly, \citet{zheng2019communication}  merely supported compressor mapping while Efficient-Adam can support both the compressor and quantization mappings.}
%    \vspace{-0.2cm}
        \renewcommand{\arraystretch}{1.2}
    \begin{tabular}{|c|c|c|c|c|c|c|}
    \hline
     \!Method\! & \!non-convex\!&\!\! adaptive lr\! &\!\! compression \!&\!\! quantization \!&\!\! error feedback\! & \!momentum\! \\
    \hline
     Distributed SGD& $\surd$ & $\times$ & $\times$ & $\times$ & $\times$& $\times$\\
     \hline
     Alistarh et al. \cite{alistarh2017qsgd} & $\surd$ &$\times$ & $\times$ &$\surd$& $\times$& $\times$\\
     \hline
     \!\! Karimireddy et al. \cite{karimireddy2019error}\! &$\surd$& $\times$ &one side& $\times$ & $\surd$&$\times$\\
     \hline
     Hou et al. \cite{hou2018analysis}& $\times$&$\surd$& $\times$&$\surd$&$\times$&$\times$\\
     \hline
     Zheng et al. \cite{zheng2019communication} &$\surd$& $\times$& $\surd$& $\times$&$\surd$&$\surd$\\
     \hline
     \secchange{Chen et al. \cite{chen2021quantized}} &\secchange{$\surd$}& \secchange{$\surd$}& \secchange{one side + weights}& \secchange{$\times$}&\secchange{$\surd$}&\secchange{$\surd$}\\
     \hline
     Efficient-Adam &$\surd$&$\surd$&$\surd$&$\surd$&$\surd$&$\surd$\\
    \hline
    \end{tabular}
    \label{comparison}
%    \vspace{-0.5cm}
\end{table*}

Optimizing a large-scale stochastic non-convex function has been studied for many years, in which distributed SGD method has been widely explored. For distributed SGD, its convergence rate has been established \change{both in} the Parameter-Server model \cite{agarwal2011distributed} and decentralized model \cite{lian2017can}. To reduce the communication cost, compression on the communication has been added into the distributed stochastic methods, such as QSGD \cite{alistarh2017qsgd} and Sign SGD \cite{bernstein2018signsgd}. However, due to the error introduced by compression, most distributed methods will fail to converge with compressed communication. Several works \cite{jiang2018linear,wangni2018gradient,wen2017terngrad} have tried to use unbiased compressors to remove errors brought by compression and provide related convergence analyses. However, with some biased compressors (e.g. SignSGD), SGD methods empirically achieve good \change{solutions}, which cannot be explained by those works. To deal with biased compression, Karimireddy et al. \cite{karimireddy2019error} first introduced error feedback into SignSGD method and established its convergence. However, their analysis only focuses on the setting in a single machine. In addition, there exist several decentralized SGDs that try to utilize error feedback to reduce compression errors, such as ChocoSGD \cite{koloskova2019decentralized} and DeepSqueeze \cite{tang2019texttt}. 

Further, for the parameter-server model, Zheng et al. \cite{zheng2019communication} introduced a two-way error feedback technique into SGD with momentum and showed its convergence in the nonconvex setting. Moreover, they used a block-wise compressor to meet the compressor assumption. However, even with the block-wise compression, their assumption still does not hold when the gradient goes to $0$. Besides, the learning rate for each worker may be hard to tune when the number of workers is large. On the other hand, Hou et al. \cite{hou2018analysis} adapted the distributed Adam with quantized gradients and weights to reduce the communication overhead and gave the convergence bound for the convex case. However, in their work, they considered an unbiased quantization function and did not consider error feedback terms, which may limit the use of the analysis. Besides, Chen et al. \cite{chen2021quantized} proposed Quantized Adam, but they only consider error feedback in one direction, and in the other direction, weight quantization is used. Wang et al. \cite{wang2022communication} compressed and communicated with gradients, then they performed an AMSGrad algorithm.  \change{The update equation can be encapsulated as $x_{t+1} = x_t - \alpha_t m_t /\sqrt{v_t + \epsilon}$. The theoretical analysis conducted by Chen et al. \cite{chen2021quantized} and Wang et al. \cite{wang2022communication} significantly depends on the constant $\epsilon$, which must be sufficiently large. However, a larger value of $\epsilon$ can lead to it dominating the denominator of the equation, thereby causing the algorithm to regress to a simple stochastic gradient descent mechanism. Further, Doostmohammadian M et al. \cite{doostmohammadian2022distributed, doostmohammadian2022fast} and Magnusson et al. \cite{magnusson2018communication} present algorithms that solve constrained optimization problems when the communication network experiences heterogeneous time-delays.}  Moreover, when considering transmitting bits among devices, the compression assumption used in previous work does not hold (e.g. with finite bits arbitrary small \change{values} can not be represented). Therefore, we will give a practical assumption on the compression of the communication. To distinguish from the previous assumption, we denote the function satisfying the new assumption as a quantizer, and the function satisfying the previous assumption as a compressor.
\secchange{Among the research we've looked at, the work by Chen et al. \cite{chen2021quantized} is the most similar to ours. However, they tried to reduce how much information the server sends to the workers by using shorter representations for the model's weights which further introduces additional quantization bias. Since it's tough to represent these weights accurately with just a few bits, we took a different path in our study. We try to send compressed updates for the weights instead. This way, we're able to lower the communication needed even more. }

Different from these existing works, we propose an adaptive distributed stochastic algorithm with a two-way quantization/compressor and a two-way error feedback strategy, dubbed Efficient Adam. We also establish its iteration complexity and communication complexity in terms of bits under some specialized designed quantization mappings in the non-convex setting.  In the following Table \ref{comparison}, we summarize the differences between our proposed Efficient-Adam and several existing communication-efficient distributed SGD methods in the parameter-server model.

%There exist several works on distributed SGD \cite{???,???} by utilizing the error-feed back \cite{XXXX} ...  

%The most related work \cite{hou2018analysis,zheng2019communication}. 

%Hou et al. \cite{hou2018analysis} XXXX

%Zheng et al \cite{zheng2019communication}...
%\vspace{-0.2cm}
\section{Preliminaries}
\label{preliminary-sec}
For presenting and analyzing the proposed Efficient-Adam in the following sections, we first give several basic definitions and assumptions. First, we denote the stochastic estimation of $F(x) = \mathbb{E}_\xi[f(x,\xi)]$ as $g=\nabla_x f(x,\xi)$ with a given sample $\xi$. In addition, we summarize a few necessary assumptions on function $F$ and stochastic estimate $g$, and define the $\varepsilon$-stationary point of problem \eqref{problem}. 
%\vspace{-0.2cm}
\begin{assumption}\label{assumption-f-grad}
Function $F(x)$ is assumed to be lower bounded, i.e., $F(x) \ge F^*>-\infty$ for some give constant $F^*$.  Gradient $\nabla F$ is assumed to be L-Lipschitz continuous, i.e., $\|\nabla F(x) - \nabla F(y)\|\leq L\|x-y\|$.  Besides, we assume that for given $x$, $g$ is an unbiased estimator of the gradient with bounded second moment, i.e., $\mathbb{E}_\xi(g) = \nabla{F}(x) = \nabla_x \mathbb{E}_\xi(f(x,\xi))$ and $\| g \|\leq G$. 
\end{assumption}
%\vspace{-0.3cm}
\begin{definition}\label{stationary-point-def}
We denote $x^* \in \mathbb{X}$ as the  $\varepsilon$-stationary point of problem \eqref{problem}, if it satisfies $\mathbb{E}\|\nabla F(x^*)\|^2 \leq \varepsilon$.
\end{definition}
%\vspace{-0.2cm}
Assumption \ref{stationary-point-def} is widely used for establishing convergence rates of adaptive SGD methods towards $\varepsilon$-stationary points, such as Adam/RMSProp \cite{zou2019sufficient}, AdaGrad \cite{ward2018adagrad,zou2018weighted,defossez2020convergence}, AMSGrad \cite{reddi2019convergence,chen2018convergence} and AdaSAM \cite{sun2023adasam}. \change{Recently, Shi et al. \cite{shi2021rmsprop} and Zhang et al. \cite{zhang2022adam} have eliminated the bounded gradient assumption in their studies. However, their results are only applicable when $\mathbb{P}$ has finite supports, and these cannot be extended to the general stochastic setting.}
%Such as Zou et al. \cite{zou2019sufficient,zou2018weighted} relax $\| g \|\leq G$ to $\mathbb{E}_{\xi}[\| g \|] \leq G$ in Assumption \ref{assumption-f-grad} to establish convergence rate of Adam and AdaGrad, Reddi et al. \cite{reddi2019convergence} and Chen et al. \cite{chen2018convergence} use Assumption \ref{assumption-f-grad} to establish convergence rate of AMSGrad. Ward et al. \cite{reddi2019convergence} and Defossez et al. \cite{defossez2020convergence} use Assumption \ref{assumption-f-grad} to establish convergence rate of AdaGrad. 

Below, we define a class of quantization mappings which are used to quantize the communicated information between the workers and server in Figure \ref{parameter-server} to reduce the communication. 
%\vspace{-0.2cm}
\begin{definition}\label{quantization-mapping-def}
We call $\mathcal{Q}: \mathbb{X}\to \mathbb{X}$ a quantization mapping, if there exist constants $\delta > 0$ and $\delta'\ge 0$ such that the two inequalities hold: $\|x-\mathcal{Q}(x)\|\leq (1-\delta)\|x\| + \delta'$ and $\|\mathcal{Q}(x)\|\leq (2-\delta)\|x\|$. 
\end{definition}
%\vspace{-0.2cm}
We give a few quantization mappings that satisfy the above definition. %In addition, we give the bit analysis that the quantization function needs in one communication round.
%\vspace{-0.15cm}
\begin{example}\label{quantization-mapping1}
Let $M_1$ be the set of 32-bits floating point number and $M_2 = \{\frac{i}{2^k-1}| i = -2^k+1, -2^k+2, \cdots, 2^k -1\}$, for some positive integer k. Then define $Q_1(x) = \arg\min_{y\in M_1} |x-y|$ which maps $\mathbb{R}$ to $M_1$, and $Q_2(x) = \arg\min_{y \in M_2^d} \|x-y\|_2$ which maps from $\mathbb{R}^d$ to $M_2^d$. The quantization function $\mathcal{Q}$ is defined as $\mathcal{Q}(x) = Q_1(\|x\|_\infty) Q_2(x/Q_1(\|x\|_\infty))$, when $Q_1(\|x\|_\infty)>0$ and $\mathcal{Q}(x) = 0$, when $Q_1(\|x\|_\infty) = 0$.
\end{example}
%\vspace{-0.3cm}
\begin{example}\label{quantization-mapping2}
Let $M = \{2^{k}, 2^{k+1}, \cdots, 2^{K}\}$ for some integer numbers $k$ and $K>k$. Define the quantization function $\mathcal{Q}$ as $\mathcal{Q}(x) = sign(x) \arg\min_{y \in M^d} \||x|-y\|$. 
\end{example}

There exist several works that utilize compressor mappings \cite{stich2018sparsified,stich2019error} to reduce the communication cost for distributed SGD in the Parameter-Server model, in which compressor mapping is formally defined as follows: 
%\vspace{-0.15cm}
\begin{definition}
\label{compressor-def}
We call $\mathcal{Q}:\mathbb{X}\to\mathbb{X}$ as compressor mapping \cite{stich2018sparsified,stich2019error,zheng2019communication}, if there exists a positive value $\delta$ such that the inequality holds: $\|x-\mathcal{Q}(x)\|\leq (1-\delta)\|x\|$. Based on the definition, it directly holds that $\|\mathcal{Q}(x)\|\leq (2-\delta) \|x\|$.
\end{definition}
% \vspace{-0.35cm}
\begin{remark}\label{quantization-compressor}
Most quantization mappings do not belong to the above class of compressors, since, with a finite set as range, quantization mapping cannot achieve the condition of compressors for arbitrarily small \change{parameters} $x$. Therefore, we cannot find any quantization function that satisfies the definition of the compressor in Definition \ref{compressor-def}. Moreover, when $\delta'$ is much smaller than the precision which we want to achieve, we can ignore it and reduce the condition to the compressor condition. Meanwhile, we have an additional condition $\|Q(x)\|\leq (2-\delta) \|x\|$, which can be easily achieved by replacing rounding to flooring. Therefore, we give a more practical condition for analyzing the communication complexity. Moreover, the convergence of Efficient-Adam with quantization can be easily extended to compressors. 
\end{remark}

%\vspace{-0.5cm}
\section{Efficient-Adam}\label{efficient-adam-sec}
%\vspace{-0.1cm}
In this section, we describe the proposed Efficient-Adam, whose workflow has already been displayed in Figure \ref{parameter-server}. To make the presentation clear, we split Efficient-Adam into two parts: parameter server part (Algorithm \ref{alg1}) and $N$-workers part (Algorithm \ref{alg2}). To reduce the communication overhead among the server and workers, we introduce a two-way quantization mapping. We denote the quantization function in the parameter server as $\mathcal{Q}_s(\cdot)$ and the quantization function in the workers as $\mathcal{Q}_w(\cdot)$, respectively. The detailed iterations of Efficient-Adam are described in Algorithm \ref{alg1} and Algorithm \ref{alg2}. 

In the parameter server (see Algorithm \ref{alg1}), the initial value of $x$ will be broadcast to each worker at the initialization phase. Then in each iteration, the server gathers the update from each worker, calculates the average of these updates, and then broadcasts this average after a quantization function.

\begin{algorithm}[htb!]
\caption{\ Efficient-Adam: the server}
\label{alg1}
\begin{algorithmic}[1]
\STATE {\bf Parameters:} Choose quantization mapping $\mathcal{Q}_s(\cdot)$ 
via Definition \ref{quantization-mapping-def}. {Initialize $x_1 \in \mathbb{X}$ and error term $e_1=0$}.
\STATE Broadcasting $x_1$ to all workers;
\FOR{$t =1,2,\cdots,T$}
\STATE Gathering and averaging all updates from workers $\hat{\delta}_t = \frac{1}{N}\sum_{i = 1}^N \delta_t^{(i)}$;
\STATE Broadcasting $\tilde{\delta}_t = \mathcal{Q}_s(\hat{\delta}_t + e_t)$;
\STATE $e_{t+1} = \hat{\delta}_t + e_t - \tilde{\delta}_t$;
\STATE $x_{t+1} = x_{t}-\tilde{\delta}_t$;
\ENDFOR
\end{algorithmic}
{\bf Output:} $x_{T+1}$.
\end{algorithm}

\begin{algorithm}[htb!]
\caption{\ Efficient-Adam: the $i$-th worker}
\label{alg2}
\begin{algorithmic}[1]
\STATE {\bf Parameters:} Choose hyperparameters $\{\alpha_t\},\beta,\{\theta_t\}$, and quantization mapping $\mathcal{Q}_w(\cdot)$ satisfying Definition \ref{quantization-mapping-def}. Set initial values $m^{(i)}_0 = 0$ , $v^{(i)}_0 = \epsilon$, and error term $e^{(i)}_1 = 0$, respectively.
\STATE Receiving $x_1$ from the server;
\FOR{$t =1,2,\cdots,T$}
\STATE Sample a stochastic gradient of $f(x_t)$ as $g^{(i)}_t$;
\STATE $v^{(i)}_t = \theta_t v^{(i)}_{t-1} + (1-\theta_t) \big[g^{(i)}_t\big]^2$;
\STATE $m^{(i)}_t = \beta m^{(i)}_{t-1} + (1-\beta)g^{(i)}_t$;
\STATE Sending $\delta_t^{(i)} = \mathcal{Q}_w\left(\alpha_t{m^{(i)}_t}/{\sqrt{v^{(i)}_t}}+e^{(i)}_t\right)$;
\STATE $e^{(i)}_{t+1} = \alpha_t\frac{m^{(i)}_t}{\sqrt{v^{(i)}_t}}+e^{(i)}_t- \delta^{(i)}_t$;
\STATE Receiving $\tilde{\delta}_t$ from the server;
\STATE $x_{t+1} = x_t - \tilde{\delta}_t$;
\ENDFOR
\end{algorithmic}
\end{algorithm}

In each worker (see Algorithm \ref{alg2}), at the initial phase, the initial value of $x$ will be received. In each update iteration, a worker will sample a stochastic gradient $g_t$ and calculate the update vector $\delta_t$. Then it will send the update vector to the server with a quantization mapping and receive the average update vector from the server. Finally, the worker will update its local parameters. 

For the above two algorithms, we use the error-feedback technique to reduce the influence of errors introduced by quantization shown in line 6 of Algorithm \ref{alg1} and line 8 in Algorithm \ref{alg2}. \change{The error-feedback} technique can be viewed as delaying updating values for $x$, while without error-feedback the algorithm discards \change{a} few updating values on $x$ which losses information during the optimization process. Therefore, the error-feedback technique can help the algorithm by preserving information discarded by the quantizer.

%\vspace{-0.3cm}
\subsection{Complexity Analysis}

 In this subsection, we give iteration complexity analysis and bit communication complexity analysis in order to obtain an $\varepsilon$-stationary solution of problem \eqref{problem} in Definition \ref{stationary-point-def}, i.e., $$\mathbb{E}\|\nabla F(x)\|^2 \leq \varepsilon.$$
 We denote $\delta_s$ and $\delta_s'$ as the constants defined in Definition \ref{quantization-mapping-def} for $\mathcal{Q}_s(\cdot)$, and $\delta_w$, $\delta_w'$ for $\mathcal{Q}_w(\cdot)$. In addition, we further make another assumption on hyperparameters $\theta_t$ and $\alpha_t$.
%\vspace{-0.2cm}
 \begin{assumption}\label{hyper-parameters}
 For a given maximum number of iterations $T$, we define the exponential moving average parameter $\theta_t = 1-\frac{\theta}{T}$, and base learning rate $\alpha_t = \frac{\alpha}{\sqrt{T}}$. Besides, we define $\gamma = \beta/(1-\frac{\theta}{T})$. 
 \end{assumption} 
%\vspace{-0.2cm}
Below, we characterize the iteration complexity of Efficient-Adam to achieve an $\varepsilon$-stationary solution of problem \eqref{problem}. In the corollary, we characterize the overall bit communication complexity of Efficient-Adam with the quantization mapping defined in Example \ref{quantization-mapping2}. 
%\vspace{-0.2cm}
\begin{theorem}\label{complexity-quantization}
Let $\{x_t\}$ be the point generated by Algorithm \ref{alg1} and Algorithm \ref{alg2}. In addition, assume that all workers work identically and independently. Let $x_\tau^T$ be the random variable $x_\tau$ with $\tau$ taking from $\{1,2,\cdots, T\}$ with the same probability. For given $\varepsilon$, when {\small $T\geq \frac{4(G^2+\epsilon d)}{(1-\beta)^2\alpha^2 \varepsilon^2}\left(F(x_1)-F^* + C_{2} \right)^2$}, it always holds that
\begin{gather*}
 \mathbb{E}[\|\nabla F(x_\tau^T)\|^2] 
 \leq \varepsilon  \\ 
  \!+\!\left(\frac{(2\!-\!\delta_s)\delta_s'}{\delta_s}+\frac{(2\!-\!\delta_s)(2\!-\!\delta_w)\delta_w'}{\delta_s\delta_w}\right)\frac{2L\sqrt{G^2\!+\!\epsilon d}C_3}{(1-\beta)}, 
 \end{gather*}
 where
{$C_1 = \left(\frac{\beta/(1-\beta)}{\sqrt{(1-\gamma)\theta_1}}+1\right)^2$, $C_{2} \!=\! \frac{d}{1-\sqrt{\gamma}}\left(\frac{(2-\delta_w)(2-\delta_s)\alpha^2L}{\theta(1-\sqrt{\gamma})^2\delta_w\delta_s} \!+\! \frac{2C_1G\alpha}{\sqrt{\theta}} \right) \left[\log\left(1+\frac{G^2}{\epsilon d}\right)\!+\!\frac{\theta}{1\!-\!\theta}\right]$, and $C_3 =\sqrt{\frac{d}{\theta(1-\sqrt{\gamma})^4}\left[\log\left(1+\frac{G^2}{\epsilon d}\right)+\frac{\theta}{1-\theta}\right]}.$}
 \end{theorem}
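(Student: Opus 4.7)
The natural approach is the \emph{virtual-iterate} trick adapted to two-way error feedback. First I would unroll the algorithm to see that $\tilde\delta_t=\hat\delta_t+e_t-e_{t+1}$ on the server side and $\delta_t^{(i)}=u_t^{(i)}+e_t^{(i)}-e_{t+1}^{(i)}$ on the worker side, where $u_t^{(i)}=\alpha_t m_t^{(i)}/\sqrt{v_t^{(i)}}$. Defining the auxiliary sequence $\hat x_t := x_t - e_t - \bar e_t$, with $\bar e_t = \frac1N\sum_i e_t^{(i)}$, a direct telescoping check gives the clean recursion $\hat x_{t+1}=\hat x_t-\bar u_t$, which removes both layers of quantization bias in one stroke. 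This is the object I would feed into descent analysis.

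Next, I would apply $L$-smoothness to $\hat x_t$: $F(\hat x_{t+1})\le F(\hat x_t)-\langle\nabla F(\hat x_t),\bar u_t\rangle+\frac L2\|\bar u_t\|^2$, and decompose $\langle\nabla F(\hat x_t),\bar u_t\rangle=\langle\nabla F(x_t),\bar u_t\rangle+\langle\nabla F(\hat x_t)-\nabla F(x_t),\bar u_t\rangle$. The first inner product is the genuine Adam descent term; the second is a perturbation controlled by $L(\|e_t\|+\|\bar e_t\|)\|\bar u_t\|$. The Adam inner product I would handle exactly as in Zou et al.\ \cite{zou2019sufficient}: insert $\pm \alpha_t\nabla F(x_t)/\sqrt{v_t^{(i)}}$, use the bounded-gradient assumption to bound $\sqrt{v_t^{(i)}}\le\sqrt{G^2+\epsilon d}$ pointwise, and exploit the momentum-auxiliary sequence (the $C_1$ term encodes the standard $\beta/(1-\beta)$ momentum lag). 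Taking expectations uses the unbiasedness of $g_t^{(i)}$ and the i.i.d.\ identical-worker assumption.

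The crucial auxiliary bounds come from the quantization definition. Applying $\|x-\mathcal Q_w(x)\|\le(1-\delta_w)\|x\|+\delta_w'$ to $e_{t+1}^{(i)}=u_t^{(i)}+e_t^{(i)}-\mathcal Q_w(u_t^{(i)}+e_t^{(i)})$ and unrolling yields $\|e_t^{(i)}\|\le\sum_{s<t}(1-\delta_w)^{t-s}\|u_s^{(i)}\|+\delta_w'/\delta_w$; chaining this through $\|\hat\delta_t\|\le(2-\delta_w)\frac1N\sum_i\|u_t^{(i)}+e_t^{(i)}\|$ into the analogous recursion for $e_t$ gives $\|e_t\|\le\text{(weighted sum of }\|u_s^{(i)}\|\text{)}+\tfrac{(2-\delta_w)\delta_w'}{\delta_w\delta_s}+\delta_s'/\delta_s$. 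The convolution-of-geometric-weights lemma turns $\sum_t(\text{weighted sum})\|\bar u_t\|$ into a constant multiple of $\sum_t\|\bar u_t\|^2$ (at the cost of a factor $1/\delta_w^2,1/\delta_s^2$), absorbing all but the additive $\delta'$-terms into the smoothness term.

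Finally I would sum the descent inequality from $t=1$ to $T$, telescope $F(\hat x_{t+1})-F(\hat x_t)$ against $F(x_1)-F^*$ (noting $\hat x_1=x_1$), and bound the residual $\sum_t\|\bar u_t\|^2$ and $\sum_t\alpha_t\|m_t^{(i)}/\sqrt{v_t^{(i)}}\|$ by the standard AdaGrad-style logarithmic bound $\sum_t\alpha_t^2\|m_t^{(i)}/\sqrt{v_t^{(i)}}\|^2\le \frac{\alpha^2 d}{\theta(1-\sqrt\gamma)^2}[\log(1+G^2/(\epsilon d))+\theta/(1-\theta)]$ — this is where $C_2$ and $C_3$ crystallize. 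Dividing by $T$ and $(1-\beta)\alpha/\sqrt T\cdot 1/\sqrt{G^2+\epsilon d}$ yields $\mathbb E\|\nabla F(x_\tau^T)\|^2\le \frac{2\sqrt{G^2+\epsilon d}}{(1-\beta)\alpha\sqrt T}(F(x_1)-F^*+C_2) + (\text{the }\delta'\text{-floor})$, and the stated threshold on $T$ makes the first summand $\le\varepsilon/2$ (after squaring out and choosing constants). The \emph{main obstacle} is purely combinatorial: tracking how the two recursive error chains couple through $\|\hat\delta_t\|\le(2-\delta_w)\|u_t^{(i)}+e_t^{(i)}\|$ so that the resulting worker/server error bounds compose cleanly into the $(2-\delta_s)(2-\delta_w)/(\delta_s\delta_w)$ prefactor in $C_2$ without a blow-up, while simultaneously keeping the adaptive-rate cross terms in the form needed to invoke the logarithmic sum bound.
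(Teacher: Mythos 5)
Your proposal follows essentially the same route as the paper's proof: the same doubly-shifted virtual iterate $\hat x_t = x_t - e_t - E_t$ (so that $\hat x_{t+1}=\hat x_t+\Delta_t$), the same $L$-smoothness descent with a Zou-et-al.-style surrogate for the Adam inner product (the paper's $\hat v_t,\hat\eta_t, A_t, B_t$ machinery), the same geometric unrolling of the two coupled error-feedback recursions into $\sum_t\|\Delta_t^{(i)}\|^2$ plus additive $\delta'$-terms, and the same logarithmic sum bound producing $C_2$ and $C_3$. The only bookkeeping nit is that the stated threshold on $T$ drives the $1/\sqrt T$ term below the full $\varepsilon$ (not $\varepsilon/2$), with the $\delta'$-floor kept as the separate additive term exactly as in the theorem statement.
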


\begin{remark}
    \change{In comparison to the work of Chen et al. \cite{chen2021quantized} and Wang et al. \cite{wang2022communication}, we have an order of $\mathcal{O}(\log^2(1+\frac{1}{\epsilon d}))$ with respect to the constant $\epsilon$, which pertains to numerical issues. Conversely, they demonstrate an order of $\mathcal{O}(\frac{1}{\epsilon})$. In practical applications, $\epsilon$ is typically configured to be a small value. Therefore, our algorithm is capable of achieving significantly faster convergence compared to theirs. }\secchange{ 
Additionally, because Chen et al. \cite{chen2021quantized} compress the weights instead of the updates, when we let the value of $T$ go to infinity, their method can't reach the exact stationary point. Instead, it gets really close to some stationary point, even when we use the compressor in Definition \ref{compressor-def}. On the other hand, our approach can reach the stationary point with $T$ going to infinity and compressors in Definition \ref{compressor-def}.} 
\end{remark}
% \vspace{-0.3cm}
 \begin{corollary}\label{bit-complexity-quantization}
 With the quantization function in Example \ref{quantization-mapping2}, when we want to get an $\varepsilon$-stationary solution we need at most $C_5$ bits per iteration on the workers and the server. Besides, to achieve an $\varepsilon$-stationary point, we need $\frac{16(G^2+\epsilon d)}{(1-\beta)^2\alpha^2 \varepsilon^2}\left(F(x_1)-F^* + C_{4} \right)^2$ iterations, where
{$C_{4} = \frac{d}{1-\sqrt{\gamma}}\left(\frac{9\alpha^2L}{\theta(1-\sqrt{\gamma})^2} + \frac{2C_1G\alpha}{\sqrt{\theta}} \right) \left[\log\left(1\!+\!\frac{G^2}{\epsilon d}\right)+\frac{\theta}{1-\theta}\right]$, $C_5 = d\left(1\!+\!\log\left(\log(\frac{\alpha}{(1\!-\!\gamma)\theta})\!+\!\log\left(\frac{24Ld\sqrt{G^2\!+\!\epsilon d}C_3}{\varepsilon(1-\beta)}\right)\!+\!1\right)\right)$}, and {{$C_1,\ C_3$}} are defined in Theorem \ref{complexity-quantization}.
 \end{corollary}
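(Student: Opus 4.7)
The plan is to apply Theorem \ref{complexity-quantization} to the quantizer of Example \ref{quantization-mapping2}, interpreted as flooring (as allowed by Remark \ref{quantization-compressor}), and then to turn the resulting precision requirement into a count of bits. There are four tasks: identify $\delta_s,\delta_w,\delta_s',\delta_w'$ for this quantizer; fix $2^K$ by uniformly bounding every vector ever fed into either quantizer; fix $2^k$ by forcing the residual term in Theorem \ref{complexity-quantization} to be at most $\varepsilon$; then count the bits needed to name an element of $M$.

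For the first task I do a componentwise analysis. If $|x_i|\in[2^j,2^{j+1})$ with $k\le j<K$, flooring to the nearest power of two gives error $|x_i|-2^j<2^j\le|x_i|/2$; if $|x_i|<2^k$ the error is at most $2^k$. Summing over coordinates yields $\|x-\mathcal{Q}(x)\|\le\tfrac12\|x\|+2^k\sqrt{d}$, and flooring also gives the range bound $\|\mathcal{Q}(x)\|\le\|x\|$. Hence $\delta_s=\delta_w=\tfrac12$ and $\delta_s'=\delta_w'=2^k\sqrt d$. Substituting $(2-\delta)^2/(\delta_s\delta_w)=9$ into the expression for $C_2$ in Theorem \ref{complexity-quantization} reproduces the constant $C_4$ stated in the corollary.

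For $2^K$, I bound the norm of the vectors fed into either quantizer. Assumption \ref{assumption-f-grad}, together with an induction on the $v_t^{(i)},m_t^{(i)}$ recursions in Algorithm \ref{alg2}, gives $\sqrt{v_t^{(i)}}\ge\sqrt{\epsilon}$ componentwise and $\|m_t^{(i)}\|\le G$, so the Adam increment $\alpha_t m_t^{(i)}/\sqrt{v_t^{(i)}}$ is bounded in norm by an expression of order $\alpha/((1-\gamma)\theta)$ after accounting for the scaling of $\alpha_t$. The error-feedback term $e_t^{(i)}$ satisfies the recursion in line 8 of Algorithm \ref{alg2}; iterating the two inequalities of Definition \ref{quantization-mapping-def} along this recursion with contraction factor $\tfrac12$ yields a uniform-in-$t$ bound on $\|e_t^{(i)}\|$ of the same order. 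Choosing $2^K$ to dominate this quantity rules out over-range truncation, and an analogous estimate holds on the server side because $\hat\delta_t$ is an average of the worker updates.

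For $2^k$, I impose that the residual term in Theorem \ref{complexity-quantization}, namely $(3\delta_s'+9\delta_w')\cdot 2L\sqrt{G^2+\epsilon d}\,C_3/(1-\beta)$, is at most $\varepsilon$. Solving this inequality for $2^k=\delta'/\sqrt d$ gives $-k$ of order $\log_2\bigl(24Ld\sqrt{G^2+\epsilon d}\,C_3/(\varepsilon(1-\beta))\bigr)$. Each nonzero coordinate is then transmitted using one sign bit plus $\lceil\log_2(K-k+1)\rceil$ bits to name an element of $M$, giving $d\bigl(1+\log_2(K-k+1)\bigr)$ bits per broadcast; substituting the values of $K$ and $-k$ produces the stated $C_5$, the outer logarithm appearing because $K-k$ is itself a sum of two logarithms. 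The iteration threshold is then the $T\ge\cdot$ condition of Theorem \ref{complexity-quantization} after absorbing the residual into an $\varepsilon/2$ budget. The main obstacle I expect is the uniform-in-$t$ bound on $\|e_t^{(i)}\|$: it is not an explicit output of Theorem \ref{complexity-quantization} and must be established by a separate telescoping argument along the error-feedback recursion, using the contraction of Definition \ref{quantization-mapping-def} together with the boundedness of the Adam increment. Once that bound is in hand the rest is numeric substitution into Theorem \ref{complexity-quantization} and a count of exponents.
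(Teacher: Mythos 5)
Your proposal follows essentially the same route as the paper's proof: identify $\delta_s=\delta_w=\tfrac12$ and $\delta_s'=\delta_w'=2^k\sqrt{d}$ for the floored power-of-two quantizer, pick $K$ so that $2^K$ dominates the quantizer inputs (the paper invokes Lemma \ref{Lemmamv} to get $\|\delta_t\|\le \alpha/((1-\gamma)\theta)$), pick $k$ so that the additive residual of Theorem \ref{complexity-quantization} fits in an $\varepsilon/2$ budget, and count $d\left(1+\log(K-k+1)\right)$ bits per transmission. The only place you go beyond the paper is in explicitly bounding the error-feedback terms $e_t^{(i)}$ and $e_t$ that are also fed into the quantizers — the paper's proof only bounds the Adam increment — but the contraction estimate you propose for this is exactly the one already derived inside Lemmas \ref{esum} and \ref{eesum}, so this is a careful filling-in of the same argument rather than a different one.
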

 %\begin{proof}
 %By solving $\left(\frac{\delta_s'}{\delta_s} +\frac{\delta_w'}{\delta_s\delta_w}\right)\frac{2L\sqrt{G^2+\epsilon d}C_8}{(1-\beta)}\leq \frac{\varepsilon}{2}$, and set $K$ to be the same in both direction, we obtain that when $K \geq \log\left(\frac{8Ld^2\sqrt{G^2+\epsilon d}C_8}{\varepsilon(1-\beta)(1-2^{-m})^2}\right)-m + 1$, this equality always holds. Besides, using theorem \ref{complexity-quantization}, with inequality $\mathbb{E}[\|\nabla f(x_\tau^T)\|^2 \leq \varepsilon/2 +  \left(\frac{\delta_s'}{\delta_s} +\frac{\delta_w'}{\delta_s\delta_w}\right)\frac{2L\sqrt{G^2+\epsilon d}C_8}{(1-\beta)}$, we have $T\geq \frac{16(G^2+\epsilon d)}{(1-\beta)^2\alpha^2 \varepsilon^2}\left(f(x_1)-f^* + C_{11} \right)^2$. Moreover, in each iteration we need to transmit $d(m + \log(\log(G)+K+1))$ bits.
 %By taking $m=1$, we get the desired result.
 %\end{proof}
%\vspace{-0.15cm}
From the above theorem and corollary,  we can reduce the bits of communication by two-way quantization and error feedback. In addition, the bit communication complexity and iteration complexity are $O(d\log\log \frac{d^2}{\varepsilon})$ and $O(\frac{d^3}{\varepsilon^2})$, respectively. 
Moreover, it can be seen that adding compression in both sides can converge to an arbitrary $\varepsilon$-stationary point when we have large enough communication bandwidth and sufficient iterations. There is a limited influence when we add quantization into communication if we can have a small additional error $\delta_w'$.  Below, we show that if we replace the quantization mapping in Definition \ref{quantization-mapping-def} as Compressor in Definition \ref{compressor-def} in Algorithms \ref{alg1}-\ref{alg2}, Efficient-Adam can attain the same order of iteration complexity as original Adam without introducing additional error terms.

%\vspace{-0.15cm}
\begin{corollary}
\label{compressor-complexity}
% Let $\{x_{t}\}$ be the point generated by Algorithms \ref{alg1}-\ref{alg2} with quantization mapping replaced by compressor as Definition \ref{compressor-def}. Assume that all workers works identically and independently.  In addition, let $x^T_\tau$ be the random variable $x_\tau$ with $\tau$ taking from $\{1,2,\ldots,T\}$ with the same probability. the convergence result holds as follows:  
 %\[
%\mathbb{E}[\|\nabla f(x_\tau^T)\|^2]\leq \frac{1}{\sqrt{T}} \frac{2\sqrt{G^2+\epsilon d}}{\alpha(1-\beta)}(f(x_1) - f^* + C_2),
%\]
Let $\{x_t\}$ be the point generated by Algorithm \ref{alg1} and Algorithm \ref{alg2} with the quantization mapping replaced by the compressor in Definition \ref{compressor-def}. In addition, assume that all workers work identically and independently. Let $x_\tau^T$ be the random variable $x_\tau$ with $\tau$ taking from $\{1,2,\cdots, T\}$ with the same probability. For given $\varepsilon$, when {$T\geq \frac{4(G^2+\epsilon d)}{(1-\beta)^2\alpha^2 \varepsilon^2}\left(F(x_1)-F^* + C_{2} \right)^2$}, it always holds: 
{$
\mathbb{E}[\|\nabla F(x_\tau^T)\|^2] \leq \varepsilon,
$}
where {$C_1 \!=\! \Big(\frac{\beta/(1-\beta)}{\sqrt{(1-\gamma)\theta_1}}+1\Big)^2$}, 
and { $C_{2} \!=\! \frac{d}{1-\sqrt{\gamma}}\left(\frac{(2-\delta_w)(2-\delta_s)\alpha^2L}{\theta(1-\sqrt{\gamma})^2\delta_w\delta_s} \!+\! \frac{2C_1G\alpha}{\sqrt{\theta}} \right) \left[\log\left(1+\frac{G^2}{\epsilon d}\right)\!+\!\frac{\theta}{1\!-\!\theta}\right]$}.
\end{corollary}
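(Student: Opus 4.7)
The plan is to observe that the compressor in Definition \ref{compressor-def} is precisely a quantization mapping in the sense of Definition \ref{quantization-mapping-def} with the additive constant $\delta'$ set to zero. The first compressor inequality $\|x-\mathcal{Q}(x)\|\leq (1-\delta)\|x\|$ is the first quantization inequality with $\delta'=0$, while the second quantization condition $\|\mathcal{Q}(x)\|\leq (2-\delta)\|x\|$ follows immediately from the triangle inequality: $\|\mathcal{Q}(x)\|\leq \|x\|+\|x-\mathcal{Q}(x)\|\leq (2-\delta)\|x\|$. Hence every compressor in Definition \ref{compressor-def} automatically satisfies the assumptions of Theorem \ref{complexity-quantization} with parameters $(\delta_w,\delta_w')=(\delta_w,0)$ and $(\delta_s,\delta_s')=(\delta_s,0)$.

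Given this embedding, the plan is to invoke Theorem \ref{complexity-quantization} verbatim, substituting $\delta_w'=\delta_s'=0$. Inspecting the conclusion of Theorem \ref{complexity-quantization}, the additive residual term
\[
\Bigl(\tfrac{(2-\delta_s)\delta_s'}{\delta_s}+\tfrac{(2-\delta_s)(2-\delta_w)\delta_w'}{\delta_s\delta_w}\Bigr)\tfrac{2L\sqrt{G^2+\epsilon d}\,C_3}{1-\beta}
\]
vanishes identically, leaving $\mathbb{E}[\|\nabla F(x_\tau^T)\|^2]\leq \varepsilon$. The iteration threshold depends only on the constants $C_1$ and $C_2$, neither of which involves $\delta_w'$ or $\delta_s'$, so the bound on $T$ is unchanged and matches the one stated in the corollary.

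The only subtlety to verify will be that no step in the proof of Theorem \ref{complexity-quantization} secretly relies on $\delta_w'$ or $\delta_s'$ being strictly positive — for instance, in the error-feedback recursion that bounds the accumulated error sequences $\|e_t\|$ and $\|e_t^{(i)}\|$ via the two quantization inequalities. Since those inequalities are used as upper bounds and setting $\delta'=0$ only tightens them, the entire derivation carries through without modification; continuity of the bound in $(\delta_w',\delta_s')\to(0,0)$ is immediate. The main conceptual takeaway is that once the unavoidable additive bias of a finite-bit quantizer is removed, Efficient-Adam recovers the $\mathcal{O}(1/\sqrt{T})$ rate of full-precision Adam established in \cite{zou2019sufficient,chen2022towards}, which is precisely the content of the corollary.
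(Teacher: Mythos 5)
Your proposal is correct and follows essentially the same route as the paper's own proof: the paper likewise observes that a compressor satisfies $\|\mathcal{Q}(x)\|\leq(2-\delta)\|x\|$ automatically and then invokes Theorem \ref{complexity-quantization} with $\delta'=0$, which kills the additive residual term. Your additional checks (that $C_1,C_2$ do not depend on $\delta'$ and that the error-feedback bounds only use $\delta'$ as an upper bound) are sound and simply make explicit what the paper leaves implicit.
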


From the above corollary, it can be shown that the convergence rate of Efficient-Adam matches the convergence rate of the vanilla Adam. Thanks to the two-way error feedback techniques employed in Efficient-Adam, the convergence rate will not be hurt by the introduced bi-direction compressor as defined in Definition \ref{compressor-def}, which merely affects the speed of convergence at the constant level.
\secchange{However, Chen et al. \cite{chen2021quantized} adopt the compression on the weights which merely converges to the neighborhood of the stationary point.}
%\subsection{Proof Sketch of Theorem \ref{complexity-quantization}}
%Due to the limited space, we will only give the proof sketch of Theorem \ref{complexity-quantization}. The full proof can be found in the Supplementary Materials. 

\begin{table*}[htb!]
%\vspace{-0.2cm}
    \centering
    \caption{Abbreviation of all compared algorithms}
%    \vspace{-0.2cm}
    \renewcommand{\arraystretch}{1.2}
    \begin{tabular}{|c|c|c|c|}
    \hline
     \!Abbreviation\! & \!optimization algorithm\!&\!\!\! communication algorithm \!\!\! &\!\! quantization function \!\! \\
    \hline
     {\bf Ours\_full}       & Efficient-Adam &  None           & 32-bits floating-point number\\
     {\bf Ours\_com1}       & Efficient-Adam & Error-feedback  & Example \ref{quantization-mapping1}\\
     {\bf Ours\_com1\_err}  & Efficient-Adam & None            & Example \ref{quantization-mapping1}\\
     {\bf Ours\_com2}       & Efficient-Adam & Error-feedback  & Example \ref{quantization-mapping2}\\
     {\bf Ours\_com2\_err}  & Efficient-Adam & None            & Example \ref{quantization-mapping2}\\
     {\bf Sgdm\_full}       & Distributed SGD with Momentum & None & 32-bits floating-point number \\
     {\bf Zheng et al. }       & Distributed SGD with Momentum & Error-feedback & Example \ref{quantization-mapping1} \\
     {\bf Sgdm\_terngrad}       & Distributed SGD with Momentum & Terngrad & Example \ref{quantization-mapping1} \\
     {\bf Dadam\_full} & Distributed Adam & None & 32-bits floating-point number\\
     {\bf Dadam\_terngrad} & Distributed Adam & Terngrad & Example \ref{quantization-mapping1} \\
     \change{\bf Chen et al.} & \change{Distributed Adam}& \change{Error-feedback} & \change{Example \ref{quantization-mapping1}}\\
     %\change{AMSGRAD}& \change{Distributed AMSGRAD} & \change{None} & \change{32-bits floating-point number}\\
     \hline
    \end{tabular}
    \label{experimental-setting}
%    \vspace{-0.5cm}
\end{table*}

\begin{figure*}[hb!]
%\vspace{-0.6cm}
   \centering
    \!\!\!\!\includegraphics[width=0.24\textwidth]{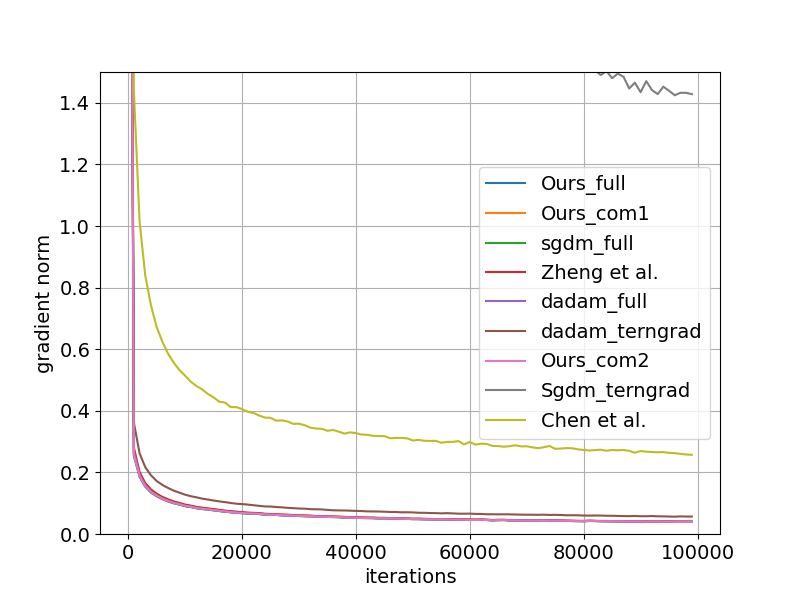}\!\!\!\!
    \!\!\!\!\includegraphics[width=0.24\textwidth]{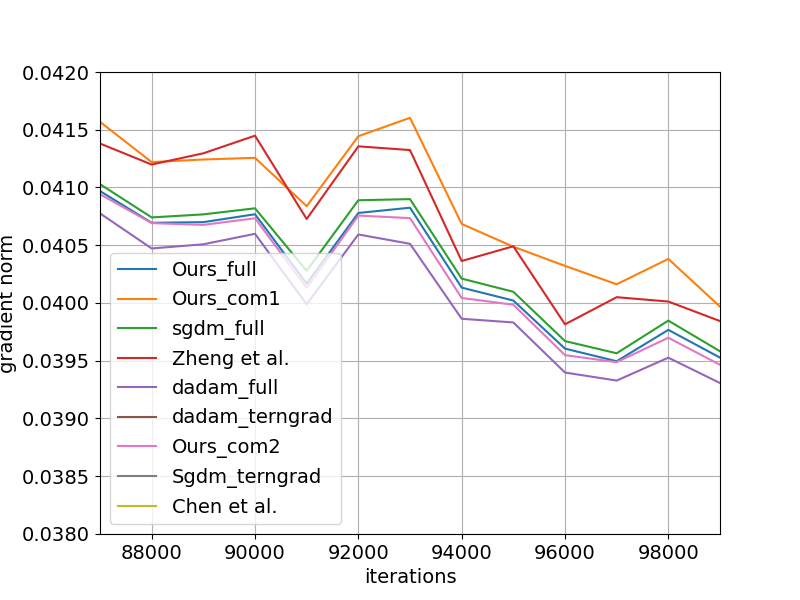}\!\!\!\!
    \!\!\!\!\includegraphics[width=0.24\textwidth]{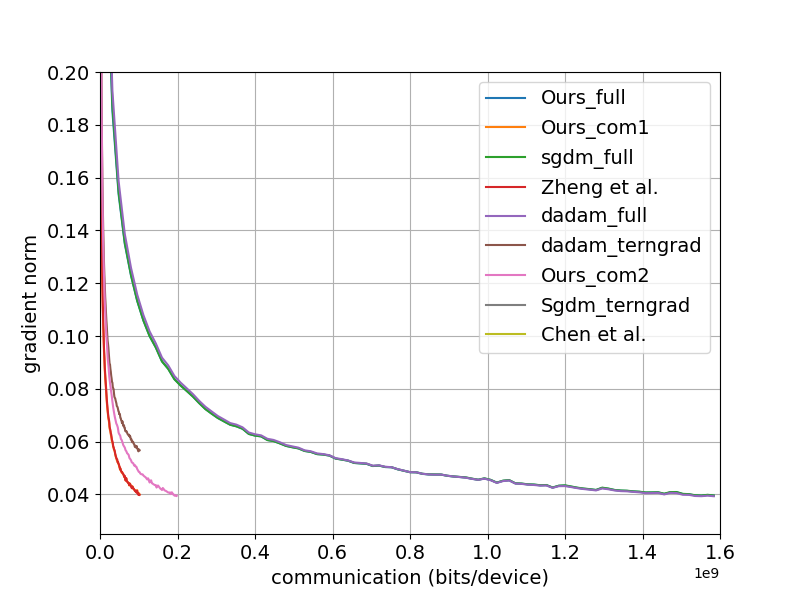}\!\!\!\!
     \!\!\!\!\includegraphics[width=0.24\textwidth]{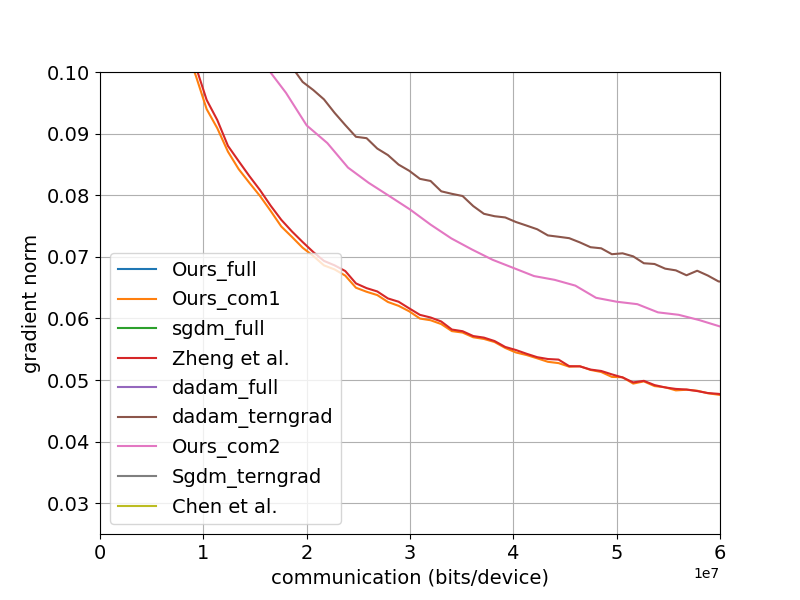}\!\!\!\!
%     \vspace{-0.3cm}
    \caption{\change{Experimental results for stochastic convex optimization \eqref{toy-example}. The second subfigure shows the zoom-in version of the first subfigure, which illustrates the norm of gradient vs. iterations on all compared algorithms. The fourth subfigure shows the zoom-in version of the third subfigure, which illustrates the norm of gradient vs. communication bit on all compared algorithms.} }
    \label{Figure1}
%    \vspace{-0.5cm}
\end{figure*}

%\vspace{-0.3cm}
\section{Experiments}\label{experiment-sec}
%\vspace{-0.1cm}
In this section, we apply our algorithms to handle several vision and language tasks. First, to accurately test the theorem on $\mathbb{E}\|\nabla f(x_t)\|^2$, we apply Efficient-Adam to tackle a simple stochastic convex case. Then we apply our algorithm for training neural networks on image classification and binary sentiment classification tasks. All of the above tasks we compare our method with including distributed SGD where the quantization part is done by Terngrad \cite{wen2017terngrad}, distributed Adam \cite{nazari2019dadam} where the quantization part is done by Terngrad, Zheng et al. \cite{zheng2019communication} \change{, and Chen et al. \cite{chen2021quantized}}. 

For simplicity, we use the abbreviation for all compared algorithms in the figures in this section. The abbreviations are summarized in the following Table \ref{experimental-setting}.

\subsection{Stochastic Convex Case}\label{convex}

In this subsection, we first optimize the following toy stochastic convex optimization problem:
\begin{align}\label{toy-example}
\min_{x} \mathbb{E}_\xi \big\|Ax - (x^*+\xi) \big\|^2,
\end{align}
where $A\in \mathbb{R}^{500\times 500}$, $x^* \in \mathbb{R}^{500}$ and $\xi$ follows a Gaussian distribution with mean 0 and covariance $0.1I$, i.e. $\xi \sim \mathcal{N}(0,0.1I)$. We randomly generate 20 cases where $A_{ij} \sim \mathcal{N}(0,1)$ independent with each other, and $x^* \sim \mathcal{N}(0,0.1I)$. For hyper-parameters, we choose $\beta = 0.9$ and $\theta = 0.99$ for both distributed Adam method and our method. For distributed SGD and Zheng et al. \cite{zheng2019communication} we choose the momentum coefficient as 0.9. As it refers to base learning rate $\alpha_{t}$, it is chosen from $\{1e-5,2e-5,\cdots, 100e-5\}$ via grid search approach. We use Example \ref{quantization-mapping1} for all algorithms as a quantizer and we use Example \ref{quantization-mapping2} for Efficient-Adam. For Example \ref{quantization-mapping1}, we set $k = 1$ for set $M_2$. Besides, for Example \ref{quantization-mapping2}, we use $k=-17$ and $K=-11$ for set $M$. 10 workers are involved in the distributed optimization process. 

The detailed comparisons are illustrated in Figure \ref{Figure1}. It shows that when full-precision algorithms achieve similar performance, the Terngrad quantizer \secchange{ and Chen et al. \cite{chen2021quantized}} will give worse results than Zheng et al. \cite{zheng2019communication} and ours. We can achieve similar results as Zheng et al. \cite{zheng2019communication}, or even better results when using Example \ref{quantization-mapping2}. Even when using Example \ref{quantization-mapping2}, it can achieve a slighter better result than the full precision version.  
In addition, as it is shown in the third subfigure and the fourth subfigure in Figure \ref{Figure1} when we consider communication bits, the quantizer introduced in Example \ref{quantization-mapping1} gives the smallest total number of bits. On the other hand,  even if Example \ref{quantization-mapping2} uses twice more bits than Example \ref{quantization-mapping1} does in one communication iteration, with our algorithm it can still converge faster than Terngrad.

\begin{figure}[H]
%    \vspace{-0.45cm}
   \centering
   \!\!\!\!\includegraphics[width=0.24\textwidth]{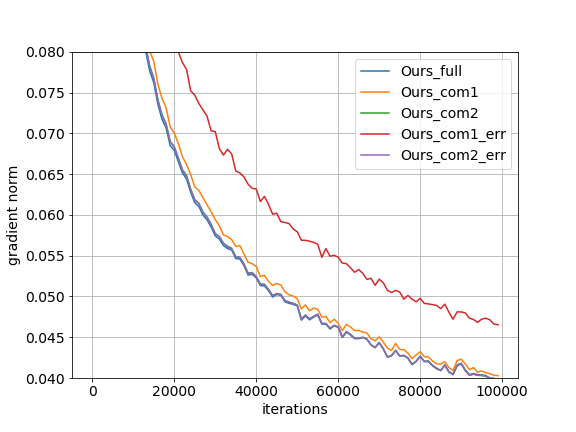}\!\!\!\!
   \!\!\!\!\includegraphics[width=0.24\textwidth]{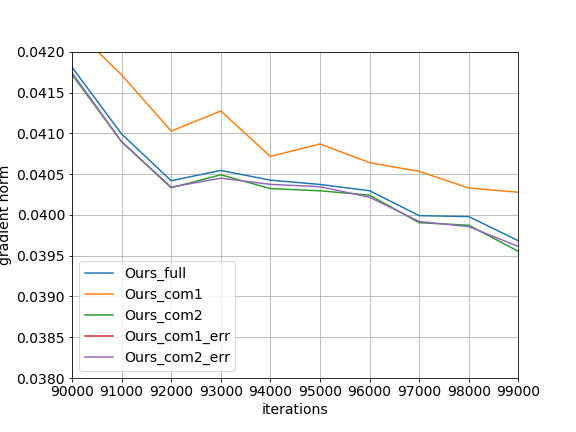}\!\!\!\!
%    \vspace{-0.5cm}
    \caption{Ablation study with/without error-feedback on the norm of gradient vs. iterations for stochastic convex optimization \eqref{toy-example}. The right figure shows the zoom-in version of the left one. 
    %``full'' represents full-precision communication. ``com1'' represents communicating with example \ref{quantization-mapping1} and ``com2'' represents communicating with Example \ref{quantization-mapping2}. 
    }
    \label{Figure2}

\end{figure}

As it refers to the error-feedback technique, shown in Figure \ref{Figure2}, with quantizer in Example \ref{quantization-mapping1}, error-feedback can help the algorithm get a better result. It helps a little when using Example \ref{quantization-mapping2} as the quantization function during communication. This may be because the Example \ref{quantization-mapping2} is accurate enough for solving this problem.

\begin{figure*}[htb!]
%\vspace{-0.3cm}
   \centering
   \!\!\!\!\includegraphics[width=0.24\textwidth]{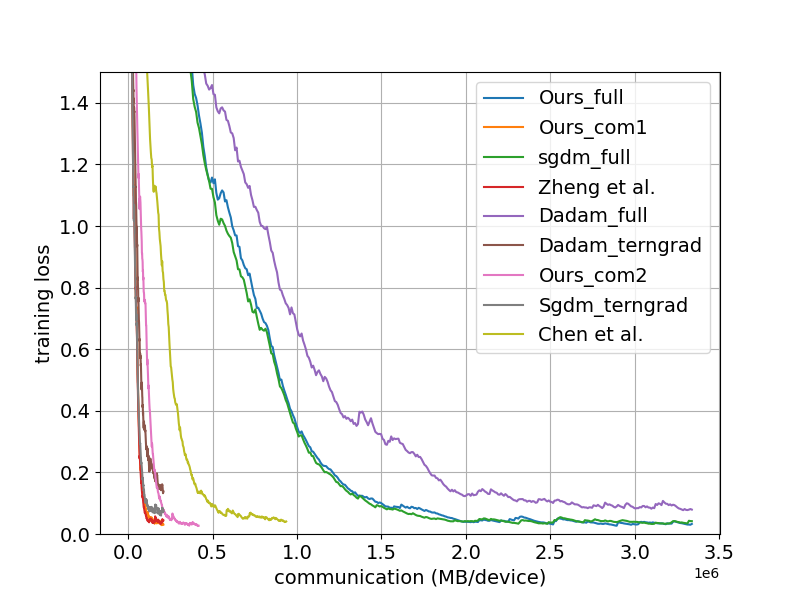}\!\!\!\!
   \!\!\!\!\includegraphics[width=0.24\textwidth]{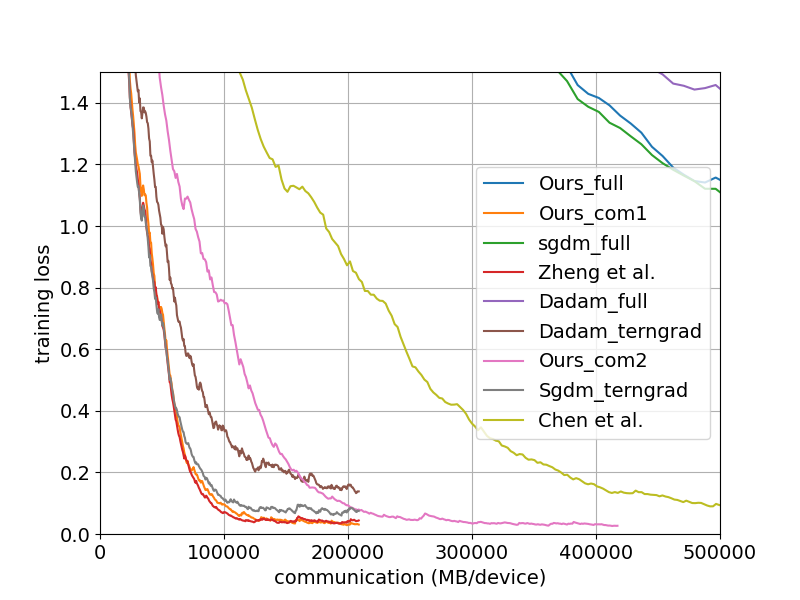}\!\!\!\!
   \!\!\!\!\includegraphics[width=0.24\textwidth]{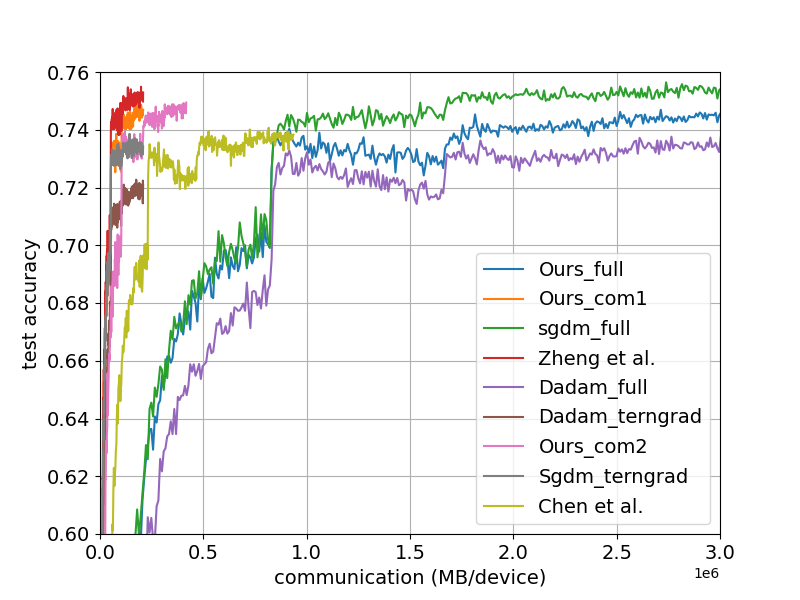}\!\!\!\!
   \!\!\!\!\includegraphics[width=0.24\textwidth]{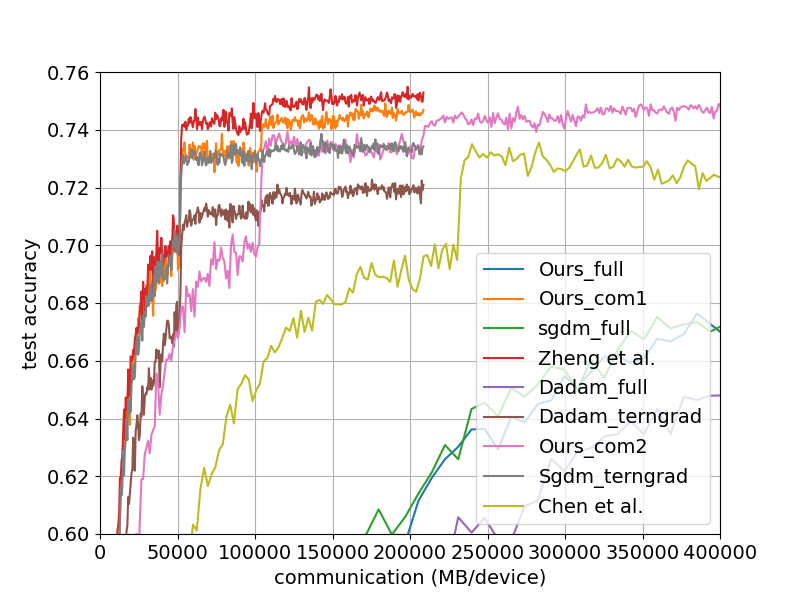}\!\!\!\!
% \vspace{-0.4cm}
    \caption{\change{Experimental results for image classification task on the CIFAR100 dataset. The left figure shows curves on training loss vs. communication bits. The second figure shows zooming into the beginning iterations of training. The third figure shows the curves on test accuracy vs. communication bits, and the fourth figure shows the zoom-in version of the third figure.} }
    \label{Fig_cifar_bits}
%    \vspace{-0.5cm}
\end{figure*}

%\vspace{-0.5cm}
\subsection{Image Classification}
\label{cifar_exp}
%\vspace{-0.1cm}

In this subsection, we apply Efficient-Adam to train a ResNet-18\cite{he2016deep} on the dataset CIFAR100 \cite{krizhevsky2009learning}.
The CIFAR100 dataset contains 60,000 $32\times 32$ images with RGB channels, and they are labeled into 100 classes. In each class, there are 600 images, of which 500 images are used for training and 100 images are used for the test.
ResNet18 contains 17 convolutional layers and one fully-connected layer. Input images will be downscaled into 1/8 size of their original size and then fed into a fully connected layer for classification training and testing. For the convolutional layers, we have 64 channels in the 1-5 convolutional layers, 128 channels in the 6-9 layers, 256 channels in the 10-13 layers, and 512 channels in the 14-17 layers. With batch size 16 and 8 workers, we train ResNet-18 for 78200 iterations.
We use cross-entropy loss with regularization to train the network. The weight of $\ell_2$ regularization we used is $5e-4$. 
For the other hyper-parameters, we use $\beta = 0.9$, $\theta_t = 0.999$ for adam-based algorithms, and for sgd with momentum we use 0.9 as the momentum coefficient. In addition, for each algorithm, based on the test accuracy, we select based learning rate denoting as $\alpha$ from $\{1e-1,5e-2,1e-2,5e-3,1e-3,5e-4,1e-4\}$ via grid search in the beginning and reduce $\alpha$ into $0.2\alpha$ after every 19,550 iterations (50 epochs for CIFAR100 training), which is often used in deep learning settings to get better performance. The hyper-parameters for the quantization function are the same as it in Section \ref{convex}. 

The detailed results are shown in Figure \ref{Fig_cifar_bits}, \ref{Fig_cifar}, and \ref{Fig_cifar_err}, where training loss has been smoothed for better presentation.  Figure \ref{Fig_cifar} shows the convergence speed and test accuracy with respect to the optimization iterations. It shows that even if we quantize the communication our algorithm and Zheng et al. \cite{zheng2019communication} can achieve similar performance as the full communication version. Meanwhile, because Terngrad \cite{wen2017terngrad} introduces extra noise, it gives the worst results. \secchange{And because Chen et al. \cite{chen2021quantized} quantized weights instead of the updates that introduce the irreducible errors, they can not approach the optimal point as close as the other methods. Thus it results in poor generalizations compared with our proposed Efficient-Adam.} When we consider the training loss and test accuracy with respect to communication bits, which is shown in Figure \ref{Fig_cifar_bits},  Zheng et al. \cite{zheng2019communication} achieves the best result. This may be because sgd-based methods are much more suitable for image classification tasks compared with adaptive stochastic gradient type methods \cite{wilson2017marginal}. Moreover, our results are similar to Zheng et al. \cite{zheng2019communication}'s results. Still, it can be shown in Figure \ref{Fig_cifar_bits} that with Example \ref{quantization-mapping2} as a quantizer, our algorithm is still better than Terngrad \cite{wen2017terngrad}.

\begin{figure}[htb!]
%\vspace{-0.4cm}
   \centering
   \!\!\!\!\includegraphics[width=0.24\textwidth]{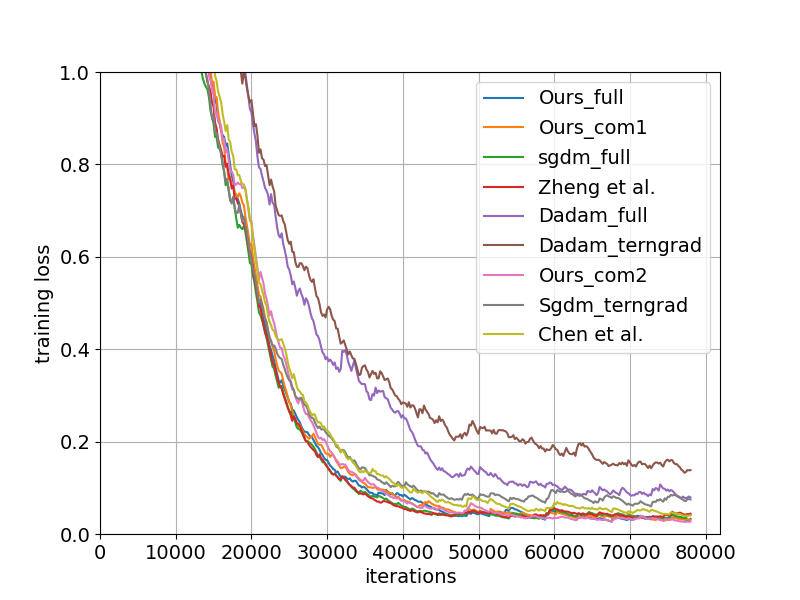}\!\!\!\!
   \!\!\!\!\includegraphics[width=0.24\textwidth]{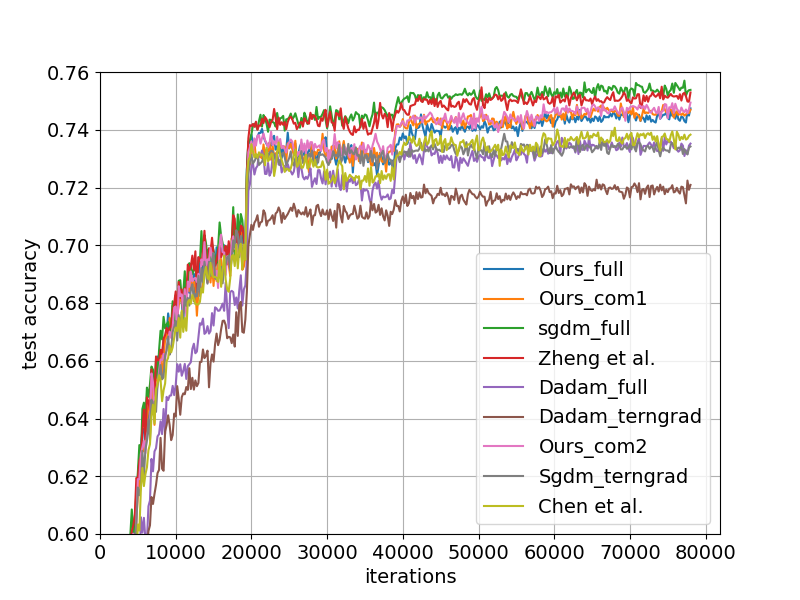}\!\!\!\!
%     \vspace{-0.3cm}
    \caption{\change{Experimental results for image classification on the CIFAR100 dataset. The left figure and right figure show curves on training loss vs. iterations and the curves on test accuracy vs. iterations, respectively. }}%``full'' represents full-precision communication. ``terngrad'' represents communicating with Terngrad \cite{wen2017terngrad}. ``com1'' represents communicating with example \ref{quantization-mapping1} and ``com2'' represents communicating with example \ref{quantization-mapping2}.}
    \label{Fig_cifar}
%    \vspace{-0.3cm}
    \end{figure}

\begin{figure}[htb!]
%\vspace{-0.5cm}
   \centering
   \!\!\!\!\includegraphics[width=0.24\textwidth]{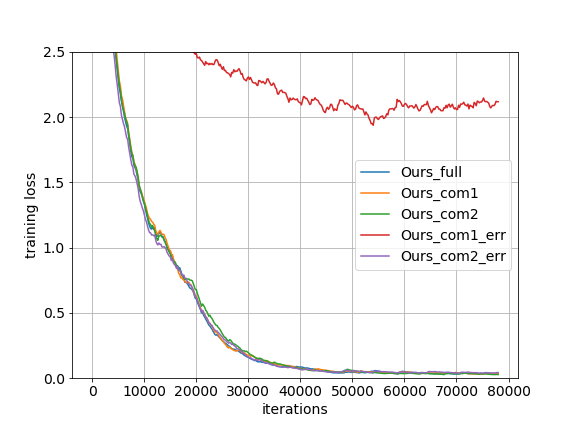}\!\!\!\!
   \!\!\!\!\includegraphics[width=0.24\textwidth]{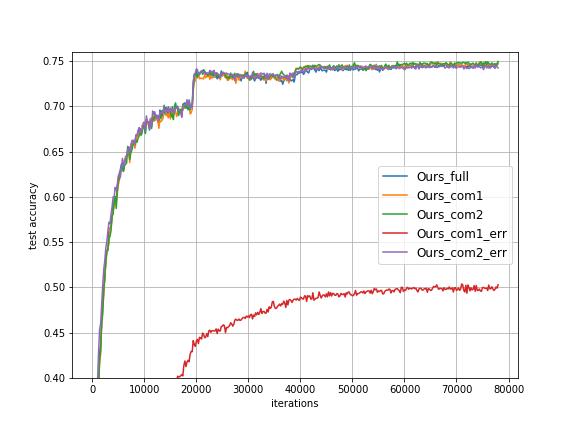}\!\!\!\!
%     \vspace{-0.2cm}
    \caption{Ablation study with/without error-feedback on the training loss and test accuracy vs. iterations for image classification task on CIFAR100 dataset, respectively. The left figure and right figure show curves on training loss vs. iterations and the curves on test accuracy vs. iterations, respectively.}% ``full'' represents full-precision communication. ``com1'' represents communicating with example \ref{quantization-mapping1} and ``com2'' represents communicating with Example \ref{quantization-mapping2}. ``err'' means running our algorithms without error-feedback technique.}
    \label{Fig_cifar_err}
%         \vspace{-0.4cm}
\end{figure}

Besides, we check whether the error-feedback technique is helpful to train the image classification task. The results are shown in Figure \ref{Fig_cifar_err}. When quantization functions introduce ``large'' error where Example \ref{quantization-mapping1} is used, the error-feedback technique helps a lot. Similarly, when the quantizer is accurate enough, where Example \ref{quantization-mapping2} is used as the quantization function, error-feedback has limited effect for improving its accuracy.

\begin{figure*}[htb!]
% \vspace{-0.2cm}
   \centering
   \!\!\!\!\includegraphics[width=0.24\textwidth]{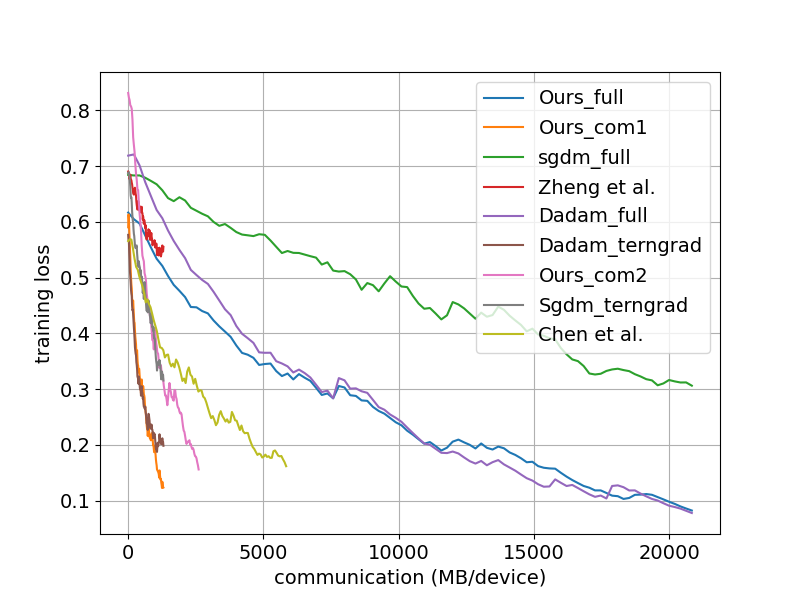}\!\!\!\!
   \!\!\!\!\includegraphics[width=0.24\textwidth]{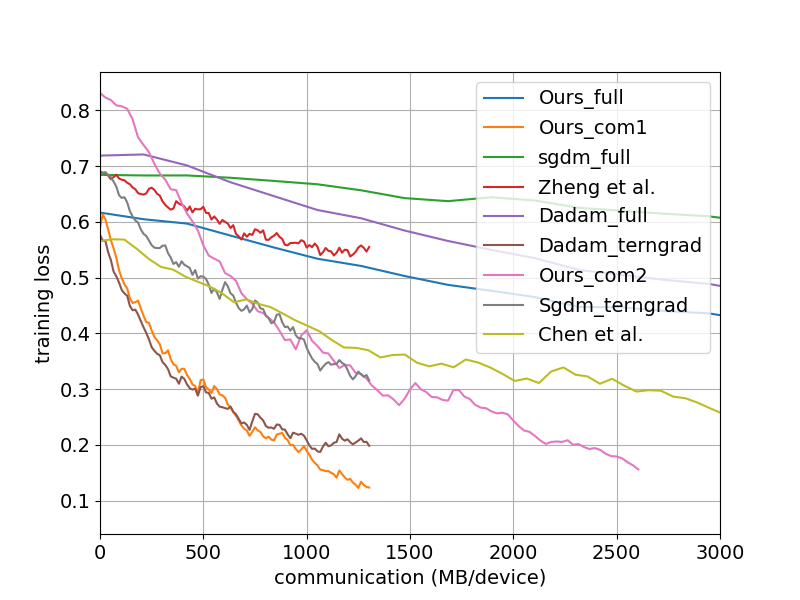}\!\!\!\!
   \!\!\!\!\includegraphics[width=0.24\textwidth]{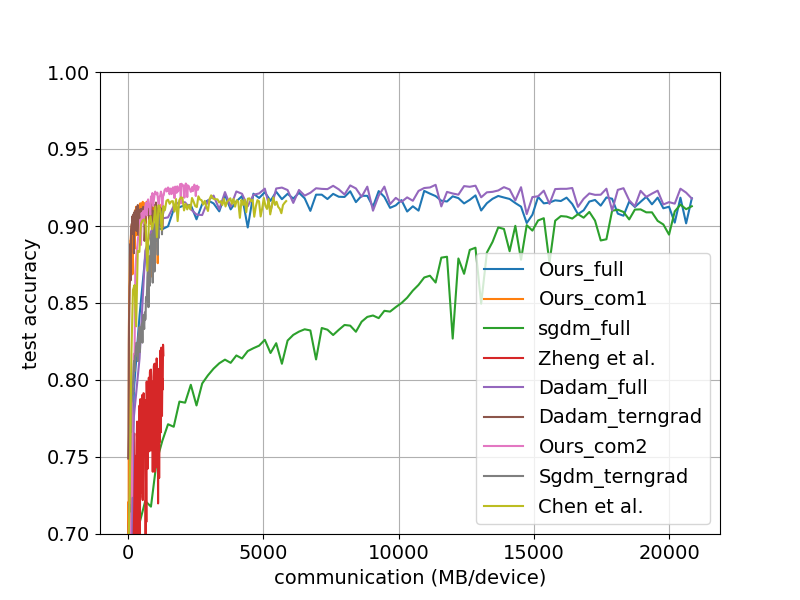}\!\!\!\!
   \!\!\!\!\includegraphics[width=0.24\textwidth]{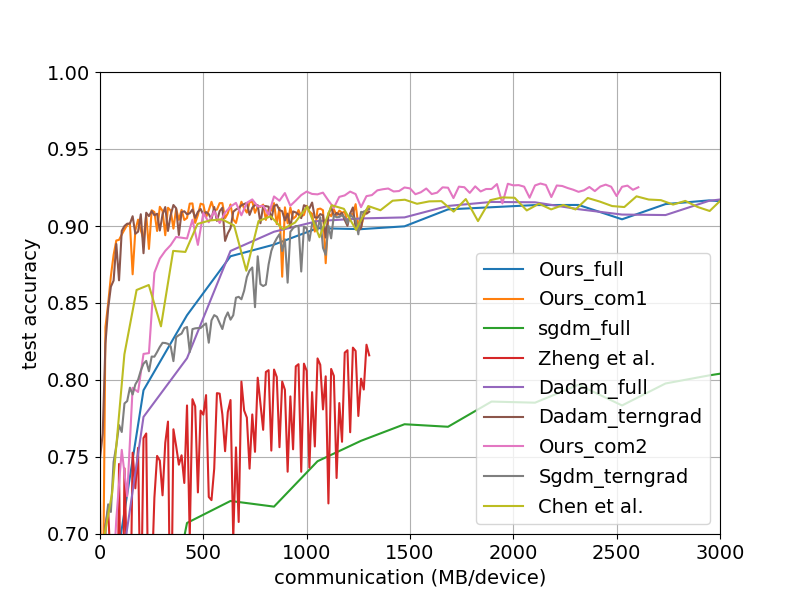}\!\!\!\!
%     \vspace{-0.35cm}
    \caption{\change{Experimental results for binary sentiment classification task on the IMDB dataset. The first subfigure shows curves on training loss vs. communication bit. The second subfigure shows zooming into the beginning iterations of training of the first subfigure. The third subfigure shows the curves on test accuracy vs. communication bits, and the fourth figure shows the zoom-in version of the third subfigure.}}% ``full'' represents full-precision communication. ``terngrad'' represents communicating with Terngrad \cite{wen2017terngrad}. ``com1'' represents communicating with example \ref{quantization-mapping1} and ``com2'' represents communicating with example \ref{quantization-mapping2}.}
    \label{Fig_imdb_bits}
%    \vspace{-0.7cm}
\end{figure*}

%\vspace{-0.5cm}
\subsection{Binary Sentiment Classification}
%\vspace{-0.1cm}
In addition, we apply our method to the sentiment classification task. We train a GRU \cite{cho2014properties} network on the dataset IMDB \cite{maas-EtAl:2011:ACL-HLT2011}.
The dataset contains 50,000 movie reviews which are labeled as positive or negative. Besides, the dataset has been split into two sets: one is a training set that contains 25,000 reviews and the other is a test set containing the rest 25,000 reviews. For the network part, we use a bidirectional GRU with 2 layers. In each layer, we set the hidden unit dimension as 256. Moreover, we use pertained Bert \cite{devlin2018bert} encoding to encode the text before it goes into GRU. During the training phase, we set iterations as 2000, parameter $\beta = 0.9$, and parameter $\theta_t = 0.999$ for Adam-based algorithms, and the momentum coefficient for sgd-based algorithm is 0.9.  The learning rate for each algorithm is selected from $\{1e-1,5e-2,1e-2,5e-3,1e-3,5e-4,1e-4\}$ via grid search approach.  The batch size is 16 and 8 workers are involved in the training phase. No regularization terms or learning rate decay are used in the training phase. The hyper-parameters for the quantization function are the same as it in Section \ref{convex}.

The experimental results are shown in Figures \ref{Fig_imdb_bits}, \ref{Fig_imdb}, and \ref{Fig_imdb_err}, where training loss has been smoothed for better presentation. In Figure \ref{Fig_imdb},  sgd-based algorithms perform worse than Adam-based algorithms. Still, the quantization of communication affects a little on our algorithm. Besides, Efficient-Adam with the quantizer in Example \ref{quantization-mapping2} even achieves the highest test accuracy among all compared algorithms. When considering training loss and test accuracy vs. communication, shown in Figure \ref{Fig_imdb_bits}, Efficient-Adam with Example \ref{quantization-mapping1} can achieve lower training loss and comparable high test-accuracy as Distributed Adam with Terngrad, while Efficient-Adam with Example \ref{quantization-mapping2} can achieve the highest test accuracy after 500MB communication.

\begin{figure}[htb!]
%\vspace{-0.4cm}
    \centering
    \!\!\!\!\includegraphics[width=0.24\textwidth]{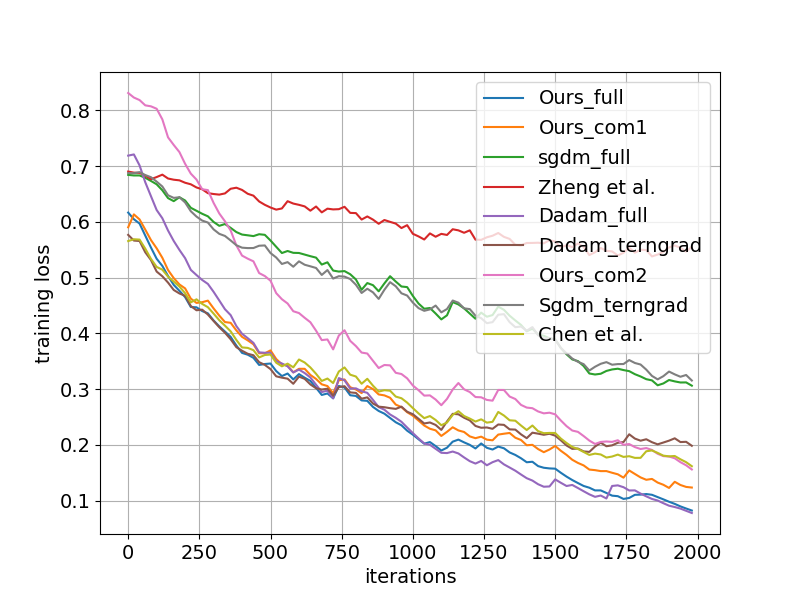}\!\!\!\!
    \!\!\!\!\includegraphics[width=0.24\textwidth]{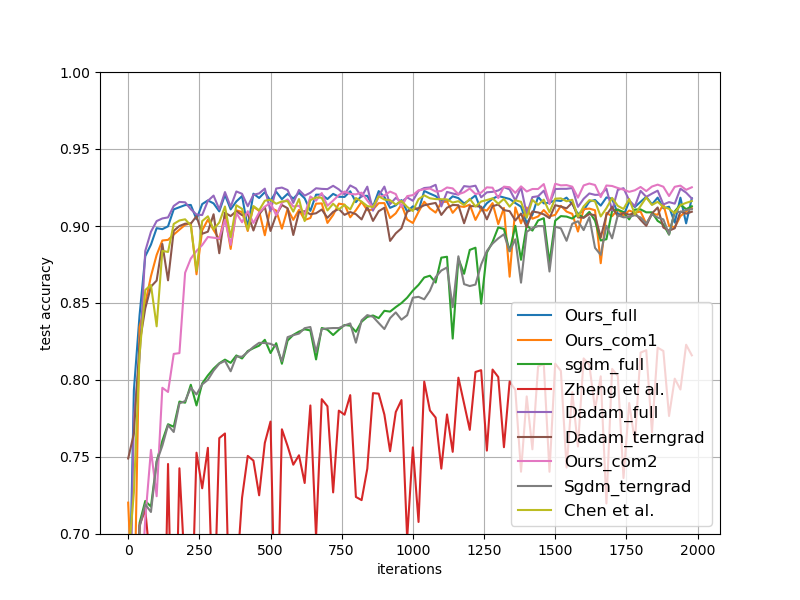}\!\!\!\!
%     \vspace{-0.4cm}
    \caption{\change{Experimental results for binary sentiment classification on the IMDB dataset. The first figure and the second figure show curves on training loss vs. iterations and test accuracy vs. iterations, respectively.}}% ``full'' represents full-precision communication. ``terngrad'' represents communicating with Terngrad \cite{wen2017terngrad}. ``com1'' represents communicating with example \ref{quantization-mapping1} and ``com2'' represents communicating with example \ref{quantization-mapping2}.}
    \label{Fig_imdb}
%    \vspace{-0.35cm}
\end{figure}

Moreover, Figure \ref{Fig_imdb_err} shows the influence of the error-feedback technique. Still, without error-feedback, Efficient-Adam with quantizer in Example \ref{quantization-mapping1} achieves much worse performance than the algorithm with error-feedback, and Efficient-Adam with quantizer in Example \ref{quantization-mapping2} achieves a slightly bad test accuracy or achieves even better training loss than the algorithm with error-feedback. Results can be concluded similarly as it is in Section \ref{convex} and \ref{cifar_exp}, where the more error quantization function will introduce, the more helpful error feedback is.

\begin{figure}[htb!]
% \vspace{-0.5cm}
   \centering
   \!\!\!\!\includegraphics[width=0.24\textwidth]{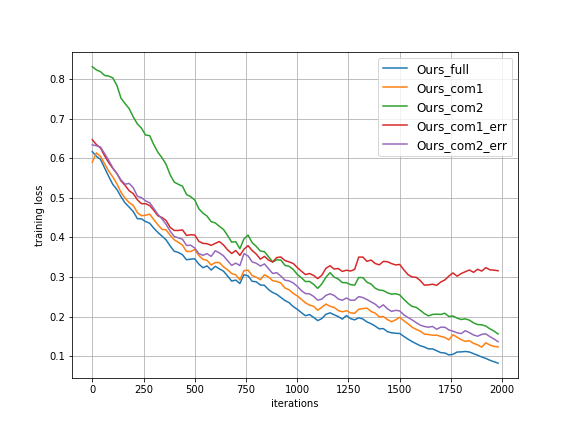}\!\!\!\!
   \!\!\!\!\includegraphics[width=0.24\textwidth]{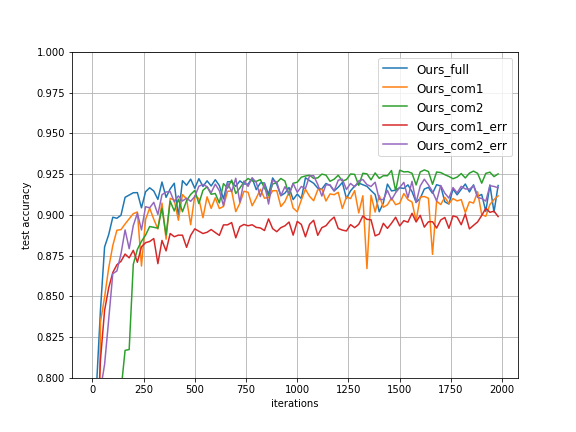}\!\!\!\!
 %    \vspace{-0.3cm}
    \caption{Ablation study with/without error-feedback on the training loss and test accuracy vs. iterations for binary sentiment classification task on the IMDB dataset, respectively. The left figure and right figure show curves on training loss vs. iterations and the curves on test accuracy vs. iterations, respectively.}% ``full'' represents full-precision communication. ``com1'' represents communicating with example \ref{quantization-mapping1} and ``com2'' represents communicating with Example \ref{quantization-mapping2}. ``err'' means running our algorithms without error-feedback technique.}
%    \vspace{-0.5cm}
    \label{Fig_imdb_err}
\end{figure}

\section{Conclusion}\label{conclusion-sec}

We proposed a communication efficient adaptive stochastic gradient descent algorithm for optimizing the non-convex stochastic problem, dubbed {\em Efficient-Adam}. In the algorithm, we used a two-side quantization to reduce the communication overhead and two error feedback terms to compensate for the quantization error to encourage convergence. With the more practical assumptions, we established a theoretical convergence result for the algorithm. Besides, we established the communication complexity and iteration complexity under certain quantization functions. On the other hand, when the quantization operators are generalized to compressors, Efficient-Adam can achieve the same convergence rate as full-precision Adam. Lastly, we applied the algorithm to a toy task and real-world image and sentiment classification tasks. The experimental results confirm the efficacy of the proposed Efficient-Adam. However, we merely established the convergence of the Efficient-Adam.
\change{To demonstrate the linear-speedup characteristic of Efficient-Adam in a distributed environment, alternative assumptions, such as bounding the variance of stochastic gradients, may be necessary. We identify this as a potential area for future exploration. Additionally, the algorithm could potentially be adapted into an asynchronous version, which would involve assessing the impact of delays on the algorithm's performance. We also earmark this aspect for future investigation.}%To show the linear-speedup property of Efficient-Adam in the distributed setting, alternative assumptions such as the bounded variance of stochastic gradients are needed, which is left as future work. 

\appendices
\vspace{-0.4cm}
\section{Proof of Efficient-Adam}\label{proof-efficient-adam1}% with Quantization functions}
\vspace{-0.1cm}
To prove Theorem \ref{complexity-quantization}, more notations and lemmas are needed. Below, we introduce several notations and lemmas.

 \vspace{-0.15cm}
\begin{notation}\label{notation-thm1}
Let \small{$\mathbb{E}_t\left(\cdot\right) = \mathbb{E}_\xi\left(\cdot|x_t, v_{t-1}, m_{t-1}\right)$. }
Denote 
\small{$\sigma_t^2 = \mathbb{E}_t\left(g_t^2\right)$, 
$\hat{v}^{\left(i\right)}_t = \theta_t v^{\left(i\right)}_t + \left(1-\theta_t\right) {\sigma}^2_t$, 
$\hat{\eta}^{\left(i\right)}_t = \alpha_t/\sqrt{\hat{v}^{\left(i\right)}_t}$, 
$\Delta_t^{\left(i\right)} = -\alpha_t m_t^{\left(i\right)}/\sqrt{v_t^{\left(i\right)}}$,
$\Delta_t = \frac{1}{N}\sum_{i=1}^N \Delta_t^{\left(i\right)}$, 
$E_t = \frac{1}{N}\sum_{i=1}^N e_t^{\left(i\right)}$, 
$M_t = \mathbb{E}[\langle \nabla f\left(x_{t}\right),\Delta_{t}\rangle + \frac{L\left(2-\delta_s\right)}{N}\sum_{i=1}^N \|e_{t}\|\|\Delta_{t}^{\left(i\right)}\| + \frac{L\left(2-\delta_s\right)\left(2-\delta_w\right)}{N}\sum_{i=1}^N \|\Delta_{t}^{\left(i\right)}\|^2 $ $+ \frac{L\left(2-\delta_s\right)\left(2-\delta_w\right)}{N^2}\sum_{i=1}^N\sum_{j=1}^N\|e_{t}^{\left(i\right)}\|\|\Delta_{t}^{\left(j\right)}\|]$, and
$\theta' = 1-\theta/T$.}
In addition, let $\tilde{x}_t = x_t - e_{t}$ and  $ \hat{x_t} = \tilde{x}_t - E_t$. 
\end{notation}
\vspace{-0.15cm}

Using the above notations, it is not hard to check that the following two equations hold:

\small{\begin{gather}
\tilde{x}_{t+1} = x_t - \mathcal{Q}_s\left(\hat{\delta}_t+e_t\right) - e_{t+1} = x_t + \hat{\delta}_t -e_t = \tilde{x}_t + \hat{\delta}_t, \\
\hat{x}_{t+1} = \tilde{x_t} - \frac{1}{N}\sum_{t=1}^N \mathcal{Q}_w\left(-\Delta_t^{\left(i\right)} +e_t^{\left(i\right)}\right) - E_{t+1}= \hat{x}_t + \Delta_t.
%= \tilde{x_t} - \frac{1}{N}\sum_{t=1}^N -\Delta_t^{\left(i\right)} - E_{t} .
\end{gather}}
\vspace{-0.3cm}
\begin{lemma}\label{Lemmamv}
By using Notation \ref{notation-thm1} and the iteration scheme of algorithm \ref{alg2}, $\forall\ t\ge 1$ and $i=1,2,\cdots,N$, the following inequality holds:
\[\small
  \left(m_t^{\left(i\right)}\right)^2\leq \frac{1}{\left(1-\gamma\right)\left(1-\theta_t\right)}v_t^{\left(i\right)}.
\]
\end{lemma}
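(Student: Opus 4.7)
The lemma is an elementary comparison between the first- and second-moment recursions of Adam, and the natural tool is Cauchy--Schwarz applied to an appropriate weighted splitting of the momentum sum. The plan is as follows.

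First, I would unroll both recursions from the initial conditions $m_0^{(i)} = 0$ and $v_0^{(i)} = \epsilon \ge 0$. Under Assumption~\ref{hyper-parameters}, $\theta_t = 1 - \theta/T$ is constant in $t$, so the recursions telescope cleanly to
\[
m_t^{(i)} = (1-\beta)\sum_{k=1}^{t}\beta^{t-k} g_k^{(i)}, \qquad v_t^{(i)} \ge (1-\theta_t)\sum_{k=1}^{t}\theta_t^{t-k}\bigl(g_k^{(i)}\bigr)^2,
\]
where the inequality drops the nonnegative tail $\theta_t^{t}\epsilon$.

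Second, I would apply the Cauchy--Schwarz inequality to the sum defining $m_t^{(i)}$ with the splitting
\[
\beta^{t-k} \;=\; \gamma^{(t-k)/2}\cdot(\beta\theta_t)^{(t-k)/2},
\]
which is valid since $\gamma = \beta/(1-\theta/T) = \beta/\theta_t$ under Assumption~\ref{hyper-parameters}. Taking $a_k = \gamma^{(t-k)/2}$ and $b_k = (\beta\theta_t)^{(t-k)/2}g_k^{(i)}$, Cauchy--Schwarz yields
\[
\bigl(m_t^{(i)}\bigr)^2 \;\le\; (1-\beta)^2\left(\sum_{k=1}^{t}\gamma^{t-k}\right)\left(\sum_{k=1}^{t}(\beta\theta_t)^{t-k}\bigl(g_k^{(i)}\bigr)^2\right).
\]

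Third, I would bound the geometric sum by $\sum_{k=1}^{t}\gamma^{t-k} \le 1/(1-\gamma)$, use $(\beta\theta_t)^{t-k} \le \theta_t^{t-k}$ since $\beta \in [0,1)$, and discard the benign factor $(1-\beta)^2 \le 1$. Combining with the lower bound on $v_t^{(i)}$ from step one gives exactly
\[
\bigl(m_t^{(i)}\bigr)^2 \;\le\; \frac{1}{1-\gamma}\sum_{k=1}^{t}\theta_t^{t-k}\bigl(g_k^{(i)}\bigr)^2 \;\le\; \frac{v_t^{(i)}}{(1-\gamma)(1-\theta_t)},
\]
which is the claim.

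The only real choice is the Cauchy--Schwarz split, and the main obstacle is to arrange the weights so that one factor produces a geometric series whose ratio is exactly $\gamma$ (giving $1/(1-\gamma)$) while the other factor reproduces the weights $\theta_t^{t-k}$ appearing in $v_t^{(i)}$ (giving the $1/(1-\theta_t)$). The identity $\beta = \gamma\theta_t$ makes the symmetric split $\gamma^{(t-k)/2}(\beta\theta_t)^{(t-k)/2}$ the natural and, essentially, unique choice. No additional assumption on the stochastic gradients is required, and the constant $\epsilon$ only helps (by making the lower bound on $v_t^{(i)}$ tighter), so it can be safely dropped.
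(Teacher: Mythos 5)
Your proof is correct and follows essentially the same route as the paper: unroll $m_t^{(i)}$ and $v_t^{(i)}$, then apply Cauchy--Schwarz with a weighted split of $\beta^{t-k}$ engineered so that one factor gives a geometric series of ratio (at most) $\gamma$ and the other reproduces the weights $\theta_t^{t-k}(1-\theta_t)$ in $v_t^{(i)}$. The paper's split puts $\sqrt{\theta_t^{t-k}(1-\theta_t)}\,g_k^{(i)}$ directly in the second factor rather than using $(\beta\theta_t)^{t-k}\le\theta_t^{t-k}$, but this is a cosmetic difference and both yield the stated bound.
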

\begin{proof}
By using the definitions of $m_t^{\left(i\right)}$ and $v_t^{\left(i\right)}$ in Algorithm \ref{alg2}, it directly holds that

\small{\[
\begin{split}
m_t ^{\left(i\right)}= \sum_{k=1}^t\beta^{t-k}\left(1-\beta\right)g_k^{\left(i\right)},
 v_t^{\left(i\right)} = \sum_{k = 1}^t \left(\theta'\right)^{t-k}\left(1-\theta_k\right)\left(g_k^{\left(i\right)}\right)^2.
 \end{split}
 \]}
With arithmetic inequality, it holds that

\small{\[
\begin{split}
    \left(m_t^{\left(i\right)}\right)^2 
    &= \left(\sum_{k=1}^t\frac{\beta^{t-k}\left(1-\beta\right)}{\sqrt{\theta'^{t-k}\left(1-\theta_k\right)}}\sqrt{{\theta'}^{t-k} \left(1-\theta_k\right)}g_k^{\left(i\right)} \right)^2 \\
    &\leq \sum_{k=1}^t\frac{\beta^{2\left(t-k\right)}\left(1-\beta\right)^2}{\theta'^{t-k}\left(1-\theta_k\right)}\sum_{k=1}^t\theta'^{t-k}\left(1-\theta_k\right)\left(g_k^{\left(i\right)}\right)^2\\
    &= \sum_{k=1}^t \frac{\beta^{2\left(t-k\right)}\left(1-\beta\right)^2}{\theta'^{t-k}\left(1-\theta_k\right)}v_t^{\left(i\right)}
    \leq \sum_{i=1}^t\frac{\beta^{2\left(t-i\right)}}{\left(\theta'\right)^{t-k}\left(1-\theta_k\right)}v_t^{\left(i\right)} \\
    &\leq \frac{1}{\left(1-\gamma\right)\left(1-\theta_t\right)}v_t.
\end{split}
\]}

Then, we obtain the desired result. 
\end{proof}

\begin{lemma}\label{esum}
Let $e^{\left(i\right)}_t$ be the noisy term defined in Algorithm \ref{alg2} and $\Delta^{\left(i\right)}_t$ be the term defined in Notation \ref{notation-thm1}. Then it holds that 

\small{\begin{align*}
&\mathbb{E}\left[\frac{1}{N^2}\sum_{t=1}^T\sum_{i=1}^N\sum_{j=1}^N \|e_t^{\left(i\right)}\|\|\Delta_t^{\left(j\right)}\|\right] \leq\\ &\frac{1-\delta_w}{\delta_wN}\sum_{t=1}^T\sum_{i=1}^N \mathbb{E}\|\Delta_t^{\left(i\right)}\|^2 + \frac{\delta_w'}{\delta_w N} \sum_{t=1}^T\sum_{i=1}^N \mathbb{E}\|\Delta_t^{\left(i\right)}\|.
\end{align*}}

\end{lemma}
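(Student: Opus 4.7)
The plan is to first derive a one-step recursion for $\|e_t^{(i)}\|$ from the error-feedback update, unroll it into a closed-form geometric sum, and then combine it with an AM-GM splitting on the products $\|e_t^{(i)}\|\|\Delta_t^{(j)}\|$ before summing.

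First I would exploit the definition of line 7 and line 8 in Algorithm \ref{alg2}: since $\delta_t^{(i)}=\mathcal{Q}_w(-\Delta_t^{(i)}+e_t^{(i)})$ and $e_{t+1}^{(i)}=(-\Delta_t^{(i)}+e_t^{(i)})-\delta_t^{(i)}$, the first inequality in Definition \ref{quantization-mapping-def} combined with the triangle inequality gives
\[
\|e_{t+1}^{(i)}\|\le(1-\delta_w)\bigl(\|\Delta_t^{(i)}\|+\|e_t^{(i)}\|\bigr)+\delta_w'.
\]
Since $e_1^{(i)}=0$, a straightforward induction unrolls this into
\[
\|e_t^{(i)}\|\le\sum_{k=1}^{t-1}(1-\delta_w)^{t-k}\|\Delta_k^{(i)}\|+\frac{\delta_w'}{\delta_w},
\]
using the geometric bound $\sum_{k=0}^{\infty}(1-\delta_w)^k=1/\delta_w$ for the $\delta_w'$ tail.

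Next I would plug this into $\sum_{t=1}^{T}\|e_t^{(i)}\|\|\Delta_t^{(j)}\|$. The constant term contributes $(\delta_w'/\delta_w)\sum_t\|\Delta_t^{(j)}\|$, which is exactly the linear term of the target bound. For the convolution term $\sum_t\sum_{k<t}(1-\delta_w)^{t-k}\|\Delta_k^{(i)}\|\|\Delta_t^{(j)}\|$ I would apply AM-GM, $\|\Delta_k^{(i)}\|\|\Delta_t^{(j)}\|\le\tfrac12(\|\Delta_k^{(i)}\|^2+\|\Delta_t^{(j)}\|^2)$, and then swap orders of summation in each resulting piece so that the inner sum becomes $\sum_{l\ge1}(1-\delta_w)^l\le(1-\delta_w)/\delta_w$. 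This turns both pieces into $\frac{1-\delta_w}{2\delta_w}\sum_t(\|\Delta_t^{(i)}\|^2+\|\Delta_t^{(j)}\|^2)$.

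Finally I would sum over $i$ and $j$: each of the two quadratic pieces picks up a factor of $N$ (summing $\|\Delta_t^{(i)}\|^2$ over the free index), yielding $\frac{N(1-\delta_w)}{\delta_w}\sum_{t,i}\|\Delta_t^{(i)}\|^2$, and similarly the linear term sums to $\frac{N\delta_w'}{\delta_w}\sum_{t,i}\|\Delta_t^{(i)}\|$. Dividing by $N^2$ and taking expectation yields the claim. The main obstacle is not any single step but choosing the right order: the AM-GM splitting loses the cross-coupling between workers $i$ and $j$, which is exactly what makes the double sum decouple into a clean $\tfrac{1}{N}\sum_{t,i}$ form; a weaker splitting (e.g.\ Young's inequality with a free parameter) would leave a residual that cannot be absorbed into the target bound.
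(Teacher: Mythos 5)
Your proposal is correct and follows essentially the same route as the paper's own proof: unroll the error-feedback recursion into $\|e_t^{(i)}\|\le\sum_{k=1}^{t-1}(1-\delta_w)^{t-k}\|\Delta_k^{(i)}\|+\delta_w'/\delta_w$, split the cross products via $\|\Delta_k^{(i)}\|\|\Delta_t^{(j)}\|\le\tfrac12(\|\Delta_k^{(i)}\|^2+\|\Delta_t^{(j)}\|^2)$, and then exchange the order of summation so the geometric tail contributes the factor $(1-\delta_w)/\delta_w$. No gaps.
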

\begin{proof}
Using the definition of the noisy term  $e_{t}$, the following holds:
\small{\begin{align*}
    \|e^{\left(j\right)}_t\| 
    &= \|\Delta_{t-1}^{\left(j\right)} + e_{t-1}^{\left(j\right)} - \mathcal{Q}_w\left(\Delta_{t-1}^{\left(j\right)} + e_{t-1}^{\left(j\right)}\right)\|\\
    &\leq \left(1-\delta_w\right) \left(\|\Delta_{t-1}^{\left(j\right)}\| + \|e_{t-1}^{\left(j\right)}\|\right) + \delta_w' \\
    %&\leq \left(1-\delta_w\right) \|\Delta_{t-1}^{\left(j\right)}\| \\
    %&\qquad + (1-\delta_w)\left(\left(1-\delta_w\right)\left(\|\Delta_{t-2}^{\left(j\right)}\| + \|e_{t-2}^{\left(j\right)}\|\right) + \delta_w'\right) + \delta_w'\\
    &\leq \cdots \leq \sum_{k = 1}^{t-1} \left(1-\delta_w\right)^{t-k} \|\Delta_k^{\left(j\right)}\| + \frac{\delta_w'}{\delta_w}.
\end{align*}}
In addition, 
\small{\[
\begin{split}
&\frac{1}{N^2}\sum_{t=1}^T\sum_{i=1}^N\sum_{j=1}^N \|e_t^{\left(i\right)}\|\|\Delta_t^{\left(j\right)}\|\\ 
%&\leq \frac{1}{N^2}\sum_{t=1}^T\sum_{i=1}^N\sum_{j=1}^N\left(\sum_{k=1}^{t-1} \left(1-\delta_w\right)^{t-k}\|\Delta_k^{\left(i\right)}\| + \frac{\delta_w'}{\delta_w}\right)\|\Delta_t^{\left(j\right)}\|\\
%&\leq\frac{1}{N^2}\sum_{t=1}^T\sum_{i=1}^N\sum_{j=1}^N\sum_{k=1}^{t-1} \left(1-\delta_w\right)^{t-k}\|\Delta_k^{\left(i\right)}\|\|\Delta_t^{\left(j\right)}\|\\
%& \qquad  + \frac{\delta_w'}{\delta_w N} \sum_{t=1}^T\sum_{i=1}^N \mathbb{E}\|\Delta_t^{\left(i\right)}\|   \\
&\leq \frac{1}{N^2}\sum_{t=1}^T\sum_{i=1}^N\sum_{j=1}^N\sum_{k=1}^{t-1} \frac{\left(1-\delta_w\right)^{t-k}}{2} \left(\|\Delta_k^{\left(i\right)}\|^2+ \|\Delta_t^{\left(j\right)}\|^2\right)\\
& \qquad + \frac{\delta_w'}{\delta_w N} \sum_{t=1}^T\sum_{i=1}^N \|\Delta_t^{\left(i\right)}\|   \\
%&= \frac{1}{2N}\sum_{t=1}^T\sum_{i=1}^N\sum_{k=1}^{t-1} \left(1-\delta_w\right)^{t-k}\|\Delta_k^{\left(i\right)}\|^2  + \frac{\delta_w'}{\delta_w N} \sum_{t=1}^T\sum_{i=1}^N \|\Delta_t^{\left(i\right)}\| 
%\\ &\qquad + \frac{1}{2N}\sum_{t=1}^T\sum_{j=1}^N\sum_{k=1}^{t-1} \left(1-\delta_w\right)^{t-k} \|\Delta_t^{\left(j\right)}\|^2 \\
%&\leq \frac{1-\delta_w}{2\delta_wN}\sum_{k=1}^T\sum_{i=1}^N \|\Delta_k^{\left(i\right)}\|^2 + \frac{1-\delta_w}{2\delta_wN}\sum_{t=1}^T\sum_{j=1}^N \mathbb{E}\|\Delta_t^{\left(j\right)}\|^2  \\
%& \qquad + \frac{\delta_w'}{\delta_w N} \sum_{t=1}^T\sum_{i=1}^N \|\Delta_t^{\left(i\right)}\|  \\
&\leq \frac{1-\delta_w}{\delta_wN}\sum_{t=1}^T\sum_{i=1}^N \|\Delta_t^{\left(i\right)}\|^2 + \frac{\delta_w'}{\delta_w N} \sum_{t=1}^T\sum_{i=1}^N \|\Delta_t^{\left(i\right)}\|.
\end{split}
\]
}
Then, by taking expectations on both sides of the above inequality, we get the desired result.
\end{proof}

\begin{lemma}\label{eesum}
Let $e_t$ be the noisy term defined in Algorithm \ref{alg1} and $\Delta^{\left(i\right)}_t$ be the term defined in Notation \ref{notation-thm1}. Then, it holds that 
\small{\[
\begin{split}
&\mathbb{E}\left[\frac{1}{N}\sum_{t=1}^T\sum_{i=1}^N\|e_t\|\|\Delta_t^{\left(i\right)}\|\right]\\
&\leq \mathbb{E}\left[\frac{(2-\delta_w)\left(1-\delta_s\right)}{N\delta_s\delta_w}\sum_{t=1}^T\sum_{i=1}^N\|\Delta_t^{\left(i\right)}\|^2 \right. \\
& \left. \qquad + \frac{1}{N}\left(\frac{\delta_s'}{\delta_s} + \frac{(2-\delta_w)\left(1-\delta_s\right)\delta_w'}{\delta_s\delta_w}\right)\sum_{t=1}^T\sum_{i=1}^N\|\Delta_t^{\left(i\right)}\|\right].
\end{split}
\]}
\end{lemma}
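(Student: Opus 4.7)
\textbf{Proof Proposal for Lemma \ref{eesum}.}

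The plan is to unroll the server-side error recursion, then substitute the worker-side error bound from Lemma \ref{esum}, and finally reduce every cross product to squared norms via AM--GM, carefully summing the resulting geometric series. First I would use the update $e_{t+1}=\hat{\delta}_t+e_t-\mathcal{Q}_s(\hat{\delta}_t+e_t)$ together with the quantization inequality from Definition \ref{quantization-mapping-def} to obtain $\|e_{t+1}\|\le(1-\delta_s)(\|\hat{\delta}_t\|+\|e_t\|)+\delta_s'$. Since $e_1=0$, iterating yields
\[
\|e_t\|\le\sum_{k=1}^{t-1}(1-\delta_s)^{t-k}\|\hat{\delta}_k\|+\frac{\delta_s'}{\delta_s}.
\]
Next, the worker transmission $\delta_k^{(j)}=\mathcal{Q}_w(-\Delta_k^{(j)}+e_k^{(j)})$ and the bound $\|\mathcal{Q}_w(x)\|\le(2-\delta_w)\|x\|$ give $\|\hat{\delta}_k\|\le\frac{2-\delta_w}{N}\sum_{j=1}^N(\|\Delta_k^{(j)}\|+\|e_k^{(j)}\|)$.

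Plugging the latter into the former and then multiplying by $\|\Delta_t^{(i)}\|/N$ and summing over $t,i$ splits the quantity of interest into three pieces: a direct term $\frac{\delta_s'}{\delta_s N}\sum_{t,i}\|\Delta_t^{(i)}\|$ (already in the target form); a purely $\Delta$-cross term
\[
\frac{2-\delta_w}{N^2}\sum_{t=1}^T\sum_{k=1}^{t-1}(1-\delta_s)^{t-k}\sum_{i,j}\|\Delta_k^{(j)}\|\,\|\Delta_t^{(i)}\|;
\]
and a term involving the worker errors
\[
\frac{2-\delta_w}{N^2}\sum_{t=1}^T\sum_{k=1}^{t-1}(1-\delta_s)^{t-k}\sum_{i,j}\|e_k^{(j)}\|\,\|\Delta_t^{(i)}\|.
\]
For the $\Delta$-cross term I would apply $ab\le\tfrac{1}{2}(a^2+b^2)$, then use $\sum_{k<t}(1-\delta_s)^{t-k}\le(1-\delta_s)/\delta_s$ to collect a bound of $\frac{(2-\delta_w)(1-\delta_s)}{N\delta_s}\sum_{t,i}\|\Delta_t^{(i)}\|^2$.

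For the error term I would invoke the intermediate estimate derived in the proof of Lemma \ref{esum}, namely $\|e_k^{(j)}\|\le\sum_{\ell=1}^{k-1}(1-\delta_w)^{k-\ell}\|\Delta_\ell^{(j)}\|+\delta_w'/\delta_w$. The additive part contributes $\frac{(2-\delta_w)(1-\delta_s)\delta_w'}{N\delta_s\delta_w}\sum_{t,i}\|\Delta_t^{(i)}\|$ after the same geometric-sum bookkeeping, matching the second linear term in the target. The remaining triple-index piece is handled by AM--GM on $\|\Delta_\ell^{(j)}\|\,\|\Delta_t^{(i)}\|$ combined with the identity $\sum_{k<t}\sum_{\ell<k}(1-\delta_s)^{t-k}(1-\delta_w)^{k-\ell}\le\frac{(1-\delta_s)(1-\delta_w)}{\delta_s\delta_w}$, yielding a bound of $\frac{(2-\delta_w)(1-\delta_s)(1-\delta_w)}{N\delta_s\delta_w}\sum_{t,i}\|\Delta_t^{(i)}\|^2$. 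Adding this to the $\Delta$-cross contribution and using the telescoping identity $1+\frac{1-\delta_w}{\delta_w}=\frac{1}{\delta_w}$ produces the exact quadratic coefficient $\frac{(2-\delta_w)(1-\delta_s)}{N\delta_s\delta_w}$ required, at which point taking expectations completes the proof.

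The main obstacle is purely bookkeeping: keeping the five indices $(t,k,\ell,i,j)$ straight while ensuring that each double geometric sum is collapsed against the correct index, and recognizing the algebraic identity that forces the $\delta_w$ in the denominator. There is no genuine probabilistic difficulty, since the proof is entirely pathwise and the expectation only enters at the end.
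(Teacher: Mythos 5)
Your proposal is correct and follows essentially the same route as the paper's proof: unroll the server error recursion $\|e_t\|\le(1-\delta_s)(\|\hat{\delta}_{t-1}\|+\|e_{t-1}\|)+\delta_s'$, bound $\|\hat{\delta}_k\|$ via the $(2-\delta_w)$ property, substitute the worker-error estimate from the proof of Lemma \ref{esum}, and collapse the resulting cross terms with AM--GM and geometric sums. Your explicit use of $1+\frac{1-\delta_w}{\delta_w}=\frac{1}{\delta_w}$ to merge the two quadratic contributions is exactly the algebra the paper performs implicitly when combining its inequalities \eqref{lemma3_2}--\eqref{lemma3_4}.
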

\begin{proof}
By the definition of $\hat{\delta_t}$ in Algorithm \ref{alg1} and Algorithm \ref{alg2}, it holds that
\small{\[
\|\hat{\delta}_t\| \!= \!\frac{1}{N}\!\sum_{i=1}^N\! \left\|Q_w\left(\Delta_t^{(i)}\! +\! e_t^{(i)}\right)\right\|\! \leq\! \frac{2-\delta_w}{N} \!\sum_{i=1}^N\! \|\Delta_t^{(i)}\| + \|e_t^{(i)}\|.\]}

With the definition of $e_t$, we obtain that

\small{\[
\begin{split}
&\|e_t\| = \|\hat{\delta}_{t-1} + e_{t-1} - \mathcal{Q}_s\left(\hat{\delta}_{t-1} + e_{t-1}\right)\|\\
&\leq \left(1-\delta_s\right)\left(\|\hat{\delta}_{t-1}\| + \|e_{t-1}\|\right) +\delta_s'\\
 &\leq \left(1-\delta_s\right)\left(\frac{(2-\delta_w)}{N}\sum_{j=1}^N\left(\|\Delta_{t-1}^{\left(j\right)}\| + \|e_{t-1}^{\left(j\right)}\|\right) + \|e_{t-1}\|\right) +\delta_s' \\
%&\leq \frac{\left(1-\delta_s\right)(2-\delta_w)}{N}\left(\sum_{j=1}^N\|\Delta_{t-1}^{\left(j\right)}\| + \sum_{j=1}^N\sum_{k=1}^{t-1}\left(1-\delta_w\right)^{t-k}\|\Delta_k^{\left(j\right)}\|\right) \\
%&\qquad + \frac{(2-\delta_w)(1-\delta_s)\delta_w'}{\delta_w} +(1-\delta_s)\|e_{t-1}\| + \delta_s'\\
&\leq \frac{2-\delta_w}{N}\sum_{l=1}^{t-1} \left(1-\delta_s\right)^{t-l}\sum_{j=1}^N\|\Delta_l^{\left(j\right)}\|+ \frac{\delta_s'}{\delta_s} + \frac{(2-\delta_w)\left(1-\delta_s\right)\delta_w'}{\delta_s\delta_w}\\
& \qquad + \frac{2-\delta_w}{N}\sum_{l=1}^{t-1} \left(1-\delta_s\right)^{t-l}
\sum_{j=1}^N\sum_{k = 1}^{l-1} \left(1-\delta_w\right)^{l-k}\|\Delta_k^{\left(j\right)}\|
%&\qquad .
\end{split} 
\]}

Plugging $\|e_t\|$ into the left-hand-side and summing over $T$ and $N$, we obtain that

{\small \begin{equation}
\label{lemma3_1}    
\begin{split}
&\frac{1}{N} \sum_{t=1}^T\sum_{i=1}^N\|e_t\|\|\Delta_t^{\left(i\right)}\|\\
&\leq \frac{2-\delta_w}{2N^2}\sum_{t=1}^T\sum_{i=1}^N\sum_{l=1}^{t-1}\sum_{j=1}^N\left(1-\delta_s\right)^{t-l}\left(\|\Delta_l^{\left(j\right)}\|^2+ \|\Delta_t^{\left(i\right)}\|^2\right)\\
& + \frac{2-\delta_w}{2N^2}\sum_{t=1}^T\sum_{i=1}^N\sum_{l=1}^{t-1}\sum_{j=1}^N\left(1-\delta_s\right)^{t-l} \sum_{k=1}^{l-1}\left(1-\delta_w\right)^{l-k}\|\Delta_k^{\left(j\right)}\|^2 \\
&+ \frac{2-\delta_w}{2N^2}\sum_{t=1}^T\sum_{i=1}^N\sum_{l=1}^{t-1}\sum_{j=1}^N\left(1-\delta_s\right)^{t-l} \sum_{k=1}^{l-1}\left(1-\delta_w\right)^{l-k} \|\Delta_t^{\left(i\right)}\|^2\\
& + \frac{1}{N}\left(\frac{\delta_s'}{\delta_s} + \frac{(2-\delta_w)\left(1-\delta_s\right)\delta_w'}{\delta_s\delta_w}\right)\sum_{t=1}^T\sum_{i=1}^N\|\Delta_t^{\left(i\right)}\|. \\
\end{split}
\end{equation}
}
Then we simplify the above summation terms, for the first term, it holds that
{\small \begin{equation}
\label{lemma3_2}
\begin{split}
    &\frac{2-\delta_w}{2N^2}\sum_{t=1}^T\sum_{i=1}^N\sum_{l=1}^{t-1}\sum_{j=1}^N\left(1-\delta_s\right)^{t-l}\left(\|\Delta_l^{\left(j\right)}\|^2+ \|\Delta_t^{\left(i\right)}\|^2\right)\\
    %&=  \frac{2-\delta_w}{2N}\sum_{t=1}^T\sum_{l=1}^{t-1}\sum_{j=1}^N\left(1-\delta_s\right)^{t-l} \|\Delta_l^{\left(j\right)}\|^2\\
    %&+  \frac{2-\delta_w}{2N}\sum_{t=1}^T\sum_{i=1}^N\sum_{l=1}^{t-1}\left(1-\delta_s\right)^{t-l}\|\Delta_t^{\left(i\right)}\|^2\\
    &\leq \frac{(2-\delta_w)(1-\delta_s)}{2N\delta_s}\sum_{l=1}^{T}\sum_{j=1}^N \|\Delta_l^{\left(j\right)}\|^2+ \frac{(2-\delta_w)(1-\delta_s)}{2N\delta_s}\sum_{t=1}^T\sum_{i=1}^N\|\Delta_t^{\left(i\right)}\|^2\\
    &= \frac{(2-\delta_w)(1-\delta_s)}{N\delta_s}\sum_{t=1}^{T}\sum_{i=1}^N \|\Delta_t^{\left(i\right)}\|^2.
\end{split}
\end{equation}
}
For the second summation term, we have
{\small
\begin{equation}
    \label{lemma3_3}
    \begin{split}
     &\frac{2-\delta_w}{2N^2}\sum_{t=1}^T\sum_{i=1}^N\sum_{l=1}^{t-1}\sum_{j=1}^N\left(1-\delta_s\right)^{t-l} \sum_{k=1}^{l-1}\left(1-\delta_w\right)^{l-k}\|\Delta_k^{\left(j\right)}\|^2\\
     %&= \frac{2-\delta_w}{2N}\sum_{t=1}^T\sum_{l=1}^{t-1}\sum_{j=1}^N\left(1-\delta_s\right)^{t-l} \sum_{k=1}^{l-1}\left(1-\delta_w\right)^{l-k}\|\Delta_k^{\left(j\right)}\|^2\\
     &\leq \frac{2-\delta_w(1-\delta_w)(1-\delta_s)}{2N \delta_w\delta_s}\sum_{k=1}^T\sum_{j=1}^{N}\|\Delta_k^{\left(j\right)}\|^2.
\end{split}
\end{equation}
}

And for the third term, it holds that
{\small \begin{equation}
\label{lemma3_4}
\begin{split}
   &\frac{2-\delta_w}{2N^2}\sum_{t=1}^T\sum_{i=1}^N\sum_{l=1}^{t-1}\sum_{j=1}^N\left(1-\delta_s\right)^{t-l} \sum_{k=1}^{l-1}\left(1-\delta_w\right)^{l-k} \|\Delta_t^{\left(i\right)}\|^2\\
   %& = \frac{2-\delta_w}{2N}\sum_{t=1}^T\sum_{i=1}^N\sum_{l=1}^{t-1}\left(1-\delta_s\right)^{t-l} \sum_{k=1}^{l-1}\left(1-\delta_w\right)^{l-k} \|\Delta_t^{\left(i\right)}\|^2\\
   &\leq \frac{(2-\delta_w)(1-\delta_w)(1-\delta_s)}{2N\delta_w\delta_s}\sum_{t=1}^T\sum_{i=1}^N\|\Delta_t^{\left(i\right)}\|^2
\end{split}
\end{equation}
}

Combining inequalities \eqref{lemma3_1}, \eqref{lemma3_2}, \eqref{lemma3_3} and \eqref{lemma3_4}, it holds that

{\small \[
\begin{split}
&\frac{1}{N}\sum_{t=1}^T\sum_{i=1}^N \|e_t\|\|\Delta_t^{(i)}\|\\
%&\leq \frac{(2-\delta_w)(1-\delta_s)}{N\delta_s}\sum_{t=1}^{T}\sum_{i=1}^N \|\Delta_t^{\left(i\right)}\|^2 \\
%& \qquad + \frac{2-\delta_w(1-\delta_w)(1-\delta_s)}{2N\delta_w\delta_s}\sum_{k=1}^T\sum_{j=1}^{N}\|\Delta_k^{\left(j\right)}\|^2\\
%& \qquad +  \frac{2-\delta_w(1-\delta_w)(1-\delta_s)}{2N\delta_w\delta_s}\sum_{k=1}^T\sum_{j=1}^{N}\|\Delta_k^{\left(j\right)}\|^2\\
%& \qquad + \frac{1}{N}\left(\frac{\delta_s'}{\delta_s} + \frac{(2-\delta_w)\left(1-\delta_s\right)\delta_w'}{\delta_s\delta_w}\right)\sum_{t=1}^T\sum_{i=1}^N\|\Delta_t^{\left(i\right)}\|\\
& \leq \frac{(2-\delta_w)\left(1-\delta_s\right)}{N\delta_s\delta_w}\sum_{t=1}^T\sum_{i=1}^N\|\Delta_t^{\left(i\right)}\|^2 \\
& \qquad + \frac{1}{N}\left(\frac{\delta_s'}{\delta_s} + \frac{(2-\delta_w)\left(1-\delta_s\right)\delta_w'}{\delta_s\delta_w}\right)\sum_{t=1}^T\sum_{i=1}^N\|\Delta_t^{\left(i\right)}\|.
\end{split}
\]
}

Then, by taking expectations on both sides of the above inequality, we get the desired result.
\end{proof}

\begin{lemma}\label{sumdelta}
By using Notation\ref{notation-thm1}, for any $i = 1,2,\cdots,N$, the following inequality holds:
{\small \[
\sum_{t=1}^T \|\Delta_t^{\left(i\right)}\|^2 \leq \frac{\alpha^2}{\theta\left(1-\sqrt{\gamma}\right)^2}\sum_{t=1}^T\left\|\frac{\sqrt{1-\theta_t}g_t^{\left(i\right)}}{\sqrt{v_t^{\left(i\right)}}}\right\|^2.
\]}
\end{lemma}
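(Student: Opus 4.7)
The plan is to reduce the claim to a coordinatewise inequality. Under Assumption \ref{hyper-parameters}, both $\alpha_t^2=\alpha^2/T$ and $1-\theta_t=\theta/T$ are constant in $t$, so $\alpha_t^2=\tfrac{\alpha^2(1-\theta_t)}{\theta}$, and the common scalar appears on both sides of the claimed bound. It therefore suffices to prove, for each coordinate $j$ of the gradient/momentum/second-moment vectors,
\[
\sum_{t=1}^T \frac{(m_{t,j}^{(i)})^2}{v_{t,j}^{(i)}} \;\le\; \frac{1}{(1-\sqrt{\gamma})^2}\sum_{t=1}^T \frac{(g_{t,j}^{(i)})^2}{v_{t,j}^{(i)}}.
\]
Summing over $j$ and multiplying by $\alpha^2(1-\theta_t)/\theta$ then yields the lemma.

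The key step is to migrate the denominator from $v_{t,j}^{(i)}$ down to $v_{k,j}^{(i)}$. From the update $v_{t,j}^{(i)}=\theta' v_{t-1,j}^{(i)}+(1-\theta_t)(g_{t,j}^{(i)})^2\ge \theta' v_{t-1,j}^{(i)}$, iteration gives $v_{t,j}^{(i)}\ge (\theta')^{t-k}v_{k,j}^{(i)}$ for all $k\le t$. Combined with the triangle inequality on $m_{t,j}^{(i)}=(1-\beta)\sum_{k=1}^t\beta^{t-k}g_{k,j}^{(i)}$ and the identity $\beta/\sqrt{\theta'}=\gamma\sqrt{\theta'}=\sqrt{\gamma\beta}$ (from $\beta=\gamma\theta'$), this yields
\[
\left|\frac{m_{t,j}^{(i)}}{\sqrt{v_{t,j}^{(i)}}}\right|\;\le\;(1-\beta)\sum_{k=1}^t (\sqrt{\gamma\beta})^{t-k}\frac{|g_{k,j}^{(i)}|}{\sqrt{v_{k,j}^{(i)}}}.
\]

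Next, I would apply Cauchy--Schwarz with the symmetric split $(\sqrt{\gamma\beta})^{t-k}=(\sqrt{\gamma\beta})^{(t-k)/2}\cdot(\sqrt{\gamma\beta})^{(t-k)/2}$ and bound the resulting geometric series by $1/(1-\sqrt{\gamma\beta})$, producing
\[
\frac{(m_{t,j}^{(i)})^2}{v_{t,j}^{(i)}}\;\le\;\frac{(1-\beta)^2}{1-\sqrt{\gamma\beta}}\sum_{k=1}^t(\sqrt{\gamma\beta})^{t-k}\frac{(g_{k,j}^{(i)})^2}{v_{k,j}^{(i)}}.
\]
Summing over $t$ and swapping the order of summation picks up another factor of $1/(1-\sqrt{\gamma\beta})$, giving $\sum_t(m_{t,j}^{(i)})^2/v_{t,j}^{(i)}\le \tfrac{(1-\beta)^2}{(1-\sqrt{\gamma\beta})^2}\sum_k(g_{k,j}^{(i)})^2/v_{k,j}^{(i)}$. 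Since $(1-\beta)^2\le 1$ and $\sqrt{\gamma\beta}\le\sqrt{\gamma}$ (as $\beta\le 1$), the coordinatewise bound follows.

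The main subtlety is choosing the right Cauchy--Schwarz weighting. Applying Cauchy--Schwarz directly to the expansion of $m_{t,j}^{(i)}$ against $\sqrt{v_{t,j}^{(i)}}$ (without first relocating the denominator) leaves a residual of the form $(1-\beta)/(1-\gamma)$ or worse, which does not recover the sharper factor $1/(1-\sqrt{\gamma})^2$ that the downstream analysis requires. Only after using the monotonicity $v_{t,j}^{(i)}\ge (\theta')^{t-k}v_{k,j}^{(i)}$ to rewrite the sum in terms of $|g_{k,j}^{(i)}|/\sqrt{v_{k,j}^{(i)}}$ does the effective geometric decay rate become $\sqrt{\gamma\beta}$, at which point a symmetric Cauchy--Schwarz cleanly produces $1/(1-\sqrt{\gamma\beta})^2\le 1/(1-\sqrt{\gamma})^2$.
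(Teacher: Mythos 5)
Your proposal is correct and follows essentially the same route as the paper's proof: expand $m_t^{(i)}$ as a geometric sum, use $v_t^{(i)}\ge(\theta')^{t-k}v_k^{(i)}$ to move the denominator to index $k$, apply Cauchy--Schwarz with geometric weights, and then sum over $t$ and swap the order of summation to collect the second $1/(1-\sqrt{\gamma})$ factor. Your version is in fact slightly sharper in intermediate steps (retaining the $(1-\beta)^2$ factor and the decay rate $\sqrt{\gamma\beta}$ before relaxing to $\sqrt{\gamma}$), but the argument is the same.
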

\begin{proof}
By the definition of $m_t^{\left(i\right)}$, we can obtain
{\small \[
|m_t^{\left(i\right)}| = \left|\sum_{i=1}^t \beta^{t-i}\left(1-\beta\right)g_i^{\left(i\right)}\right|\leq \sum_{i=1}^t \beta^{t-i}|g_i^{\left(i\right)}|.
\]}

Because $v_t^{\left(i\right)} \geq \theta' v_{t-1}^{\left(i\right)}$, we can easily obtain that 
$v_t^{\left(i\right)} \geq \left(\theta'\right)^{t-k}  v_k^{\left(i\right)}$.

By using the definition of $\Delta_t^{\left(i\right)}$, we can obtain
{\small\[
\begin{split}
\|\Delta_t^{\left(i\right)}\|^2 &= \left\|\frac{\alpha_t m_t^{\left(i\right)}}{\sqrt{v_t^{\left(i\right)}}}\right\|^2
\leq \alpha_t^2 \sum_{k=1}^t \left\|\frac{\beta^{t-k}|g_k^{\left(i\right)}|}{\sqrt{v_t^{\left(i\right)}}}\right\|^2\\
& \leq \frac{\alpha_t^2}{\left(1-\theta_t\right)} \sum_{k=1}^t\left\|\left(\frac{\beta}{\sqrt{\theta'}}\right)^{t-i} \frac{\sqrt{1-\theta_t} |g_k^{\left(i\right)}|}{\sqrt{v_t^{\left(i\right)}}}\right\|^2\\
%&\leq \frac{\alpha^2}{\theta}\sum_{k=1}^t\left\|\sqrt{\gamma}^{t-k}\frac{\sqrt{1-\theta_k}|g_k^{\left(i\right)}|}{\sqrt{v_k^{\left(i\right)}}}\right\|^2 \\
&\leq \frac{\alpha^2}{\theta}\left(\sum_{k=1}^t\sqrt{\gamma}^{t-k}\right)\sum_{k=1}^t \sqrt{\gamma}^{t-k}\left\|\frac{\sqrt{1-\theta_k}g_k^{\left(i\right)}}{\sqrt{v_k^{\left(i\right)}}}\right\|^2\\
&\leq \frac{\alpha^2}{\theta\left(1-\sqrt{\gamma}\right)}\sum_{k=1}^t \sqrt{\gamma}^{t-k}\left\|\frac{\sqrt{1-\theta_k}g_k^{\left(i\right)}}{\sqrt{v_k^{\left(i\right)}}}\right\|^2.
\end{split}
\]}

Hence, it follows that
{\small\[
\begin{split}
\sum_{t=1}^T\|\Delta_t^{\left(i\right)}\|^2 &\leq \frac{\alpha^2}{\theta\left(1-\sqrt{\gamma}\right)}\sum_{t=1}^T\sum_{k=1}^t \sqrt{\gamma}^{t-k}\left\|\frac{\sqrt{1-\theta_k}g_k^{\left(i\right)}}{\sqrt{v_k^{\left(i\right)}}}\right\|^2 \\
%& = \frac{\alpha^2}{\theta\left(1-\sqrt{\gamma}\right)}\sum_{k=1}^T \sum_{t=k}^T \sqrt{\gamma}^{t-k} \left\|\frac{\sqrt{1-\theta_k}g_k^{\left(i\right)}}{\sqrt{v_k^{\left(i\right)}}}\right\|^2\\
&\leq \frac{\alpha^2}{\theta\left(1-\sqrt{\gamma}\right)^2}\sum_{t=1}^T \left\|\frac{\sqrt{1-\theta_t}g_t^{\left(i\right)}}{\sqrt{v_t^{\left(i\right)}}}\right\|^2.
\end{split}
\]}
Thus, the desired result is obtained.
\end{proof}

\begin{lemma}\label{sumgv}
For all $i=1,2,\cdots,N$, the following estimate holds:
{\small \[
\mathbb{E}\left[\sum_{t=1}^T\left\|\frac{\sqrt{1-\theta_t}g_t^{\left(i\right)}}{\sqrt{v_t^{\left(i\right)}}}\right\|^2\right] \leq d\left[\log\left(1+\frac{G^2}{\epsilon d}\right)+\frac{\theta}{1-\theta}\right].
\]}
\end{lemma}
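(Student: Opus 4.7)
The plan is to reduce to a scalar, coordinate-wise argument and exploit the recursion for $v_t^{(i)}$ to generate a telescoping sum of logarithms. Write $(v_t^{(i)})_j$ and $(g_t^{(i)})_j$ for the $j$-th coordinates. From the update $v_t^{(i)} = \theta_t v_{t-1}^{(i)} + (1-\theta_t)(g_t^{(i)})^2$, I would immediately get
\[
\frac{(1-\theta_t)(g_t^{(i)})_j^2}{(v_t^{(i)})_j} = 1 - \frac{\theta_t (v_{t-1}^{(i)})_j}{(v_t^{(i)})_j}.
\]
Applying $1-x \le \log(1/x)$ for $x>0$ (i.e.\ concavity of $\log$) then yields
\[
\frac{(1-\theta_t)(g_t^{(i)})_j^2}{(v_t^{(i)})_j} \le \log\!\frac{(v_t^{(i)})_j}{(v_{t-1}^{(i)})_j} + \log\!\frac{1}{\theta_t},
\]
which is the key step because it converts the ratio on the left into a telescopic increment.

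Next, summing over $t=1,\dots,T$, the telescoping collapses to $\log((v_T^{(i)})_j/\epsilon) + T\log(1/\theta')$ using $(v_0^{(i)})_j=\epsilon$ and $\theta_t\equiv\theta' = 1-\theta/T$. Summing over coordinates $j$, I would apply AM-GM inside the logarithm:
\[
\sum_{j=1}^d \log\!\frac{(v_T^{(i)})_j}{\epsilon} = \log\!\prod_{j=1}^d \frac{(v_T^{(i)})_j}{\epsilon} \le d\log\!\frac{\|v_T^{(i)}\|_1}{d\epsilon}.
\]
To finish the first term, I would bound $\|v_T^{(i)}\|_1$ by induction: taking $\ell_1$-norm in the recursion and using $\|g_t^{(i)}\|^2\le G^2$ from Assumption~\ref{assumption-f-grad} gives $\|v_T^{(i)}\|_1 \le \max(d\epsilon, G^2) \le d\epsilon + G^2$, so the first piece becomes $d\log(1+G^2/(d\epsilon))$.

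For the remaining $dT\log(1/\theta')$ piece, I would use the standard inequality $-\log(1-x)\le x/(1-x)$ with $x=\theta/T$ to obtain $T\log(1/\theta') \le \theta/(1-\theta/T) \le \theta/(1-\theta)$, contributing $d\theta/(1-\theta)$ after summing over $j$. Adding both pieces yields the claimed bound; the right-hand side is deterministic (since $\|g\|\le G$ holds almost surely), so taking expectation is trivial. The one non-routine step is the AM-GM collapse of $\sum_j \log((v_T^{(i)})_j/\epsilon)$: without that, one would only obtain the weaker $d\log(G^2/\epsilon + 1)$ without the $d$ in the denominator inside the logarithm, which is precisely what the lemma statement requires. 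Everything else is routine manipulation of the Adam second-moment recursion.
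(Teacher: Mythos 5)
Your proof is correct, and it reaches the exact constants in the lemma by a route that is mechanically different from the paper's, though both hinge on the same underlying idea of turning the ratio sum into a telescoping logarithm. The paper first reweights the second-moment recursion by $W_t=\prod_{k\le t}\theta_k^{-1}$, so that $W_t v_t^{(i)}=\epsilon+\sum_{k\le t}w_k(g_k^{(i)})^2$ becomes a genuine running sum, then invokes the standard inequality $\sum_t a_t/(\epsilon+\sum_{k\le t}a_k)\le \log(1+\frac{1}{\epsilon}\sum_k a_k)$ coordinate-wise, applies Jensen over the $d$ coordinates, and finally pays the $\theta$-dependence through the factor $W_T-W_0$ \emph{inside} the logarithm, extracting $\theta/(1-\theta)$ via $\log W_T=-T\log(1-\theta/T)$. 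You instead work directly with $(1-\theta_t)(g_{t}^{(i)})_j^2/(v_t^{(i)})_j = 1-\theta_t(v_{t-1}^{(i)})_j/(v_t^{(i)})_j$ and apply $1-x\le-\log x$, which telescopes $\log(v_{t}^{(i)})_j$ and pushes the $\theta$-cost into an \emph{additive} $T\log(1/\theta')$ term outside the logarithm; your AM--GM collapse of $\sum_j\log((v_T^{(i)})_j/\epsilon)$ plays the role of the paper's Jensen step, and your induction $\|v_T^{(i)}\|_1\le\max(d\epsilon,G^2)\le d\epsilon+G^2$ replaces the paper's bound $\sum_k w_k\|g_k^{(i)}\|^2\le G^2(W_T-W_0)$. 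The two bookkeeping schemes are equivalent (the paper's log-sum lemma is itself proved by the same concavity inequality you use), but yours avoids introducing the auxiliary weights $W_t,w_t$ altogether and is arguably more self-contained; the paper's version generalizes more gracefully to non-constant $\theta_t$ schedules where the product $W_T$ is the natural object. All your individual steps check out, including the final $\frac{\theta}{1-\theta/T}\le\frac{\theta}{1-\theta}$ and the observation that the bound is deterministic so the expectation is free.
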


\begin{proof}
Let $W_0 = 1$ and $W_t = \prod_{i=1}^t \theta_i^{-1}$. Let $w_t = W_t-W_{t-1} = \left(1-\theta_t\right)W_t$. Therefore, we have $\frac{w_t}{W_t} = 1-\theta_t$, and $\frac{W_{t-1}}{W_t} = \theta_t$.
By observing that $W_0v_0^{\left(i\right)} = \epsilon$ and {\small $W_tv_t^{\left(i\right)} = W_{t-1}v_{t-1}^{\left(i\right)} + w_t \left(g_t^{\left(i\right)}\right)^2$}, then we can obtain 
{\small $W_tv_t^{\left(i\right)} = \epsilon + \sum_{k=1}^tw_k\left(g_k^{\left(i\right)}\right)^2$.}

By plugging the equality into the left-hand side, we obtain
{\small\[
\begin{split}
&\sum_{t=1}^T\left\|\frac{\sqrt{1-\theta_t} g_t^{\left(i\right)}}{\sqrt{v_t^{\left(i\right)}}}\right\|^2 = \sum_{t=1}^T \left\|\frac{\left(1-\theta_t\right)\left(g_t^{\left(i\right)}\right)^2}{v_t^{\left(i\right)}}\right\|_1\\
&= \sum_{t=1}^T \left\|\frac{w_t\left(g_t^{\left(i\right)}\right)^2}{W_tv_t^{\left(i\right)}}\right\|_1 = \sum_{t=1}^T\left\|\frac{w_t\left(g_t^{\left(i\right)}\right)^2}{\epsilon + \sum_{k=1}^t w_k\left(g_k^{\left(i\right)}\right)^2}\right\|_1\\
& =\sum_{t=1}^T \sum_{j=1}^d  \frac{w_t\left(g_{t,j}^{\left(i\right)}\right)^2}{\epsilon+\sum_{k=1}^t w_i\left(g_{k,j}^{\left(i\right)}\right)^2} = \sum_{j=1}^d\sum_{t=1}^T \frac{w_tg_{t,j}^2}{\epsilon+\sum_{k=1}^t w_k\left(g_{k,j}^{\left(i\right)}\right)^2}.
\end{split}
\] }

Considering each dimension respectively, for each $j=1,2,\cdots,d$, we obtain
{\small\[
\begin{split}
 &\sum_{t=1}^T \frac{w_t\left(g_{t,j}^{\left(i\right)}\right)^2}{\epsilon+\sum_{k=1}^t w_i\left(g_{k,j}^{\left(i\right)}\right)^2} \leq \log\left(\epsilon + \sum_{k=1}^tw_k\left(g_{k,j}^{\left(i\right)}\right)^2\right) - \log\left(\epsilon\right) \\
 &= \log\left(1 + \frac{1}{\epsilon}\sum_{k=1}^tw_i\left(g_{k,j}^{\left(i\right)}\right)^2\right).
 \end{split}
\]}

Therefore, 
{\small \[
\begin{split}
&\sum_{t=1}^T\left\|\frac{\sqrt{1-\theta_t} g_t^{\left(i\right)} }{\sqrt{v_t^{\left(i\right)}}}\right\|^2 \leq \sum_{j=1}^d \log\left(1 + \frac{1}{\epsilon}\sum_{k=1}^Tw_k\left(g_{k,j}^{\left(i\right)}\right)^2\right) \\
%&\leq  d\log\left(1 + \frac{1}{d\epsilon}\sum_{j=1}^d \sum_{k=1}^Tw_k\left(g_{k,j}^{\left(i\right)}\right)^2\right)\\
&\leq d\log\left(1 + \frac{1}{d\epsilon}\sum_{k=1}^Tw_k\|g_k^{\left(i\right)}\|^2\right)\\
&\leq d\log\left(1 + \frac{G^2}{d\epsilon}\left(W_T-W_0\right)\right)\\
%&=d\log\left(1 + \frac{G^2}{d\epsilon}\left(\prod_{i=1}^T \theta_i^{-1} -1\right)\right)\\
%&\leq d\left[\log\left(1+\frac{G^2}{d\epsilon}\right) + \sum_{t=1}^T\log\left(\theta_t^{-1}\right)\right]\\
%&\leq d\left[\log\left(1+\frac{G^2}{d\epsilon}\right) + \sum_{t=1}^T \left(\theta_t^{-1} -1\right)\right] \\
%& \leq d\left[\log\left(1+\frac{G^2}{d\epsilon}\right) + \frac{1}{1-\theta}\sum_{t=1}^T \left(1-\theta_t\right)\right]\\
&\leq d\left[\log\left(1+\frac{G^2}{d\epsilon}\right) + \frac{\theta}{1-\theta}\right].
\end{split}
\]}
Hence, we obtain the desired result.
\end{proof}

\begin{lemma}
\label{sumnormdelta}
By the definition of $\Delta_t^{\left(i\right)}$, the following inequality always holds:
{\small \[
\mathbb{E} \sum_{t=1}^T \|\Delta_t^{\left(i\right)}\| \leq \sqrt{T} \sqrt{\frac{\alpha^2d}{\theta\left(1-\sqrt{\gamma}\right)^2}\left[\log\left(1+\frac{G^2}{\epsilon d}\right)+\frac{\theta}{1-\theta}\right]}.
\]}
\end{lemma}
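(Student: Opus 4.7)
The plan is to bound the $\ell_1$-type sum $\sum_{t=1}^T \|\Delta_t^{(i)}\|$ by its $\ell_2$-type counterpart via Cauchy--Schwarz, and then invoke the two preceding lemmas (Lemma~\ref{sumdelta} and Lemma~\ref{sumgv}) which together already control $\sum_{t=1}^T \|\Delta_t^{(i)}\|^2$ in expectation. Concretely, I would first write
\[
\sum_{t=1}^T \|\Delta_t^{(i)}\| \;=\; \sum_{t=1}^T 1 \cdot \|\Delta_t^{(i)}\| \;\le\; \sqrt{T}\,\sqrt{\sum_{t=1}^T \|\Delta_t^{(i)}\|^2},
\]
applying Cauchy--Schwarz in the sum over $t$. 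This reduces the problem to estimating the expectation of $\sqrt{\sum_t \|\Delta_t^{(i)}\|^2}$.

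Next, I would take expectations on both sides and use Jensen's inequality for the concave function $\sqrt{\cdot}$, i.e.\ $\mathbb{E}\sqrt{X} \le \sqrt{\mathbb{E}X}$, to pull the expectation inside the square root:
\[
\mathbb{E}\sum_{t=1}^T \|\Delta_t^{(i)}\| \;\le\; \sqrt{T}\,\sqrt{\mathbb{E}\sum_{t=1}^T \|\Delta_t^{(i)}\|^2}.
\]
Now Lemma~\ref{sumdelta} gives the deterministic bound
\[
\sum_{t=1}^T \|\Delta_t^{(i)}\|^2 \;\le\; \frac{\alpha^2}{\theta(1-\sqrt{\gamma})^2}\sum_{t=1}^T \left\|\frac{\sqrt{1-\theta_t}\,g_t^{(i)}}{\sqrt{v_t^{(i)}}}\right\|^2,
\]
so taking expectations and then applying Lemma~\ref{sumgv} yields
\[
\mathbb{E}\sum_{t=1}^T \|\Delta_t^{(i)}\|^2 \;\le\; \frac{\alpha^2 d}{\theta(1-\sqrt{\gamma})^2}\left[\log\!\left(1+\frac{G^2}{\epsilon d}\right)+\frac{\theta}{1-\theta}\right].
\]
Substituting this into the Cauchy--Schwarz/Jensen bound gives exactly the claim.

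There is no serious obstacle here: the proof is essentially a one-line chain of Cauchy--Schwarz, Jensen, Lemma~\ref{sumdelta}, and Lemma~\ref{sumgv}. The only thing to be careful about is the order of operations, specifically that Jensen is applied \emph{after} Cauchy--Schwarz (so the randomness lives under a single square root), since otherwise one would be left with $\mathbb{E}\|\Delta_t^{(i)}\|$ which is harder to control directly. No additional hypotheses beyond those already in Assumption~\ref{assumption-f-grad} and Assumption~\ref{hyper-parameters} are needed.
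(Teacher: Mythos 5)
Your proposal is correct and follows essentially the same route as the paper: the paper also bounds $\sum_t\|\Delta_t^{(i)}\|$ by $\sqrt{T\sum_t\|\Delta_t^{(i)}\|^2}$ (stated there via concavity of $\sqrt{\cdot}$, equivalently Cauchy--Schwarz) and then invokes Lemma~\ref{sumdelta} and Lemma~\ref{sumgv}. Your explicit remark that Jensen's inequality must be applied after the Cauchy--Schwarz step, so that the expectation is pulled inside a single square root, is a detail the paper leaves implicit but is exactly what makes the final bound go through.
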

\begin{proof}
Because of the concavity of function $\sqrt{x}$, we have {\small $\sum_{t=1}^T \|\Delta_t\|\leq \sqrt{T\sum_{t=1}^T\|\Delta_t\|^2}$}.
Hence, using Lemma \ref{sumdelta} and Lemma \ref{sumgv}, we obtain the desired result.
\end{proof}

\begin{lemma}\label{Mtmulbit}
With the definition of $M_t$, it holds that 
{\small\[
\begin{split}
\sum_{t=1}^T M_t \leq C_{2} - \frac{1-\beta}{2N}\sum_{t=1}^T\sum_{i=1}^N\mathbb{E}\|\nabla f\left(x_t\right)\|_{\hat{\eta}_t^{\left(i\right)}}^2 \\
\qquad + LC_3\alpha\sqrt{T}\left(\frac{\delta_s'}{\delta_s} +\frac{(2-\delta_w)\delta_w'}{\delta_s\delta_w}\right),
\end{split}
\]}
where 
{\small \[
\begin{split}
C_1 &= \left(\frac{\beta/\left(1-\beta\right)}{\sqrt{C_1\left(1-\gamma\right)\theta_1}}+1\right)^2,\\
C_{2} &= \frac{d}{1-\sqrt{\gamma}}\left(\frac{(2-\delta_w)(2-\delta_s)\alpha^2L}{\theta\left(1-\sqrt{\gamma}\right)^2\delta_w\delta_s} + \frac{2C_2G\alpha}{\sqrt{\theta}} \right)\\&\qquad \left[\log\left(1+\frac{G^2}{\epsilon d}\right)+\frac{\theta}{1-\theta}\right],\\
C_3 &=\sqrt{\frac{d}{\theta\left(1-\sqrt{\gamma}\right)^4}\left[\log\left(1+\frac{G^2}{\epsilon d}\right)+\frac{\theta}{1-\theta}\right]}.
\end{split}
\]}
\end{lemma}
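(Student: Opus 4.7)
The goal is to bound $\sum_{t=1}^T M_t$ by separating the inner-product term $\langle \nabla F(x_t),\Delta_t\rangle$ from the three ``error'' terms that involve norms of $\Delta_t^{(i)}$ and $e_t,e_t^{(i)}$, and then applying Lemmas \ref{Lemmamv}--\ref{sumnormdelta} to each piece. I would open by writing $\Delta_t=\frac{1}{N}\sum_i\Delta_t^{(i)}$ with $\Delta_t^{(i)}=-\alpha_t m_t^{(i)}/\sqrt{v_t^{(i)}}$ and splitting the inner product into a ``stochastic-gradient'' part and a ``momentum'' part via the recursion $m_t^{(i)}=\beta m_{t-1}^{(i)}+(1-\beta)g_t^{(i)}$. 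The key device (standard for Adam-type analyses) is to replace the random $v_t^{(i)}$ by its predictable surrogate $\hat v_t^{(i)}=\theta_t v_{t-1}^{(i)}+(1-\theta_t)\sigma_t^2$, so that $\hat\eta_t^{(i)}=\alpha_t/\sqrt{\hat v_t^{(i)}}$ is $\mathcal{F}_{t-1}$-measurable and $\mathbb{E}_t[\langle\nabla F(x_t),\hat\eta_t^{(i)}\odot g_t^{(i)}\rangle]=\|\nabla F(x_t)\|_{\hat\eta_t^{(i)}}^2$.

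After this substitution, the $(1-\beta)g_t^{(i)}$ contribution yields exactly the negative term $-\frac{1-\beta}{2N}\sum_{t,i}\mathbb{E}\|\nabla F(x_t)\|_{\hat\eta_t^{(i)}}^2$, with the factor $\tfrac12$ absorbing a Cauchy--Schwarz split. The residual pieces are (a) the bias $(\alpha_t/\sqrt{v_t^{(i)}}-\hat\eta_t^{(i)})\cdot g_t^{(i)}$, and (b) the momentum drift $\beta\alpha_t m_{t-1}^{(i)}/\sqrt{v_t^{(i)}}$. Both are controlled in the standard way: (a) is bounded using $|1/\sqrt{v_t}-1/\sqrt{\hat v_t}|\le|v_t-\hat v_t|/(\sqrt{v_t}\sqrt{\hat v_t}(\sqrt{v_t}+\sqrt{\hat v_t}))$ together with $\|g_t\|\le G$, and (b) is handled using Lemma \ref{Lemmamv} to dominate $m_{t-1}^{(i)}$ by $\sqrt{v_t^{(i)}/((1-\gamma)(1-\theta_{t-1}))}$; the emerging factor $(\beta/(1-\beta))/\sqrt{(1-\gamma)\theta_1}+1$ squared is where $C_1$ originates. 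Summing over $t$ and invoking Lemmas \ref{sumdelta} and \ref{sumgv} converts these residuals into the $d[\log(1+G^2/(\epsilon d))+\theta/(1-\theta)]$ factor appearing in $C_2$, contributing the $\frac{2C_1 G\alpha/\sqrt\theta}{1-\sqrt\gamma}\cdot d[\cdots]$ term.

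For the three bias-feedback terms in $M_t$, I would apply Lemma \ref{eesum} to $\frac{1}{N}\sum_i\|e_t\|\|\Delta_t^{(i)}\|$, Lemma \ref{sumdelta} (summed over $i$) to $\frac{1}{N}\sum_i\|\Delta_t^{(i)}\|^2$, and Lemma \ref{esum} to $\frac{1}{N^2}\sum_{i,j}\|e_t^{(i)}\|\|\Delta_t^{(j)}\|$. Each lemma produces a sum of $\|\Delta_t^{(i)}\|^2$ (with prefactor involving $(1-\delta_s)/(\delta_s\delta_w)$, $(1-\delta_w)/\delta_w$, etc.) plus a linear-in-$\|\Delta_t^{(i)}\|$ remainder with coefficient $\delta_s'$ or $\delta_w'$. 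The quadratic contributions are aggregated, bounded via Lemmas \ref{sumdelta} and \ref{sumgv}, and collapsed into the coefficient $(2-\delta_w)(2-\delta_s)\alpha^2L/(\theta(1-\sqrt\gamma)^2\delta_w\delta_s)$ inside $C_2$. The linear contributions are aggregated and bounded by Lemma \ref{sumnormdelta}, producing the $LC_3\alpha\sqrt T(\delta_s'/\delta_s+(2-\delta_w)\delta_w'/(\delta_s\delta_w))$ tail, where the $\sqrt T$ factor comes from Jensen's inequality $\sum_t\|\Delta_t^{(i)}\|\le\sqrt{T\sum_t\|\Delta_t^{(i)}\|^2}$ used in the proof of Lemma \ref{sumnormdelta}.

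The main obstacle will be the bookkeeping in the inner-product step: tracking the three sources of error (momentum tail, $v_t$-vs-$\hat v_t$ mismatch, and the appearance of the weighted norm $\|\cdot\|_{\hat\eta_t^{(i)}}^2$) simultaneously, while ensuring one keeps exactly half of the negative $\|\nabla F\|^2$ term and absorbs the rest via Young's inequality with coefficients matched to $C_1$. A secondary annoyance is verifying that all constants line up so that the final bound collapses to the advertised form of $C_2$; this is purely mechanical but easy to miscount, since the same Lemmas \ref{sumdelta}--\ref{sumgv} are invoked several times with slightly different prefactors. Once both are handled, combining the estimates and taking expectations gives the claim.
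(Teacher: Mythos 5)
Your plan for the three error--feedback terms in $M_t$ (Lemmas \ref{esum}, \ref{eesum}, \ref{sumdelta}, \ref{sumgv}, \ref{sumnormdelta}) and for the $(1-\beta)\hat\eta_t^{(i)}g_t^{(i)}$ part of the inner product matches the paper's argument. The gap is in your step (b): you propose to treat the entire momentum piece $\beta\alpha_t m_{t-1}^{(i)}/\sqrt{v_t^{(i)}}$ as a residual and bound it directly via Lemma \ref{Lemmamv}. That route fails quantitatively: Lemma \ref{Lemmamv} together with $v_t^{(i)}\ge\theta_t v_{t-1}^{(i)}$ gives $|\beta\alpha_t m_{t-1}^{(i)}/\sqrt{v_t^{(i)}}|\le\beta\alpha_t/\sqrt{\theta_t(1-\gamma)(1-\theta_{t-1})}$ per coordinate, and since $1-\theta_{t-1}=\theta/T$ while $\alpha_t=\alpha/\sqrt{T}$, this is an $\mathcal{O}(1)$ quantity per iteration. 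Summed over $t$ it contributes $\mathcal{O}(T)$ to $\sum_t M_t$, which after Lemma \ref{multitau} leaves a non-vanishing $\mathcal{O}(\sqrt{T})$ term in $\mathbb{E}\|\nabla f(x_\tau^T)\|^2$ and destroys the claimed bound. In particular, the factor $\|\sqrt{1-\theta_t}\,g_t^{(i)}/\sqrt{v_t^{(i)}}\|^2$ that you want Lemma \ref{sumgv} to absorb never appears in that direct bound.

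The missing idea is that the momentum piece must first be compared with $\Delta_{t-1}^{(i)}$, not bounded outright. Writing $\Delta_t^{(i)}-\frac{\beta\alpha_t}{\sqrt{\theta_t}\alpha_{t-1}}\Delta_{t-1}^{(i)}=-(1-\beta)\hat\eta_t^{(i)}g_t^{(i)}+(\text{residuals})$, the residuals carry the numerator $v_t^{(i)}-\theta_t v_{t-1}^{(i)}=(1-\theta_t)\big(g_t^{(i)}\big)^2$ coming from the difference $1/\sqrt{v_t^{(i)}}-1/\sqrt{\theta_t v_{t-1}^{(i)}}$; it is this factor that makes Lemma \ref{sumgv} applicable, and the Lemma \ref{Lemmamv}--type estimate is applied only to the bounded ratios $A_t^{(i)},B_t^{(i)}$ (whence $C_1$), not to $m_{t-1}^{(i)}/\sqrt{v_t^{(i)}}$ itself. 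The leftover term $\frac{\beta\alpha_t}{\sqrt{\theta_t}\alpha_{t-1}}\mathbb{E}\langle\nabla f(x_t),\Delta_{t-1}\rangle$ is then bounded by $\sqrt{\gamma}\,M_{t-1}$ using the $L$-Lipschitzness of $\nabla f$ and the quantization bounds on $\|x_t-x_{t-1}\|$ --- this is precisely why the three error--product terms are built into the definition of $M_t$ --- yielding the recursion $M_t\le\sqrt{\gamma}M_{t-1}+N_t-\frac{1-\beta}{2N}\sum_i\mathbb{E}\|\nabla f(x_t)\|^2_{\hat\eta_t^{(i)}}$, whose unrolling produces the $1/(1-\sqrt{\gamma})$ prefactor in $C_2$. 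Your sketch neither controls the momentum term nor accounts for that prefactor without this recursive step.
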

\begin{proof}
Using the definitions of $v_t^{\left(i\right)}$ and $\hat{v}_t^{\left(i\right)}$, we obtain
\small{\[
\begin{split}
&\quad \frac{\left(1-\beta\right)\alpha_tg_t^{\left(i\right)}}{\sqrt{v_t^{\left(i\right)}}} \\
&= \frac{\left(1-\beta\right)\alpha_tg_t^{\left(i\right)}}{\sqrt{\hat{v}_t^{\left(i\right)}}} + \left(1-\beta\right)\alpha_tg_t^{\left(i\right)}\left(\frac{1}{\sqrt{v_t^{\left(i\right)}}} - \frac{1}{\sqrt{\hat{v}_t^{\left(i\right)}}}\right)\\
%&\!= \!\left(1-\beta\right)\hat{\eta}_t^{\left(i\right)}g_t^{\left(i\right)} + \left(1-\beta\right)\alpha_t\frac{\left(1-\theta_t\right)g_t^{\left(i\right)}\left(\sigma_t^2-\left(g_t^{\left(i\right)}\right)^2\right)}{\sqrt{v_t^{\left(i\right)}}\sqrt{\hat{v}_t^{\left(i\right)}}\left(\sqrt{v_t^{\left(i\right)}}+\sqrt{\hat{v}_t^{\left(i\right)}}\right)}\\
&\!=\! \left(1-\beta\right)\hat{\eta}_t^{\left(i\right)}g_t^{\left(i\right)}+ \hat{\eta}_t^{\left(i\right)}\sigma_t\frac{\left(1-\theta_t\right)g_t^{\left(i\right)}}{\sqrt{v_t^{\left(i\right)}}}\frac{\left(1-\beta\right)\sigma_t}{\sqrt{v_t^{\left(i\right)}}+\sqrt{\hat{v}_t^{\left(i\right)}}}\\
& \qquad \!-\! \hat{\eta}_t^{\left(i\right)}g_t^{\left(i\right)}\frac{\left(1-\theta_t\right)g_t^{\left(i\right)}}{\sqrt{v_t^{\left(i\right)}}}\frac{\left(1-\beta\right)g_t^{\left(i\right)}}{\sqrt{v_t^{\left(i\right)}}+\sqrt{\hat{v}_t^{\left(i\right)}}}.
\end{split}
\]}

In addition, we can obtain that

{\small\[
\begin{split}
&\beta\alpha_tm_{t-1}^{\left(i\right)}\left(\frac{1}{\sqrt{v_t^{\left(i\right)}}} - \frac{1}{\sqrt{\theta_tv_{t-1}^{\left(i\right)}}}\right)\\
%&= \beta\alpha_tm_{t-1}^{\left(i\right)}\frac{\left(1-\theta_t\right)\left(g_t^{\left(i\right)}\right)^2}{\sqrt{v_t^{\left(i\right)}}\sqrt{\theta_tv_{t-1}^{\left(i\right)}}\left(\sqrt{v_t^{\left(i\right)}}+\sqrt{\theta_tv_{t-1}^{\left(i\right)}}\right)}\\
%& =\beta\alpha_tm_{t-1}^{\left(i\right)}\frac{\left(1-\theta_t\right)\left(g_t^{\left(i\right)}\right)^2}{\sqrt{v_t^{\left(i\right)}}+\sqrt{\theta_tv_{t-1}^{\left(i\right)}}}\\
%&+ \beta\alpha_tm_{t-1}^{\left(i\right)} \frac{\left(1-\theta_t\right)\left(g_t^{\left(i\right)}\right)^2}{\sqrt{v_t^{\left(i\right)}}\left(\sqrt{v_t^{\left(i\right)}} + \sqrt{\theta_tv_{t-1}^{\left(i\right)}}\right)} \left(\frac{1}{\sqrt{\theta_tv_{t-1}^{\left(i\right)}}} - \frac{1}{\sqrt{\hat{v}_t^{\left(i\right)}}}\right)\\
& = \hat{\eta}_t^{\left(i\right)}g_t^{\left(i\right)}\frac{\left(1-\theta_t\right)g_t^{\left(i\right)}}{\sqrt{v_t^{\left(i\right)}}}\left(\frac{\beta m_{t-1}^{\left(i\right)}}{\sqrt{v_t^{\left(i\right)}} +\sqrt{\theta_t v_{t-1}^{\left(i\right)}}}\right)\\
&+ \frac{\beta\alpha_t m_{t-1}^{\left(i\right)}\left(1-\theta_t\right)\left(g_t^{\left(i\right)}\right)^2\sigma_t^2}{\sqrt{v_t^{\left(i\right)}}\sqrt{\hat{v}^{\left(i\right)}_t}\sqrt{\theta_tv_{t-1}^{\left(i\right)}}\left(\sqrt{v_t^{\left(i\right)}}+\sqrt{\theta_tv_{t-1}^{\left(i\right)}}\right)\left(\sqrt{\hat{v}_t^{\left(i\right)}}+\sqrt{\theta_tv_{t-1}^{\left(i\right)}}\right)}\\
%&  = \hat{\eta}_t^{\left(i\right)}g_t^{\left(i\right)}\frac{\left(1-\theta_t\right)g_t^{\left(i\right)}}{\sqrt{v_t^{\left(i\right)}}}\left(\frac{\beta m_{t-1}^{\left(i\right)}}{\sqrt{v_t^{\left(i\right)}} +\sqrt{\theta_t v_{t-1}^{\left(i\right)}}}\right) \\ 
%& + \hat{\eta}_t^{\left(i\right)}\sigma_t\frac{\left(1-\theta_t\right)g_t^{\left(i\right)}}{\sqrt{v_t^{\left(i\right)}}}\left(\frac{\beta m_{t-1}^{\left(i\right)}}{\sqrt{\theta_tv_{t-1}^{\left(i\right)}}} \frac{\sqrt{1-\theta_t}g_t^{\left(i\right)}}{\sqrt{v_t^{\left(i\right)}}+\sqrt{\theta_tv_{t-1}^{\left(i\right)}}} \frac{\sqrt{1-\theta_t}\sigma_t}{\sqrt{\hat{v}_t^{\left(i\right)}}+\sqrt{\theta_tv_{t-1}^{\left(i\right)}}} \right).
\end{split}
\]}

For convenience, we denote

{\small\[
\begin{split}
A_t^{\left(i\right)} &= \frac{\beta m_{t-1}^{\left(i\right)}}{\sqrt{v_t^{\left(i\right)}}+\sqrt{\theta_tv_{t-1}^{\left(i\right)}}} - \frac{\left(1-\beta\right)g_t^{\left(i\right)}}{\sqrt{v_t^{\left(i\right)}}+\sqrt{\hat{v}_t^{\left(i\right)}}},\\
B_t^{\left(i\right)} &= \left(\frac{\beta m_{t-1}^{\left(i\right)}}{\sqrt{\theta_t v_{t-1}^{\left(i\right)}}} \frac{\sqrt{1-\theta_t}g_t^{\left(i\right)}}{\sqrt{v_t^{\left(i\right)}} +\sqrt{\theta_tv_{t-1}^{\left(i\right)}}}\frac{\left(1-\theta_t\right)\sigma_t}{\sqrt{\hat{v}_t^{\left(i\right)}} + \sqrt{\theta_tv_{t-1}^{\left(i\right)}}}\right) \\
&+ \frac{\left(1-\beta\right)\sigma_t}{\sqrt{v_t^{\left(i\right)}}+\sqrt{\hat{v}_t^{\left(i\right)}}}.
\end{split}
\]

Then, for each $t>2$, we obtain
\[\small
\begin{split}
&\Delta_t^{\left(i\right)} - \frac{\beta\alpha_t}{\sqrt{\theta_t}\alpha_{t-1}}\Delta_{t-1}^{\left(i\right)} \\
&= -\frac{\alpha_tm_t^{\left(i\right)}}{\sqrt{v_t^{\left(i\right)}}} + \frac{\beta\alpha_tm_{t-1}^{\left(i\right)}}{\sqrt{\theta_tv_{t-1}^{\left(i\right)}}} = -\alpha_t\left(\frac{m_t^{\left(i\right)}}{\sqrt{v_t^{\left(i\right)}}} - \frac{\beta m_{t-1}^{\left(i\right)}}{\sqrt{\theta_tv_{t-1}^{\left(i\right)}}}\right)\\
& = -\frac{\left(1-\beta\right)\alpha_tg_t^{\left(i\right)}}{\sqrt{v_t^{\left(i\right)}}} - \beta\alpha_tm_{t-1}^{\left(i\right)}\left(\frac{1}{\sqrt{v_t^{\left(i\right)}}} - \frac{1}{\sqrt{\theta_tv_{t-1}^{\left(i\right)}}}\right)\\
& = -\left(1-\beta\right)\hat{\eta}_t^{\left(i\right)}g_t^{\left(i\right)} - \hat{\eta}_t^{\left(i\right)}g_t^{\left(i\right)}\frac{\left(1-\theta_t\right)g_t^{\left(i\right)}}{\sqrt{v_t^{\left(i\right)}}}A_t^{\left(i\right)}\\
&- \hat{\eta}_t^{\left(i\right)}\sigma_t\frac{\left(1-\theta_t\right)g_t^{\left(i\right)}}{\sqrt{v_t^{\left(i\right)}}}B_t^{\left(i\right)}. 
\end{split}
\]
}
By using the above equality, we can obtain an upper bound for  $\mathbb{E}\langle \nabla f\left(x_t\right), \Delta_t\rangle$ as follows:
{\small\[
\begin{split}
&\mathbb{E}\langle \nabla f\left(x_t\right),\Delta_t\rangle =  \frac{\beta\alpha_t}{\sqrt{\theta_t}\alpha_{t-1}}\mathbb{E}\langle \nabla f\left(x_t\right),\Delta_{t-1}\rangle\\
&\qquad- \frac{1}{N}\sum_{i=1}^N \mathbb{E}\langle \nabla f\left(x_t\right), \left(1-\beta\right)\hat{\eta}_t^{\left(i\right)} g_t^{\left(i\right)}\rangle \\
& \qquad - \frac{1}{N}\sum_{i=1}^N\mathbb{E}\left\langle \nabla f\left(x_t\right), \hat{\eta}_t^{\left(i\right)}g_t^{\left(i\right)}\frac{\left(1-\theta_t\right)g_t^{\left(i\right)}}{\sqrt{v_t^{\left(i\right)}}}A_t^{\left(i\right)} \right\rangle\\ &\qquad- \frac{1}{N} \sum_{i=1}^N \mathbb{E}\left \langle \nabla f\left(x_t\right), \hat{\eta}_t^{\left(i\right)}\sigma_t\frac{\left(1-\theta_t\right)g_i^{\left(i\right)}}{\sqrt{v_t^{\left(i\right)}}}B_t^{\left(i\right)}\right\rangle.
\end{split}
\]
}

For the first term, we obtain
{\small\[
\begin{split}
&\mathbb{E}\langle \nabla f\left(x_t\right),\Delta_{t-1}\rangle \\
&= \mathbb{E}\langle \nabla f\left(x_{t-1}\right),\Delta_{t-1}\rangle + \mathbb{E}\langle \nabla f\left(x_t\right)-\nabla f\left(x_{t-1}\right),\Delta_{t-1}\rangle\\
%&\leq \mathbb{E}\langle \nabla f\left(x_{t-1}\right),\Delta_{t-1}\rangle + \mathbb{E} \|\nabla f\left(x_t\right)-\nabla f\left(x_{t-1}\right)\|\|\Delta_{t-1}\|\\
&\leq \mathbb{E}\langle \nabla f\left(x_{t-1}\right),\Delta_{t-1}\rangle \\
&+ L\mathbb{E} \left\|Q_s\left(\frac{1}{N}\sum_{i=1}^N \mathcal{Q}_w\left(\Delta_{t-1}^{\left(i\right)}+e_{t-1}^{\left(i\right)}\right) +e_{t-1}\right)\right\|\|\Delta_{t-1}\|\\
%&\leq \mathbb{E}\langle \nabla f\left(x_{t-1}\right),\Delta_{t-1}\rangle + (2-\delta_s) L\mathbb{E} \left\|\frac{1}{N}\sum_{i=1}^N \mathcal{Q}_w\left(\Delta_{t-1}^{\left(i\right)}+e_{t-1}^{\left(i\right)}\right) +e_{t-1}\right\|\|\Delta_{t-1}\|\\
%&\leq \mathbb{E}\langle \nabla f\left(x_{t-1}\right),\Delta_{t-1}\rangle + (2-\delta_s)L\mathbb{E}\|e_{t-1}\|\|\Delta_{t-1}\|\\&\qquad + (2-\delta_w)(2-\delta_s)L\frac{1}{N}\sum_{i=1}^N \|\Delta_{t-1}^{\left(i\right)}\|\|\Delta_{t-1}\|+ (2-\delta_w)(2-\delta_s)L\mathbb{E}\frac{1}{N}\sum_{i=1}^N\|e_{t-1}^{\left(i\right)}\|\|\Delta_{t-1}\|\\
%&\leq \mathbb{E}\langle \nabla f\left(x_{t-1}\right),\Delta_{t-1}\rangle + \frac{(2-\delta_s)L}{N}\sum_{i=1}^N \mathbb{E}\|e_{t-1}\|\|\Delta_{t-1}^{\left(i\right)}\|
%\\&\qquad +  \frac{(2-\delta_s)(2-\delta_w)L}{N}\sum_{i=1}^N \mathbb{E}\|\Delta_{t-1}^{\left(i\right)}\|^2+ \frac{(2-\delta_s)(2-\delta_w) L}{N^2}\mathbb{E}\sum_{i=1}^N\sum_{j=1}^N\|e_{t-1}^{\left(i\right)}\|\|\Delta_{t-1}^{\left(j\right)}\|\\
%&=M_{t-1}.
\end{split}
\]}

Meanwhile, with the definition of $\mathcal{Q}_s$ and $\mathcal{Q}_w$, it holds that
{\small\[
\begin{split}
&\left\|Q_s\left(\frac{1}{N}\sum_{i=1}^N \mathcal{Q}_w\left(\Delta_{t-1}^{\left(i\right)}+e_{t-1}^{\left(i\right)}\right) +e_{t-1}\right)\right\|\\
&\leq \frac{2-\delta_s}{N}\sum_{i=1}^N \left\|\mathcal{Q}_w\left(\Delta_{t-1}^{\left(i\right)}+e_{t-1}^{\left(i\right)}\right)\right\| +(2-\delta_s)\left\|e_{t-1}\right\|\\
&\leq \frac{(2-\delta_s)(2-\delta_w)}{N}\sum_{i=1}^N\left\|\Delta_{t-1}^{\left(i\right)}\right\|+\left\|e_{t-1}^{\left(i\right)}\right\| +(2-\delta_s)\left\|e_{t-1}\right\|.
\end{split}
\]}

Combining with {\small $\|\Delta_{t-1}\|\leq \frac{1}{N} \sum_{i=1}^N\|\Delta_{t-1}^{(i)}\|$}, it holds that
{\small \[
\mathbb{E}\langle \nabla f\left(x_t\right),\Delta_{t-1}\rangle \leq M_{t-1}.
\]}

For the second term, we obtain
{\small \begin{align*}
&-\frac{1}{N}\sum_{i=1}^N\mathbb{E}\langle \nabla f\left(x_t\right),\left(1-\beta\right)\hat{\eta}_t^{\left(i\right)}g_t^{\left(i\right)}\rangle 
%&=  -\frac{1}{N}\sum_{i=1}^N\mathbb{E}\langle \nabla f\left(x_t\right),\left(1-\beta\right)\hat{\eta}_t^{\left(i\right)}\mathbb{E}_t\left(g_t^{\left(i\right)}\right)\rangle \\
 =-\frac{1-\beta}{N}\sum_{i=1}^N \|\nabla f\left(x_t\right)\|_{\hat{\eta}_t^{\left(i\right)}}^2.
\end{align*}
}

For the third term, we have 
{\small \[
\begin{split}
&-\mathbb{E}\left\langle \nabla f\left(x_t\right), \hat{\eta}_t^{\left(i\right)}g_t^{\left(i\right)} \frac{\left(1-\theta_t\right)g_t^{\left(i\right)}}{\sqrt{v_t^{\left(i\right)}}} A_t^{\left(i\right)} \right\rangle\\
 &\leq \mathbb{E} \left\langle \frac{\sqrt{\hat{\eta}_t^{\left(i\right)}}|\nabla f\left(x_t\right)||g_t^{\left(i\right)}|}{\sigma_t}, \frac{\sqrt{\hat{\eta}_t^{\left(i\right)}}\sigma_t|A_t^{\left(i\right)}|\left(1-\theta_t\right)|g_t^{\left(i\right)}|}{\sqrt{v_t^{\left(i\right)}}}\right\rangle\\
%&\leq \frac{1-\beta}{4}\mathbb{E}\left\|\frac{\sqrt{\hat{\eta}_t^{\left(i\right)}}|\nabla f\left(x_t\right)||g_t^{\left(i\right)}|}{\sigma_t}\right\|^2 + \frac{1}{1-\beta}\left\|\frac{\sqrt{\hat{\eta}_t^{\left(i\right)}}\sigma_t|A_t^{\left(i\right)}|\left(1-\theta_t\right)|g_t^{\left(i\right)}|}{\sqrt{v_t^{\left(i\right)}}}\right\|^2\\
&\leq \frac{1-\beta}{4}\mathbb{E}\|\nabla f\left(x_t\right)\|^2_{\hat{\eta}_t^{\left(i\right)}} + \frac{C_1G\alpha}{\theta}\mathbb{E}\left\|\frac{\sqrt{1-\theta_t}g_t^{\left(i\right)}}{\sqrt{v_t^{\left(i\right)}}}\right\|^2,
\end{split}
\]
}
where the inequality holds with the following equalities and inequalities:
{\small \[
\begin{split}
&\mathbb{E}\left\|\frac{\sqrt{\hat{\eta}_t^{\left(i\right)}}|\nabla f\left(x_t\right)||g_t^{\left(i\right)}|}{\sigma_t}\right\|^2 = \mathbb{E}\left\|\frac{\hat{\eta}_t^{\left(i\right)}\nabla f\left(x_t\right)^2\left(g_t^{\left(i\right)}\right)^2}{\sigma_t^2}\right\|_1\\
&= \mathbb{E} \|\hat{\eta}_t^{\left(i\right)}\nabla f\left(x_t\right)^2\|_1 = \mathbb{E}\|\nabla f\left(x_t\right)\|^2_{\hat{\eta}_t^{\left(i\right)}},
\end{split}
\]
}
{\small \[
\sqrt{\hat{\eta}_t^{\left(i\right)}}\sigma_t \leq \sqrt{\frac{\alpha_t\sigma_t^2}{\sqrt{\left(1-\theta_t\right)\sigma_t^2}}} \leq \sqrt{\frac{\alpha G}{\sqrt{\theta}}},
\]}
and 
{\small \[
\begin{split}
& |A_t^{\left(i\right)}| %\leq \frac{\beta|m_{t-1}^{\left(i\right)}|}{\sqrt{\theta_t v_{t-1}^{\left(i\right)}}} + \frac{\left(1-\beta\right)|g_t^{\left(i\right)}|}{\sqrt{v_t^{\left(i\right)}}}
\leq  \frac{1}{\sqrt{1-\theta_t}} \left(\frac{\beta}{\sqrt{\theta_t\left(1-\gamma\right)}} + 1-\beta\right).
\end{split}
\]
}
Let \small{$C_1\! = \!\left(\frac{\beta/\left(1-\beta\right)}{\sqrt{\left(1-\gamma\right)\theta_1}}+1\right)^2$}, we can obtain \small{$|A_t^{\left(i\right)}| \!\leq\! \frac{\sqrt{C_1}\left(1-\beta\right)}{\sqrt{1-\theta_t}}$}.

Then we obtain the upper bound of the third term:
{\small\[
\begin{split}
&-\frac{1}{N}\mathbb{E}\sum_{i=1}^N\left\langle \nabla f\left(x_t\right), \hat{\eta}_t^{\left(i\right)}g_t^{\left(i\right)}\frac{\left(1-\theta_t\right)g_t^{\left(i\right)}}{\sqrt{v_t^{\left(i\right)}}}A_t^{\left(i\right)}\right\rangle\\
&\leq \frac{1}{N}\sum_{i=1}^N \frac{1-\beta}{4}\mathbb{E}\|\nabla f\left(x_t\right)\|_{\hat{\eta}_t^{\left(i\right)}}^2 + \frac{C_1G\alpha}{\sqrt{\theta}}\mathbb{E}\left\|\frac{\sqrt{1-\theta_t}g_t^{\left(i\right)}}{\sqrt{v_t^{\left(i\right)}}}\right\|^2.
\end{split}
\]}

For the fourth term, by using similar inequalities we have
$\small{
|B_t^{\left(i\right)}| %\leq\! \left(\frac{\beta|m_{t-1}^{\left(i\right)}|}{\sqrt{\theta_tv_{t-1}^{\left(i\right)}}} \frac{\sqrt{1-\theta_t}|g_t^{\left(i\right)}|}{\sqrt{v_t^{\left(i\right)}}\!+\!\sqrt{\theta_t v_t^{\left(i\right)}}}\frac{\sqrt{1-\theta_t}\sigma_t}{\sqrt{\hat{v}_t^{\left(i\right)}}\!+\!\sqrt{\theta_t v_{t\!-\!1}^{\left(i\right)}}}\right)\\
%& \qquad \!+\! \frac{\left(1-\beta\right)\sigma_t}{\sqrt{v_t^{\left(i\right)}}\!\!+\! \sqrt{\hat{v}_t^{\left(i\right)}}} 
\leq \frac{\left(1-\beta\right)\sqrt{C_1}}{\sqrt{1-\theta_t}}.
}$

Then, we have
{\small\[
\begin{split}
&-\frac{1}{N} \sum_{i=1}^N \mathbb{E}\left \langle \nabla f\left(x_t\right), \hat{\eta}_t^{\left(i\right)}\sigma_t\frac{\left(1-\theta_t\right)g_i^{\left(i\right)}}{\sqrt{v_t^{\left(i\right)}}}B_t^{\left(i\right)}\right\rangle \\
&\leq \frac{1}{N}\sum_{i=1}^N \frac{1-\beta}{4}\mathbb{E}\|\nabla f\left(x_t\right)\|_{\hat{\eta}_t^{\left(i\right)}}^2 + \frac{C_1G\alpha}{\sqrt{\theta}}\mathbb{E}\left\|\frac{\sqrt{1-\theta_t}g_t^{\left(i\right)}}{\sqrt{v_t^{\left(i\right)}}}\right\|^2.
\end{split}
\] }

By defining 
{\small\[
\begin{split}
&N_t =  \mathbb{E} \left[\frac{(2-\delta_s)}{N}\left(\sum_{i=1}^N L\|e_{t}\|\|\Delta_{t}^{\left(i\right)}\| + (2-\delta_s) \|\Delta_t^{(i)}\|^2 \right)\right.\\
%&+\frac{(2-\delta_w)(2-\delta_s)L}{N}\sum_{i=1}^N \|\Delta_{t}^{\left(i\right)}\|^2\\
&+ \frac{(2-\delta_w)(2-\delta_s)L}{N^2}\sum_{i=1}^N\sum_{j=1}^N\|e_{t}^{\left(i\right)}\|\|\Delta_{t}^{\left(j\right)}\|\\
&\left.+ \frac{1}{N}\sum_{i=1}^N \frac{2C_1G\alpha}{\sqrt{\theta}}\left\|\frac{\sqrt{1-\theta_t}g_t^{\left(i\right)}}{\sqrt{v_t^{\left(i\right)}}}\right\|^2\right],
\end{split}
\]  }
then with induction, we can find the upper bound for $M_t$.  

With the above bound, we have 
{\small\[
\begin{split}
M_t&\leq \frac{\beta\alpha_t}{\sqrt{\theta_t}\alpha_{t-1}}M_{t-1}+ N_t - \frac{1-\beta}{2N}\mathbb{E}\sum_{i=1}^N\|\nabla f\left(x_t\right)\|_{\hat{\eta}_t^{\left(i\right)}}^2 \\
&\leq \frac{\beta\alpha_t}{\sqrt{\theta_t}\alpha_{t-1}}M_{t-1}+ N_t.
\end{split}
\] }

Besides, as for the bound of the first term $M_1$, we have
{\small \begin{equation}
    \label{lemma7-1}
    \begin{aligned}
    &\mathbb{E}\left\langle \nabla f\left(x_1\right), -\frac{\alpha_1 m_1^{\left(i\right)}}{\sqrt{v_1^{\left(i\right)}}}\right\rangle = \mathbb{E}\left\langle \nabla f\left(x_1\right), \frac{\alpha_1 \left(1-\beta\right)g_1^{\left(i\right)}}{\sqrt{v_1^{\left(i\right)}}}\right\rangle\\
    &\leq \mathbb{E}\langle \nabla f\left(x_t\right), \left(1-\beta\right)\hat{\eta}_1^{\left(i\right)}g_1^{\left(i\right)}\rangle \\
    &\qquad +\mathbb{E}\left\langle \nabla f\left(x_t\right), \hat{\eta}_1^{\left(i\right)}\sigma_1^{\left(i\right)}\frac{\left(1-\theta_1\right)g_1^{\left(i\right)}}{\sqrt{v_1^{\left(i\right)}}}\frac{\left(1-\beta\right)\sigma_1^{\left(i\right)}}{\sqrt{v_1^{\left(i\right)}}+\sqrt{\hat{v}_1^{\left(i\right)}}}\right\rangle\\
    &\qquad  - \mathbb{E}\left\langle \nabla f\left(x_t\right), \hat{\eta}_1^{\left(i\right)}g_1^{\left(i\right)}\frac{\left(1-\theta_1\right)g_1^{\left(i\right)}}{\sqrt{v_1^{\left(i\right)}}}\frac{\left(1-\beta\right)g_1^{\left(i\right)}}{\sqrt{v_1^{\left(i\right)}}+\sqrt{\hat{v}_1^{\left(i\right)}}} \right\rangle
    \end{aligned}
\end{equation}
}

Hence, with inequality \eqref{lemma7-1}, and definition of $M_1$, it holds that
{\small \[
\begin{split}
  &M_1
  %&= \sum_{i=1}^N \mathbb{E}\left\langle \nabla f\left(x_1\right), -\frac{\alpha_1 m_1^{\left(i\right)}}{\sqrt{v_1^{\left(i\right)}}}\right\rangle +  \frac{(2-\delta_w)(2-\delta_s)L}{N}\sum_{i=1}^N \mathbb{E}\|\Delta_{t}^{\left(i\right)}\|^2 \\
  \leq \sum_{i=1}^N \frac{2C_1G\alpha}{\sqrt{\theta}}\mathbb{E}\left\|\frac{\sqrt{1-\theta_1}g_1^{\left(i\right)}}{\sqrt{v_1^{\left(i\right)}}}\right\|^2 - \frac{1-\beta}{2}\mathbb{E}\|\nabla f\left(x_t\right)\|_{\hat{\eta}_t^{\left(i\right)}}^2 \\
  &\qquad + \frac{(2-\delta_w)(2-\delta_s)L}{N} \sum_{i=1}^N\mathbb{E}\|\Delta_1^{\left(i\right)}\|^2 \leq N_1. 
  \end{split}
\]}

Therefore, we can obtain
{\small\[
\begin{split}
M_t &\leq \sum_{k=1}^t \frac{\alpha_t\beta^{t-k}}{\alpha_k\sqrt{\theta'^{t-k}}}N_k - \frac{1-\beta}{2N}\sum_{i=1}^N \mathbb{E}\|\nabla f\left(x_t\right)\|_{\hat{\eta}_t^{\left(i\right)}}^2 \\
&\leq \sum_{k=1}^t \sqrt{\gamma}^{t-k}N_k - \frac{1-\beta}{2N}\sum_{i=1}^N\mathbb{E}\|\nabla f\left(x_t\right)\|_{\hat{\eta}_t^{\left(i\right)}}^2.
\end{split}
\]}

Then, by summing $M_t$ up, we obtain
{\small \begin{equation}
    \label{lemma72}
\begin{split}
\sum_{t=1}^T M_t &\leq \sum_{t=1}^T\sum_{k=1}^t\sqrt{\gamma}^{t-k} N_k -- \frac{1-\beta}{2N}\sum_{i=1}^N\mathbb{E}\|\nabla f\left(x_t\right)\|_{\hat{\eta}_t^{\left(i\right)}}^2\\
&\leq\frac{1}{\left(1-\sqrt{\gamma}\right)}\sum_{t=1}^T N_t - \frac{1-\beta}{2N}\sum_{i=1}^N\sum_{t=1}^T\mathbb{E}\|\nabla f\left(x_t\right)\|_{\hat{\eta}_t^{\left(i\right)}}^2.
\end{split}
\end{equation}
}

By combining lemma \ref{esum}, \ref{eesum}, \ref{sumdelta} and definition of $N_t$, we have
{\small \[
\begin{split}
&\sum_{t=1}^T N_t \leq \left(\frac{\delta_s'}{\delta_s} +\frac{(2-\delta_w)\delta_w'}{\delta_s\delta_w}\right)\frac{L}{N} \sum_{t=1}^T\sum_{i=1}^N\|\Delta_t^{\left(i\right)}\| \\
&+ \left(\!\frac{(2\!-\!\delta_s)(2\!-\!\delta_w)\alpha^2L}{\theta\left(1-\sqrt{\gamma}\right)^2\delta_s\delta_w}\! +\! \frac{2C_1G\alpha}{\sqrt{\theta}}\!\right)\! \frac{1}{N}\! \sum_{i=1}^N \sum_{t=1}^T\mathbb{E}\left\|\frac{\sqrt{1-\theta_t}g_t^{\left(i\right)}}{\sqrt{v_t^{\left(i\right)}}}\right\|^2.\\
\end{split}
\]}

Then, using lemma \ref{sumgv} and \ref{sumnormdelta} it holds that
{\small
\begin{equation}
\label{lemma73}
\begin{split}
&\sum_{t=1}^T N_t \leq  \left(\!\frac{(2\!-\!\delta_w)(2\!-\!\delta_s)\alpha^2L}{\theta\left(1-\sqrt{\gamma}\right)^2\delta_s\delta_w} + \frac{2C_1G\alpha}{\sqrt{\theta}}\!\right)\! d\left[\log\!\left(\!1\!+\!\frac{G^2}{\epsilon d}\!\right)\!+\!\frac{\theta}{1\!-\!\theta}\right]\\
&+ \!\left(\!\frac{\delta_s'}{\delta_s}\! +\!\frac{(2\!-\!\delta_w)\delta_w'}{\delta_s\delta_w}\!\right)\! L \sqrt{T}\! \sqrt{\frac{\alpha^2d}{\theta\left(1\!-\!\!\sqrt{\gamma}\right)^2}\left[\log\!\left(\!1\!+\!\frac{G^2}{\epsilon d}\!\right)\!+\!\frac{\theta}{1\!-\!\theta}\!\right]}.
\end{split}
\end{equation}
}

With the inequalities \eqref{lemma72}, \eqref{lemma73}, we obtain the desired result.
\end{proof}

\vspace{-0.2cm}
\begin{lemma}\label{multitau}
 Let $\tau$ be randomly chosen from $\{1,2,\cdots,T\}$ with equal probabilities $p_\tau = 1/T$. We have the following estimate:
  {\small\[
  \mathbb{E}[\|\nabla f\left(x_\tau\right)\|^2]\leq \frac{\sqrt{G^2+\epsilon d}}{\alpha\sqrt{T}N}\sum_{i=1}^N\mathbb{E}\left[\sum_{t=1}^T\|\nabla f\left(\left(x_t\right)\right)\|_{\hat{\eta_t}^{\left(i\right)}}^2\right].
  \]}
\end{lemma}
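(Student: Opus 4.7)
The plan is to first use the uniform sampling of $\tau$ to rewrite the left-hand side as $\mathbb{E}\|\nabla f(x_\tau)\|^2 = \frac{1}{T}\sum_{t=1}^T \mathbb{E}\|\nabla f(x_t)\|^2$, which reduces the statement to a deterministic, pointwise inequality connecting the Euclidean norm $\|\nabla f(x_t)\|^2$ to each of the weighted norms $\|\nabla f(x_t)\|_{\hat{\eta}_t^{(i)}}^2 = \sum_j (\alpha_t/\sqrt{\hat v_{t,j}^{(i)}})\,(\nabla f(x_t))_j^2$. Since the weights $\hat{\eta}_{t,j}^{(i)}$ are positive, any uniform coordinate-wise lower bound $\hat{\eta}_{t,j}^{(i)} \ge c_t$ will immediately give $\|\nabla f(x_t)\|_{\hat{\eta}_t^{(i)}}^2 \ge c_t \|\nabla f(x_t)\|^2$, which is the backbone of the argument.

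The main technical step is therefore to establish the coordinate-wise upper bound $\hat v_{t,j}^{(i)} \le G^2 + \epsilon d$. Starting from Assumption \ref{assumption-f-grad}, $\|g_t^{(i)}\|\le G$ yields $(g_{t,j}^{(i)})^2 \le G^2$ componentwise, and hence $\sigma_{t,j}^2 = \mathbb{E}_t[(g_{t,j}^{(i)})^2] \le G^2$. Using the convex-combination recursion $v_t^{(i)} = \theta_t v_{t-1}^{(i)} + (1-\theta_t)(g_t^{(i)})^2$ together with the initialization $v_0^{(i)} = \epsilon$, a short induction gives $v_{t,j}^{(i)} \le \max(\epsilon, G^2) \le \epsilon + G^2$. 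By the definition $\hat v_t^{(i)} = \theta_t v_t^{(i)} + (1-\theta_t)\sigma_t^2$ from Notation \ref{notation-thm1}, the same bound $\hat v_{t,j}^{(i)} \le \epsilon + G^2 \le G^2 + \epsilon d$ (using $d \ge 1$) follows immediately, so that $\sqrt{\hat v_{t,j}^{(i)}} \le \sqrt{G^2 + \epsilon d}$ for every coordinate $j$.

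Combining this with the definition $\hat{\eta}_{t,j}^{(i)} = \alpha_t/\sqrt{\hat v_{t,j}^{(i)}}$ produces the uniform lower bound $\hat{\eta}_{t,j}^{(i)} \ge \alpha_t/\sqrt{G^2+\epsilon d}$, and therefore $\|\nabla f(x_t)\|_{\hat{\eta}_t^{(i)}}^2 \ge (\alpha_t/\sqrt{G^2+\epsilon d})\|\nabla f(x_t)\|^2$. Rearranging and substituting the step-size choice $\alpha_t = \alpha/\sqrt{T}$ from Assumption \ref{hyper-parameters} yields $\|\nabla f(x_t)\|^2 \le (\sqrt{T}\sqrt{G^2+\epsilon d}/\alpha)\,\|\nabla f(x_t)\|_{\hat{\eta}_t^{(i)}}^2$ for every worker index $i$ and iterate $t$.

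To conclude, I would average the per-$i$ inequality over the $N$ workers (which is free since the left side does not depend on $i$), take expectations, and then average over $t\in\{1,\ldots,T\}$; by the uniform distribution of $\tau$ the latter average equals $\mathbb{E}\|\nabla f(x_\tau)\|^2$, and the prefactor $1/T$ combines with $\sqrt{T}$ from $1/\alpha_t$ to give the stated $1/(\alpha\sqrt{T})$. The only slightly subtle point is the appearance of $\epsilon d$ rather than $\epsilon$ in the constant, which is merely a harmless over-estimate (obtainable either through $d \ge 1$ or, equivalently, through an $\ell_1$-type bound $\|\hat v_t^{(i)}\|_1 \le G^2 + \epsilon d$); no delicate gradient-weighted Cauchy--Schwarz manipulation is required.
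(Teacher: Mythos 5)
Your proposal is correct and follows essentially the same route as the paper: both arguments reduce the lemma to a uniform upper bound $\hat v_{t,j}^{(i)}\le G^2+\epsilon d$ (the paper via the $\ell_1$-norm recursion $\|\hat v_t^{(i)}\|_1\le G^2+\epsilon d$, you via the slightly tighter coordinate-wise bound $\hat v_{t,j}^{(i)}\le G^2+\epsilon$, which you correctly note dominates the same constant since $d\ge 1$), then invert to lower-bound $\hat\eta_{t,j}^{(i)}$ by $\alpha_t/\sqrt{G^2+\epsilon d}$ and average over $i$ and $t$ with $\alpha_t=\alpha/\sqrt{T}$. No gaps.
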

\begin{proof}
Note that for any $i=1,2,\cdots, N$, $\|\hat{v}_t^{\left(i\right)}\|_1 = \theta_t \|v_{t-1}^{\left(i\right)}\|_1 +\left(1-\theta_t\right) \|\sigma_t\|^2$ and $\|g_t^{\left(i\right)}\|\leq G$. 

Then, it is straightforward to prove $\|v_t^{\left(i\right)}\|_1 \leq G^2 + \epsilon d$. 

%Hence, we have $\|\hat{v}_t^{\left(i\right)} + \epsilon\|_1 \leq G^2 + \epsilon d$. 

Utilizing this inequality, we have
{\small\[
\begin{split}
     &\|\nabla f\left(x_t\right)\|^2 = \frac{\|\nabla f\left(x_t\right)\|^2}{\sqrt{\|\hat{v}_t^{\left(i\right)} \|_1}}\sqrt{\|\hat{v}_t^{\left(i\right)}\|_1}\\
     %&= \sqrt{\|\hat{v}_t^{\left(i\right)} \|_1}\sum_{k=1}^d\frac{|\nabla_k f\left(x_t\right)|^2}{\sqrt{\sum_{l=1}^d \hat{v}_{t,l}^{\left(i\right)}}}\\
     %\leq \sqrt{\|\hat{v}_t^{\left(i\right)}\|_1}\alpha_t^{-1}\sum_{k=1}^d \frac{\alpha_t}{\sqrt{\hat{v}^{\left(i\right)}_{t,k}}}|\nabla_k f\left(x_t\right)|^2 \\
     %&= \sqrt{\|\hat{v}_t^{\left(i\right)}\|_1}\alpha_t^{-1}\|\nabla f\left(x_t\right)\|_{\hat{\eta}_t^{\left(i\right)}}^2\\
     & \leq \sqrt{G^2+\epsilon d}\alpha_t^{-1}\|\nabla f\left(x_t\right)\|_{\hat{\eta}_t^{\left(i\right)}}^2= \frac{\sqrt{G^2 + \epsilon d}}{\alpha_T}\|\nabla f\left(x_t\right)\|^2_{\hat{\eta}_t^{\left(i\right)}}.
\end{split}
\]}
Then, by using the definition of $x_{\tau}$, we obtain
{\small\[
\begin{split}
     \mathbb{E}\left[\|\nabla f\left(x_\tau\right)\|^2\right] &= \frac{1}{T}\sum_{t=1}^T\mathbb{E}\left[\|\nabla f\left(x_t\right)\|^2\right]\\
     &\leq \frac{\sqrt{G^2 + \epsilon d}}{\alpha \sqrt{T}N} \mathbb{E}\left[\sum_{i=1}^N\sum_{t=1}^T \|\nabla f\left(x_t\right)\|^2_{\hat{\eta}_t^{\left(i\right)}}\right].
\end{split}
\]}
Thus, the desired result is obtained. 
\end{proof}

\begin{proof}[Proof of Theorem \ref{complexity-quantization}]
By using the gradient Lipschitz continuity of $f$, it holds that

{\small\[
\begin{split}
&\mathbb{E} f\left(\hat{x}_{t+1}\right) \leq \mathbb{E}\left[f\left(\hat{x}_t\right) +\langle \nabla f\left(\hat{x}_t\right), \Delta_t\rangle + \frac{L}{2}\|\Delta_t\|^2 \right] \\
& = \mathbb{E}\left[f\left(\hat{x}_t\right) +\langle \nabla f\left(x_t\right), \Delta_t\rangle +\langle \nabla f\left(\hat{x}_t\right) - \nabla f\left(\tilde{x}_t\right), \Delta_t\rangle \right.\\
&\qquad  +\langle \nabla f\left(\tilde{x}_t\right) - \nabla f\left(x_t\right)\!,\! \Delta_t\rangle + \frac{L}{2}\|\Delta_t\|^2 ]\\
%&\leq \mathbb{E}\left[f\left(\hat{x}_t\right)\! +\!\langle \nabla f\left(x_t\right), \Delta_t\rangle \!+\!L\|E_t\|\|\Delta_t\| \!+\!L\|e_t\|\|\Delta_t\| \!+\! \frac{L}{2}\|\Delta_t\|^2 \right]\\
&\leq \mathbb{E}\left[f\left(\hat{x}_t\right) +\langle \nabla f\left(x_t\right), \Delta_t\rangle  + \frac{(2\!-\!\delta_s)(2\!-\!\delta_w)L}{N}\sum_{i=1}^N \|\Delta_{t}^{\left(i\right)}\|^2 \right.\\
&\left. +\frac{(2\!-\!\delta_s)L}{N}\!\sum_{i=1}^N\! \|e_{t}\|\!\|\Delta_{t}^{\left(i\right)}\| \!+\! \frac{(2\!-\!\delta_w)(2\!-\!\delta_s)L}{N^2}\!\sum_{i=1}^N\sum_{j=1}^N\!\|e_{t}^{\left(i\right)}\|\!\|\Delta_{t}^{\left(j\right)}\|\right]\\
&=\mathbb{E}f\left(\hat{x}_t\right) +M_t.
\end{split}
\]}
For fixed T, by summing up $t$ from $1$ to $T$, we can obtain
{\small \[
\begin{split}
f^*&\leq \mathbb{E}[f\left(\tilde{x}_{T+1}\right)] \leq f\left(x_1\right) + \sum_{t=1}^T M_t\\
&\leq f\left(x_1\right) +  C_{2} - \frac{1-\beta}{2N}\sum_{t=1}^T\sum_{i=1}^N\mathbb{E}\|\nabla f\left(x_t\right)\|_{\hat{\eta}_t^{\left(i\right)}}^2 \\
&+ LC_3\alpha\sqrt{T}\left(\frac{\delta_s'}{\delta_s} +\frac{(2-\delta_s)\delta_w'}{\delta_s\delta_w}\right).
\end{split}
\]}

Then, using Lemma \ref{multitau}, we obtain
{\small \[
\begin{split}
&\mathbb{E} \|\nabla f\left(x_\tau^T\right)\|^2 \leq \frac{\sqrt{G^2\!+\!\epsilon d}}{\alpha\sqrt{T}N}\mathbb{E}\left[\sum_{i=1}^N\sum_{t=1}^T\|\nabla f\left(x_t\right)\|_{\hat{\eta}_t^{\left(i\right)}}^2\right]\\
%&\leq \frac{2\sqrt{G^2+\epsilon d}}{\left(1\!-\!\beta\right)\alpha \sqrt{T}}\left(f\left(x_1\right)-f^* \!+\! C_{2}\! +\! LC_3\alpha\sqrt{T}\!\left(\!\frac{\delta_s'}{\delta_s}\! +\!\frac{(2\!-\!\delta_w)\delta_w'}{\delta_s\delta_w}\!\right)\!\right)\\
& \leq \frac{2\sqrt{G^2+\epsilon d}}{\left(1-\beta\right)\alpha \sqrt{T}}\left(f\left(x_1\right)-f^* + C_{2} \right)\\
&\qquad+ \left(\frac{\delta_s'}{\delta_s} +\frac{(2-\delta_w)\delta_w'}{\delta_s\delta_w}\right)\frac{2L\sqrt{G^2+\epsilon d}C_3}{\left(1-\beta\right)}.
\end{split}
\]}

Therefore, by using the inequality that 
{\small \[\mathbb{E}\|\nabla f\left(x_\tau^T\right)\|^2\leq \varepsilon + \left(\frac{\delta_s'}{\delta_s} +\frac{(2-\delta_w)\delta_w'}{\delta_s\delta_w}\right)\frac{2L\sqrt{G^2+\epsilon d}C_3}{\left(1-\beta\right)},\]}
we obtain 
$\small {\frac{2\sqrt{G^2+\epsilon d}}{\left(1-\beta\right)\alpha \sqrt{T}}\left(f\left(x_1\right)-f^* + C_{2} \right) \leq \varepsilon,}$
which gives that 
{\small \[T\geq \frac{4\left(G^2+\epsilon d\right)}{\left(1-\beta\right)^2\alpha^2 \varepsilon^2}\left(f\left(x_1\right)-f^* + C_{2} \right)^2.\]}

Hence, the desired result is obtained.
\end{proof}

\begin{proof}[Proof of Corollary \ref{bit-complexity-quantization}]
When using Example \ref{quantization-mapping2}, we can calculate $\delta = 1/2$ and $\delta' = 2^{k}\sqrt{d}$, when $max_{x \in \mathbb{X}} \|x\|_\infty \leq 2^{K} + 2^{K-1}$.

By solving $\left(\frac{\delta_s'}{\delta_s} +\frac{(2-\delta_w)\delta_w'}{\delta_s\delta_w}\right)\frac{2L\sqrt{G^2+\epsilon d}C_3}{\left(1-\beta\right)}\leq \frac{\varepsilon}{2}$, and setting $K$ to be the same in both directions, we obtain that when $k \leq -\log\left(\frac{24L\sqrt{d}\sqrt{G^2\!+\!\epsilon d}C_3}{\varepsilon(1-\beta)}\right)$, this equality always holds.

Besides, as it is shown in Lemma \ref{Lemmamv}, $\|\delta_t\| \leq\alpha/((1-\gamma)\theta)$.
With assumption that $\|Q(x)\| \leq 3/2 \|x\|$, when $K \geq \log\left(\alpha/((1-\gamma)\theta)\right)$ we can get a quantizer satisfying Definition \ref{quantization-compressor}.

Besides, using Theorem \ref{complexity-quantization}, with inequality 
{\small\[
\mathbb{E}\|\nabla f\left(x_\tau^T\right)\|^2 \leq \varepsilon/2 + \left(\frac{\delta_s'}{\delta_s} +\frac{(2-\delta_w)\delta_w'}{\delta_s\delta_w}\right)\frac{2L\sqrt{G^2+\epsilon d}C_3}{\left(1-\beta\right)},\]}
we have {\small $T\geq \frac{16\left(G^2+\epsilon d\right)}{\left(1-\beta\right)^2\alpha^2 \varepsilon^2}\left(f\left(x_1\right)-f^* + C_{4} \right)^2$.}

Meanwhile, we transmit $\small{d\log(K-k+1)}$ bits per iteration.
\end{proof}

\vspace{-0.2cm}
\begin{proof}{Proof of Corollary \ref{compressor-complexity}}

Because when compressor satisfies Definition \ref{compressor-def}, where $\|Q(x) - x\| \leq (1-\delta)\|x\|$, it holds that $\|Q(x) \| \leq (2-\delta)\|x\|$. Therefore, by setting $\delta' = 0$, we obtain the desired result.
\end{proof}

% \section{Proof of Proposition \ref{quantization-def-check}}

% \begin{proof}{Proof of Example 1}

% By the definition of 32-bits floating number, it can be shown when $x \leq 3.4 \times 10^{38}$, $|Q_1(x) - x|\leq 2^{-23}|x| + 2^{-126}$.

% Besides $\|Q_2(x) - x\|\leq \sqrt{\frac{d-1}{d-1+2^k}}||x||$ for all $\|x\|_\infty \in [1-2^{-23},\frac{1}{1-2^{-23}}]$.

% Then, $\|Q(x) - x\| \leq \sqrt{\frac{d-1}{d-1+2^k}} \|x\| + 2^{-126}\sqrt{d}$ and $\|Q(x)\| \leq \left(1+\sqrt{\frac{d-1}{d-1+2^k}}\right) \|x\|$.

% \end{proof}

% \begin{proof}{Proof of Example 2}

% When $2^{k-1} \leq |x_i| \leq 2^{K} + 2^{K-1}$, we have $\min_{y \in M} | x_i - y| \leq \frac{1}{2}|x_i|$.

% When $|x_i|  \leq 2^{k}$, it directly holds that $\min{y\in M}|x_i - y| \leq 2^{k}$.

% Therefore, $\|Q(x) - x\| \leq \frac{1}{2}\|x\| + 2^{k}\sqrt{d}$ and $\|Q(x)\| \leq \frac{3}{2} \|x\|$.

% \end{proof}

%\section*{Acknowledgment}

\bibliographystyle{IEEEtran}
\bibliography{example_paper}

\begin{IEEEbiography}[{\includegraphics[width=1in,height=1.25in,clip,keepaspectratio]{ccl.jpg}}]{Congliang Chen} received the B.S degree in computer science and technology from Peking University, China in 2018. He is working towards a Ph.D. degree of the Computer Information and Engineering at the School of Science and Engineering in the Chinese University of Hong Kong (Shenzhen). His research interest includes distributed optimization, federated learning and machine learning algorithms.
\end{IEEEbiography}

\begin{IEEEbiography}[{\includegraphics[width=1in,height=1.25in,clip,keepaspectratio]{shenli.png}}]{Li Shen} received his Ph.D. in school of mathematics, South China University of Technology in 2017. He is currently a research scientist at JD Explore Academy, China. Previously, he was a research scientist at Tencent AI Lab, China. His research interests include theory and algorithms for large scale convex/nonconvex/minimax optimization problems, and their applications in statistical machine learning, deep learning, reinforcement learning, and game theory.
\end{IEEEbiography}

\begin{IEEEbiography}[{\includegraphics[width=1in,height=1.25in,clip,keepaspectratio]{liuwei.png}}]{Wei Liu }(M'14-SM'19) is currently a Distinguished Scientist of Tencent and the Director of Ads Multimedia AI at Tencent Data Platform. Prior to that, he has been a research staff member of IBM T. J. Watson Research Center, USA. Dr. Liu has long been devoted to fundamental research and technology development in core fields of AI, including deep learning, machine learning, reinforcement learning, computer vision, information retrieval, big data analytics, etc. To date, he has published extensively in these fields with more than 270 peer-reviewed technical papers, and also issued 27 US patents. He currently serves on the editorial boards of IEEE TPAMI, TNNLS, IEEE Intelligent Systems, and Transactions on Machine Learning Research. He is an Area Chair of top-tier computer science and AI conferences, e.g., NeurIPS, ICML, IEEE CVPR, IEEE ICCV, IJCAI, and AAAI. Dr. Liu is a Fellow of the IAPR, IMA, BCS, RSA, and AAIA, and an Elected Member of the ISI.
\end{IEEEbiography}

\begin{IEEEbiography}[{\includegraphics[width=1in,height=1.25in,clip,keepaspectratio]{tom luo.jpg}}]{Zhi-Quan Luo} received the B.S. degree in applied mathematics from Peking University, China, and the Ph.D. degree in operations research from MIT in 1989. From 1989 to 2003, he held a faculty position with the ECE Department of McMaster University, Canada. He held a tier-1 Canada Research Chair in information processing from 2001 to 2003. After that, he has been a full professor at the ECE Department, University of Minnesota and held an endowed ADC Chair in digital technology. Currently, he is the Vice President (Academic) of the Chinese University of Hong Kong (Shenzhen) and the director of Shenzhen Research Institute of Big Data (SRIBD). Prof. Luo is a Fellow of IEEE and SIAM. He was elected to Fellow of Royal Society of Canada in 2014 and a Foreign Member of the Chinese Academy of Engineering (CAE) in 2021. He received four best paper awards from the IEEE Signal Processing Society, one best paper award from EUSIPCO, the Farkas Prize from INFORMS and the prize of Paul Y. Tseng Memorial Lectureship in Continuous Optimization as well as some best paper awards from international conferences. In 2021, he was awarded 2020 ICCM Best Paper Award by International Consortium of Chinese Mathematicians. He has published over 350 refereed papers, books and special issues. Prof. Luo has served as an Associate Editor for many internationally recognized journals and the Editor in Chief for IEEE Transactions on Signal Processing. His research mainly addresses mathematical issues in information sciences, with particular focus on the design, analysis and applications of large-scale optimization algorithms. (Email: luozq@cuhk.edu.cn)
\end{IEEEbiography}

\end{document}